\DeclareMathOperator{\doop}{do}
\DeclareMathOperator*{\argmin}{arg\,min}
\DeclareMathOperator{\Sym}{Sym}
\DeclareMathOperator{\Diag}{Diag}
\newcommand{\LL}{\mathcal{L}}
\newcommand{\GG}{\mathcal{G}}
\newcommand{\MM}{\mathcal{M}}
\newcommand{\PP}{\mathcal{P}}
\DeclareMathOperator{\NGASS}{NG}
\newcommand{\R}{\mathbb{R}}
\newcommand{\mA}{\mathbf{A}}
\newcommand{\tmA}{\Tilde{\mathbf{A}}}
\newcommand{\dsep}{\perp \!}
\def\newop#1{\expandafter\def\csname #1\endcsname{\mathop{\rm
#1}\nolimits}}
\theoremstyle{plain}
\newtheorem{theorem}{Theorem}[section]
\newtheorem*{theorem*}{Theorem}
\newtheorem{example}[theorem]{Example}
\newtheorem{lemma}[theorem]{Lemma}
\newtheorem{corollary}[theorem]{Corollary}
\theoremstyle{definition}
\newtheorem{definition}[theorem]{Definition}
\theoremstyle{remark}
\newtheorem{remark}[theorem]{Remark}
\icmltitlerunning{Causal Effect Identification in lvLiNGAM from Higher-Order Cumulants}
\begin{document}

\twocolumn[
\icmltitle{Causal Effect Identification in lvLiNGAM from Higher-Order Cumulants}

% It is OKAY to include author information, even for blind
% submissions: the style file will automatically remove it for you
% unless you've provided the [accepted] option to the icml2025
% package.

% List of affiliations: The first argument should be a (short)
% identifier you will use later to specify author affiliations
% Academic affiliations should list Department, University, City, Region, Country
% Industry affiliations should list Company, City, Region, Country

% You can specify symbols, otherwise they are numbered in order.
% Ideally, you should not use this facility. Affiliations will be numbered
% in order of appearance and this is the preferred way.
\icmlsetsymbol{equal}{*}

\begin{icmlauthorlist}
\icmlauthor{Daniele Tramontano}{tum}
\icmlauthor{Yaroslav Kivva}{epfl}
\icmlauthor{Saber Salehkaleybar}{liacs}
\icmlauthor{Mathias Drton}{tum,mcml}
\icmlauthor{Negar Kiyavash}{epfl}
\end{icmlauthorlist}

\icmlaffiliation{tum}{Technical University of Munich, Munich, Germany}
\icmlaffiliation{mcml}{Munich Center for Machine Learning, Munich, Germany}
\icmlaffiliation{epfl}{Ecole Polytechnique Fédérale de Lausanne, Lausanne, Switzerland}
\icmlaffiliation{liacs}{Leiden Institute of Advanced Computer Science, Leiden University, Netherlands}

\icmlcorrespondingauthor{Daniele Tramontano}{daniele.tramontano@tum.de}

% You may provide any keywords that you
% find helpful for describing your paper; these are used to populate
% the "keywords" metadata in the PDF but will not be shown in the document
\icmlkeywords{Machine Learning, ICML}

\vskip 0.3in
]

% this must go after the closing bracket ] following \twocolumn[ ...

% This command actually creates the footnote in the first column
% listing the affiliations and the copyright notice.
% The command takes one argument, which is text to display at the start of the footnote.
% The \icmlEqualContribution command is standard text for equal contribution.
% Remove it (just {}) if you do not need this facility.
\printAffiliationsAndNotice{}  % leave blank if no need to mention equal contribution
% \printAffiliationsAndNotice{\icmlEqualContribution} % otherwise use the standard text.

\begin{abstract}
% Abstracts must be a single paragraph, ideally between 4--6 sentences long.
% Gross violations will trigger corrections at the camera-ready phase.
 This paper investigates causal effect identification in latent variable Linear Non-Gaussian Acyclic Models (lvLiNGAM) using higher-order cumulants, addressing two prominent setups that are challenging in the presence of latent confounding: (1) a single proxy variable that may causally influence the treatment and (2) underspecified instrumental variable cases where fewer instruments exist than treatments. We prove that causal effects are identifiable with a single proxy or instrument and provide corresponding estimation methods. Experimental results demonstrate the accuracy and robustness of our approaches compared to existing methods, advancing the theoretical and practical understanding of causal inference in linear systems with latent confounders.
\end{abstract}

\section{Introduction}
Predicting the impact of an unseen intervention in a system is a crucial challenge in many fields, such as medicine \cite{sanchez:2022, michoel:2023}, policy evaluation \cite{athey:2017}, fair decision-making \cite{kilbertus:2017}, and finance \cite{de:2023}. 
Randomized experiments/interventional studies are the gold standard for addressing this challenge but are often infeasible due to a variety of reasons, such as ethical concerns or prohibitively high costs.
Thus, when merely observational data is available, additional assumptions on the underlying causal system are needed to compensate for the lack of interventional data. The field of causal inference seeks to formalize such assumptions.
One notable approach in causal inference is modeling causal relationships through structural causal models (SCM) \citep{pearl:2009}. 
In this framework, a random vector is associated with a directed acyclic graph (DAG). Each vector component is associated with a node in the graph and is a function of the random variables corresponding to its parents in the graph and some exogenous noise.

In general, latent confounders, i.e., unobserved variables affecting the treatment and the outcome of interest, often render the causal effect non-identifiable from the observational distribution \citep{shpitser:2006}. However, in some cases and under further assumptions on the causal mechanisms, the causal effect may still be identifiable from observational data \citep{barber:2022}.

Linear models are among the most well-studied mechanisms and serve as a foundational abstraction in many scientific disciplines because they offer simple qualitative interpretations and can be learned with moderate sample sizes \citep[Principle 1]{pe:2011}. When the exogenous noises in a linear SCM are Gaussian, the entire distributional information is contained in the variables' covariance matrix. 
Consequently, the higher-order cumulants of the distribution are uninformative \citep[Thm.~2]{marcinkiewicz:1939}. As a result, the causal structure and other causal quantities are often not identifiable from mere observational data. For instance, in the context of causal structure learning, this means the causal graph is identifiable only up to an equivalence class \citep[e.g.,][\S10]{drton:2018}.
This motivated the widespread use of the linear non-Gaussian acyclic model (LiNGAM).

The seminal work of \citet{shimizu:2006} showed that in the setting of LiNGAM, the true underlying causal graph is uniquely identifiable when all the variables are observed. Since then, a rich literature on this topic has emerged, focusing mainly on the identification and the estimation of the causal graph; see, e.g., \citet{adams:2021,shimizu:2022, yang:2022, wang:2023, wang:cd:2023} for recent results that allow for the presence of hidden variables.

Within the LiNGAM literature, causal effect identification has received less attention; a complete characterization of the identifiable causal effects was provided only recently by \citet{tramontano:2024:icml}. The drawback of this characterization is that it is based on solving an overcomplete independent component analysis (OICA) problem, known to be non-separable \citep{eriksson:2004}.
Hence, the approach of \citet{tramontano:2024:icml} does not translate into a consistent estimation method for identifiable causal effects \citep[\S 5.3]{tramontano:2024:icml}.

Recent works \citep{kivva:2023, shuai:2023} have exploited non-Gaussianity by utilizing higher-order moments to derive estimation formulas for causal effects in specific causal graphs, avoiding reliance on the challenging OICA problem. A notable scenario involves the use of a proxy variable for the latent confounder \citep{tchetgen:2024}. In LiNGAM, causal effects are identifiable from higher-order moments if every latent confounder has a corresponding proxy variable, and no proxy directly influences either the treatment or the outcome \citep{kivva:2023}. However, the method in \citet{kivva:2023} fails to produce consistent estimates when these assumptions are violated. Another important setup arises when an instrumental variable affects the outcome solely through the treatment \citep[\S4]{AngristPischke:2009}. For linear models, two-stage least squares (TSLS) regression can estimate causal effects when there is at least one valid instrument per treatment \citep[\S3.2]{AngristPischke:2009}. However, TSLS is based only on the covariance matrix, and in cases where the number of instruments is fewer than the number of treatments—referred to as underspecified instrumental variables—causal effects are not identifiable from the covariance matrix alone. This underspecification is often encountered in biological applications \citep{ailer:23, ailer:2024}.

This paper advances the field by providing identifiability results for causal effects using higher-order cumulants in two challenging setups: (1) a single proxy variable that may causally influence the treatment and (2) underspecified instrumental variables.
\subsection{Contribution}
\label{subsec:contr}
Our first main contributions are identifiability results for the causal effects of interest in the aforementioned setups.
%\begingroup
%\setlength{\itemindent}{-10pt}
%\setlength{\leftmargini}{0pt}
%\begin{itemize}
    % \item We provide identifiability results for the problem of causal effect identification in the aforementioned setups. 
    \begin{enumerate}
        \item In the proxy variable setup (\cref{subsec:proxy}), unlike previous work, our proposed method allows a causal edge from the proxy to the treatment. Additionally, it recovers the causal effect for any $l$ latent confounders using a single proxy variable, in contrast to \citet[Alg.~1]{kivva:2023}, which requires one proxy variable per latent confounder. Furthermore, we prove that for the proxy variable graph in \cref{fig:proxy}, identification from the second and third-order cumulants alone is not possible.
        \item In the underspecified instrumental variable setup (\cref{subsec:iv}), we demonstrate that the causal effects of multiple treatments can be identified using only a single instrumental variable. This relaxes the requirement in the existing literature on linear instrumental variables, which traditionally assumes the number of instruments to be greater than or equal to the number of treatments.
    \end{enumerate}    
    Our second main contribution 
    consists of practical methods to estimate identifiable causal effects in both considered setups. The methods build on the identifiability results and process finite-sample estimates of higher-order cumulants (\cref{sec:estimation}). Our experiments show that the proposed approach provides consistent estimators in causal graphs, for which previous methods in the literature fail (\cref{sec:experiments}).
%    \end{itemize}
%\endgroup

\section{Problem Definition}
\subsection{Notation}
\label{subsec:not}
A \emph{directed graph} is a pair $\GG=(\mathcal{V},E)$ where $\mathcal{V}= [p]:= \{1,\dots,p\}$ is the set of nodes and $E\subseteq \{(i,j)\mid i, j\in \mathcal{V},\, i\neq j\}$ is the set of edges. We denote a pair $(i,j)\in E$ as $i\to j$. 

A (directed) path from node $i$ to node $j$ in $\GG$  is a sequence of nodes $\pi=(i_1=i,\dots,i_{k+1}=j)$ such that $i_s\to i_{s+1}\in E$ for $s\in\{1,\dots,k\}$.  
A cycle in $\GG$ is a path from a node $i$ to itself. 
A \emph{Directed Acyclic Graph} (DAG) is a directed graph without cycles.
If $i\to j\in E$, we say that $i$ is a parent of $j$, and $j$ is a child of $i$. If there is a path from $i$ to $j$ in $\GG$, we say that $i$ is an ancestor of $j$ and $j$ is a descendant of $i$. The sets of parents, children, ancestors, and descendants of a given node $i$ are denoted by $\pa(i), \ch(i), \an(i)$, and $\de(i)$, respectively.
In our work, we distinguish between observed and latent variables by partitioning the nodes into two sets $\mathcal{V} = \mathcal{O}\cup \mathcal{L}$,
of respective sizes $p_o$ and $p_l$.
We write tensors in boldface. The entry $(i_1,\dots,i_k)$ of a tensor $\mathbf{T}$ is denoted by $\mathbf{t}_{i_1,\dots,i_k}.$

Cumulants are alternative representations of moments of a distribution that are particularly useful when dealing with linear SCM \citep{robeva:2021}. Here, we formalize the definition and discuss their basic properties.
\begin{definition}
The $k$-th cumulant tensor of a random vector $\mathbf{N}=[N_1,\dots,N_p]$ is the $k$-way tensor in $\mathbb{R}^{p\times\dots\times p}\equiv(\mathbb{R}^p)^k$ whose entry in position $(i_1,\dots,i_k)$ is the cumulant
\begin{equation*}
    \begin{aligned}
           % &[\mathbf{C}^{(k)}(\mathbf{N})]_{i_1,\dots,i_k}:=\\
           &\mathbf{c}^{(k)}(\mathbf{N})_{i_1,\dots,i_k}:=\\
           &\sum_{(A_1,\dots,A_L)}(-1)^{L-1}(L-1)!\mathbb{E}\bigg[\prod_{j\in A_1} N_j\bigg]\cdots\mathbb{E}\bigg[\prod_{j\in A_L} N_j\bigg],
    \end{aligned}
\end{equation*}
where the sum is taken over all partitions $(A_1,\dots, A_L)$ of the multiset $\{i_1,\dots,i_k\}$.
\end{definition}
Cumulant tensors are symmetric, i.e., 
\begin{equation*}
    \begin{aligned}
    \mathbf{c}^{(k)}(\mathbf{N})_{i_1,\dots,i_k}
    =&~\sigma(\mathbf{c}^{(k)}(\mathbf{N}))_{i_1,\dots,i_k}\\
    :=&~\mathbf{c}^{(k)}(\mathbf{N})_{\sigma(i_1),\dots,\sigma(i_k)} \ \forall\sigma\in S_k,     
    \end{aligned}
\end{equation*}
    where $S_k$ is the symmetric group on $[k]$. We write $\Sym_k(p)$ for the subspace of symmetric tensors in $(\mathbb{R}^p)^k$.
\begin{lemma}[\citealp{comon:jutten:handbook}, \S5]
\label{lem:indep:cum}
If the entries of $\mathbf{N}=[N_1,\dots,N_p]$ are jointly independent, then $\mathbf{c}^{(k)}(\mathbf{N})$ is diagonal, i.e., $\mathbf{c}^{(k)}(\mathbf{N})_{i_1,\dots,i_k}$ is equal to $0$ unless $i_1 = i_2 =\dots =i_k = i$, for some $i\in[p]$.
\end{lemma}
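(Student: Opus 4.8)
The plan is to pass from the combinatorial moment definition to the analytic characterization of cumulants via the cumulant generating function, where joint independence becomes a clean additive decomposition. Recall that the entries of $\mathbf{c}^{(k)}(\mathbf{N})$ are exactly the coefficients of the multivariate Taylor expansion of $K(t_1,\dots,t_p) := \log \mathbb{E}[\exp(\sum_{j=1}^p t_j N_j)]$ about the origin; concretely,
\begin{equation*}
\mathbf{c}^{(k)}(\mathbf{N})_{i_1,\dots,i_k} = \frac{\partial^k}{\partial t_{i_1}\cdots\partial t_{i_k}} K(t_1,\dots,t_p)\Big|_{t=0}.
\end{equation*}
This identity is the standard equivalent form of the partition formula given in the definition above, obtained by M\"obius inversion on the partition lattice; I would either invoke it directly or verify it by matching Taylor coefficients of $\exp(K)=\mathbb{E}[\exp(\sum_j t_j N_j)]$ against the moment side.

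First I would use joint independence to factor the moment generating function, $\mathbb{E}[\exp(\sum_j t_j N_j)] = \prod_{j=1}^p \mathbb{E}[\exp(t_j N_j)]$, so that taking logarithms gives the additive form $K(t_1,\dots,t_p) = \sum_{j=1}^p K_j(t_j)$ with $K_j(t_j) := \log \mathbb{E}[\exp(t_j N_j)]$ a function of the single variable $t_j$. The key observation is then purely about differentiation: if the indices $i_1,\dots,i_k$ are not all equal, there exist two positions $a\ne b$ with $i_a \ne i_b$, and the mixed derivative $\partial_{t_{i_a}}\partial_{t_{i_b}}$ annihilates every summand $K_j(t_j)$, since each $K_j$ depends on at most one of the coordinates $t_{i_a}, t_{i_b}$. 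Hence the whole derivative vanishes identically, and in particular at the origin, giving $\mathbf{c}^{(k)}(\mathbf{N})_{i_1,\dots,i_k}=0$. When all indices coincide the derivative reduces to $K_i^{(k)}(0)$, which is unconstrained, so the tensor is diagonal as claimed.

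The only real obstacle is justifying the generating-function characterization itself; everything after it is a one-line differentiation argument. If one prefers to argue directly from the stated partition formula without invoking $K$, the main difficulty becomes combinatorial: grouping the positions by value and splitting them into an independent pair of blocks $S$ (the positions equal to some fixed value) and $T$ (the rest), one factors each moment $\mathbb{E}[\prod_{j\in B} N_{i_j}] = \mathbb{E}[\prod_{j\in B\cap S} N_{i_j}]\,\mathbb{E}[\prod_{j\in B\cap T} N_{i_j}]$ and must show that the signed, weighted sum over all partitions cancels. This cancellation is exactly the classical Leonov--Shiryaev vanishing of cumulants of independent blocks, and proving it cleanly still amounts to a M\"obius-inversion computation on the partition lattice, which is why I would route the argument through the generating function instead.
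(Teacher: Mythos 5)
The paper does not prove this lemma at all: it is imported verbatim from the literature (\S 5 of the Comon--Jutten handbook), so there is no internal argument to compare yours against; the question is only whether your blind proof stands on its own. Its core does: under joint independence the generating function splits as $K(t)=\sum_{j=1}^p K_j(t_j)$, and any mixed partial $\partial_{t_{i_a}}\partial_{t_{i_b}}$ with $i_a\neq i_b$ annihilates every summand, so all off-diagonal entries of the cumulant tensor vanish while the diagonal ones reduce to $K_i^{(k)}(0)$. The one point you should tighten is your use of the moment generating function $\mathbb{E}[\exp(\sum_j t_j N_j)]$: in the lvLiNGAM setting the noises are only assumed non-Gaussian with finite cumulants up to the relevant order, and such distributions (lognormal, or anything heavy-tailed) need not have a finite MGF in any neighborhood of the origin, so the $K$ you wrote down may be identically $+\infty$. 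The standard repair costs nothing: run the identical argument on the second characteristic function $\psi(t)=\log\mathbb{E}[\exp(i\sum_j t_j N_j)]$, which exists near $0$ and is $k$ times continuously differentiable whenever $k$-th moments exist, with $\mathbf{c}^{(k)}(\mathbf{N})_{i_1,\dots,i_k}=(-i)^k\,\partial_{t_{i_1}}\cdots\partial_{t_{i_k}}\psi(0)$; independence factorizes the characteristic function just as well, and the differentiation step is unchanged. Alternatively, since the partition formula makes the claim a polynomial identity in the moments, the whole generating-function manipulation can be carried out in the ring of formal power series, sidestepping convergence entirely. With either patch your proof is complete; your fallback combinatorial route (the Leonov--Shiryaev cancellation over the partition lattice) is also viable but, as you yourself note, strictly heavier than the generating-function argument.
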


 We write $\Diag^k(p)$ for the space of order $k$ diagonal tensors.

\begin{lemma}[\citealp{comon:jutten:handbook}, \S5]
\label{lem:multilin:cum}
    Let $\mathbf{N}=[N_1,\dots,N_p]$ be any $p$-variate random vector, and $\mathbf{A}\in\mathbb{R}^{s\times p}$ for any $s\in\mathbb{N}$, then 
    \begin{equation*}
        \begin{aligned}
            &\mathbf{c}^{(k)}(\mathbf{A}\cdot\mathbf{N})_{i_1,\dots,i_k} = \\
            &\sum_{1\leq j_1,\dots,j_k \leq p}\mathbf{c}^{(k)}(\mathbf{N})_{j_1,\dots,j_k}\mathbf{a}_{j_1,i_1}\cdots \mathbf{a}_{j_k,i_k}.
        \end{aligned}
    \end{equation*}
    In terms of the entire $k$-th cumulant tensor, this amounts to
    \begin{equation}
    \label{app:eq:tucker}
        \mathbf{C}^{(k)}(\mathbf{A}\cdot\mathbf{N}) = \mathbf{C}^{(k)}(\mathbf{N})\bullet_k \mathbf{A}
    \end{equation}
    where $\bullet_k$ is the Tucker product between $\mathbf{C}^{(k)}(\mathbf{N})$ and $\mathbf{A}$.
\end{lemma}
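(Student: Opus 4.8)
The plan is to prove the identity through the cumulant generating function (CGF), which turns the multilinear contraction on the right-hand side into a routine application of the chain rule. First I would recall that the partition-based definition given above is exactly the Leonov--Shiryaev expression of joint cumulants in terms of joint moments; equivalently, the cumulant tensor entry is a mixed partial derivative of the CGF at the origin,
\[
\mathbf{c}^{(k)}(\mathbf{N})_{i_1,\dots,i_k} = \left.\frac{\partial^k}{\partial t_{i_1}\cdots\partial t_{i_k}} K_{\mathbf{N}}(t)\right|_{t=0}, \qquad K_{\mathbf{N}}(t):=\log\mathbb{E}\Big[\exp\Big(\textstyle\sum_{j=1}^p t_j N_j\Big)\Big].
\]
I would state this equivalence and use it as the working characterization of the cumulant tensor.

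Next I would relate the CGF of $\mathbf{M}:=\mathbf{A}\cdot\mathbf{N}$ to that of $\mathbf{N}$. Writing $M_i=\sum_{j}\mathbf{a}_{j,i}N_j$ and introducing a dual variable $u$, linearity inside the exponential gives
\[
K_{\mathbf{M}}(u)=\log\mathbb{E}\Big[\exp\Big(\textstyle\sum_i u_i M_i\Big)\Big]=\log\mathbb{E}\Big[\exp\Big(\textstyle\sum_j \big(\textstyle\sum_i u_i\mathbf{a}_{j,i}\big)N_j\Big)\Big]=K_{\mathbf{N}}(t(u)),
\]
where $t(u)_j=\sum_i u_i\mathbf{a}_{j,i}$ is a \emph{linear} function of $u$ with constant Jacobian $\partial t_j/\partial u_i=\mathbf{a}_{j,i}$, and $t(0)=0$. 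This is the only place where the structure of $\mathbf{A}$ enters.

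Then I would differentiate $K_{\mathbf{M}}=K_{\mathbf{N}}\circ t$ in the variables $u_{i_1},\dots,u_{i_k}$ and evaluate at $u=0$ (equivalently $t=0$). Because the inner map $t$ is linear, all of its second- and higher-order derivatives vanish, so the multivariate higher-order chain rule (Fa\`a di Bruno) collapses to the single term coming from the finest partition: each differentiation $\partial/\partial u_{i_\ell}$ contributes exactly one factor $\partial t_{j_\ell}/\partial u_{i_\ell}=\mathbf{a}_{j_\ell,i_\ell}$ and advances one derivative of $K_{\mathbf{N}}$. Summing over the internal indices yields
\[
\left.\frac{\partial^k K_{\mathbf{M}}}{\partial u_{i_1}\cdots\partial u_{i_k}}\right|_{0}=\sum_{1\le j_1,\dots,j_k\le p}\left.\frac{\partial^k K_{\mathbf{N}}}{\partial t_{j_1}\cdots\partial t_{j_k}}\right|_{0}\mathbf{a}_{j_1,i_1}\cdots\mathbf{a}_{j_k,i_k},
\]
which is precisely the claimed entrywise identity once both sides are read through the derivative characterization of the cumulant tensor. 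The tensor form $\mathbf{C}^{(k)}(\mathbf{A}\cdot\mathbf{N})=\mathbf{C}^{(k)}(\mathbf{N})\bullet_k\mathbf{A}$ is then just the coordinate-free restatement of this contraction along each of the $k$ tensor modes.

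I expect the main obstacle to be not the computation but its justification: one must (i) argue that the partition formula in the definition coincides with the CGF-derivative characterization, and (ii) ensure the manipulations are legitimate, i.e., that the relevant moments are finite so that $K_{\mathbf{N}}$ is $k$-times differentiable at the origin and differentiation under the expectation is valid. Under the standing assumption that the $k$-th order cumulants are well defined, these derivatives exist and the chain-rule computation is rigorous. Alternatively, one can sidestep the analytic points entirely by substituting $M_{i_\ell}=\sum_{j_\ell}\mathbf{a}_{j_\ell,i_\ell}N_{j_\ell}$ directly into the partition-based definition and collecting, by multilinearity of expectation within each block $A_1,\dots,A_L$, the coefficient of $\mathbf{a}_{j_1,i_1}\cdots\mathbf{a}_{j_k,i_k}$ into $\mathbf{c}^{(k)}(\mathbf{N})_{j_1,\dots,j_k}$; this purely combinatorial route reaches the same conclusion but requires more careful bookkeeping over the partitions.
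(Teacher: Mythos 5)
The paper never proves this lemma at all: it is imported verbatim from the Comon--Jutten handbook (\S 5), so there is no internal proof to compare against, and your task here was really to supply the standard argument behind a cited fact. Your CGF/chain-rule derivation is exactly that standard argument and is essentially correct: because the substitution $t(u)_j=\sum_i u_i\mathbf{a}_{j,i}$ is linear with constant Jacobian, Fa\`a di Bruno collapses to the single finest-partition term, each differentiation pulls out one factor $\mathbf{a}_{j_\ell,i_\ell}$, and evaluating at the origin gives precisely the stated contraction; the Tucker-product form \eqref{app:eq:tucker} is then just the mode-wise restatement. Your index bookkeeping ($M_i=\sum_j\mathbf{a}_{j,i}N_j$) also matches the transposed convention in which the lemma is written, which is consistent with the paper's structural equation $\mathbf{V}=\mathbf{A}\mathbf{V}+\mathbf{N}$. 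The one point that needs sharpening is your justification (ii): finiteness of the $k$-th moments does \emph{not} guarantee that $K_{\mathbf{N}}(t)=\log\mathbb{E}\bigl[\exp\bigl(\langle t,\mathbf{N}\rangle\bigr)\bigr]$ is finite on any neighborhood of the origin (the lognormal distribution is the classical counterexample), so "the cumulants are well defined" does not by itself make the MGF-based computation rigorous. The standard fix is to run the identical argument on the second characteristic function $\psi_{\mathbf{N}}(t)=\log\mathbb{E}\bigl[e^{\mathrm{i}\langle t,\mathbf{N}\rangle}\bigr]$, which is well defined and $k$ times differentiable near $0$ whenever $\mathbb{E}|N_j|^k<\infty$; the substitution is still linear, so the chain-rule step goes through unchanged. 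Alternatively, the purely combinatorial route you sketch at the end (substituting into the partition formula and using multilinearity of each moment block) avoids the analytic issue entirely and is the cleanest way to make the lemma airtight under the paper's minimal assumptions.
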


\subsection{Model}
\label{subsec:model}
Let $\GG=(\mathcal{V},E)$ be a \emph{fixed} DAG on $p$ nodes.  On a fixed probability space, let 
$\mathbf{V} = [V_0,\dots, V_p]$ be a random vector taking values in $\R^p$ and satisfying the following SCM:
\begin{equation}
\label{eq:sem:1}
    \mathbf{V}=\mathbf{AV}+\mathbf{N}=\mathbf{BN},
\end{equation}
where $\mathbf{a}_{j,i}=0$ if $i\to j\notin E$, matrix $\mathbf{B}:=(\mathbf{I}-\mathbf{A})^{-1}$, and the entries of the exogenous noise vector $\mathbf{N}$ are assumed to be jointly independent and \emph{non-Gaussian}. $\mathbf{V}$ is partitioned into $[\mathbf{V}_o,\mathbf{V}_l]$, where 
$\mathbf{V}_o$ is observed of dimension $p_o$, while $\mathbf{V}_l$ is latent and of dimension $p_l$.
We can rewrite \eqref{eq:sem:1} as
\begin{equation*}
    \begin{bmatrix}
        \mathbf{V}_o\\
        \mathbf{V}_l
    \end{bmatrix}=\begin{bmatrix}
        \mathbf{A}_{o,o} & \mathbf{A}_{o,l}\\
        \mathbf{A}_{l,o} & \mathbf{A}_{l,l}
    \end{bmatrix}\begin{bmatrix}
        \mathbf{V}_o\\
        \mathbf{V}_l
    \end{bmatrix}+
        \begin{bmatrix}
        \mathbf{N}_o\\
        \mathbf{N}_l
    \end{bmatrix},
\end{equation*}
which implies that the observed random vector satisfies
\begin{equation}
\label{eq:sem:3}
    \mathbf{V}_o=    
    \mathbf{B}'\mathbf{N}=\begin{bmatrix}
        \mathbf{B}_o & \mathbf{B}_l
    \end{bmatrix}\begin{bmatrix}
        \mathbf{N}_o\\
        \mathbf{N}_l
    \end{bmatrix},
\end{equation}
where $\mathbf{B}':=[(\mathbf{I - A})^{-1}]_{\mathcal{O},\mathcal{V}}$ is known as the mixing matrix. This model for $\mathbf{V}_o$ is known as the latent variable LiNGAM (lvLiNGAM).

\citet[\S3]{salehkaleybar:2020} showed that the two parts of the matrix $\mathbf{B}'$ can be expressed as follows: 
\begin{equation*}
    % \label{eq:b:matrix}
    \mathbf{B}_o = (\mathbf{I}-\mathbf{A}')^{-1},\quad \mathbf{B}_l = (\mathbf{I}-\mathbf{A}')^{-1}\mathbf{A}_{o,l}(\mathbf{I}-\mathbf{A}_{l,l})^{-1},
\end{equation*}
with $\mathbf{A}'=\mathbf{A}_{o,o}+\mathbf{A}_{o,l}(\mathbf{I}-\mathbf{A}_{l,l})^{-1}\mathbf{A}_{l,o}$. The matrix $\mathbf{B}'=(\mathbf{b}'_{i,j})$ contains information on the interventional distributions of $\mathbf{V}_o$. In particular,\footnote{See \citet[\S3]{pearl:2009} for the definition of \emph{do} intervention.}
\begin{equation*} 
        \mathbf{b}'_{i,j}=\frac{\partial \mathbb{E}(V_i\mid \doop(V_j))}{\partial V_j},
\end{equation*}
i.e., $\mathbf{b}'_{i,j}$ is the average total causal effect of $j$ on $i$.

\citet{hoyer:2008} showed that for any lvLiNGAM model, an associated \emph{canonical model} exists, in which, in the corresponding graph, all the latent nodes have at least two children and have no parents. We refer to the graph corresponding to a canonical model as a canonical graph. The original and the associated canonical model are \emph{observationally} and \emph{causally} equivalent \citep[\S3]{hoyer:2008}. Subsequently, without loss of generality, we will assume our model is canonical in this sense.

In canonical models, $\mathbf{A}_{l,o}=\mathbf{A}_{l,l}=\mathbf{0}$, and in particular 
\begin{equation}
    \label{eq:b:can:matrix}
    \mathbf{B}_o = (\mathbf{I}-\mathbf{A}_{o,o})^{-1},\qquad \mathbf{B}_l = (\mathbf{I}-\mathbf{A}_{o,o})^{-1}\mathbf{A}_{o,l}.
\end{equation}
For every canonical $\GG$, let $\R_\mathbf{A}^\GG$ be the set of all $p\times p$ real matrices $\mathbf{A}$ such that 
% $[\mathbf{A}]_{i,j}=0$
$\mathbf{a}_{i,j}=0$ if $j\to i\notin\GG$. Let $\R^\GG\subset\mathbb{R}^{p_0\times p}$ be the set of all matrices $\mathbf{B}'=[\mathbf{B}_{o},\mathbf{B}_{l}]$ that can be obtained from a matrix $\mathbf{A}\in\R_\mathbf{A}^\GG$ according to \eqref{eq:b:can:matrix}. Let $\NGASS^p$ be the set of $p$ dimensional, non-degenerate, jointly independent \emph{non-Gaussian} random vectors, and let $\MM(\GG)$ be the set of all $p_o$ dimensional random vectors that can be expressed according to \eqref{eq:sem:3} with $\mathbf{B}'\in\R^\GG$. Moreover, we define $\MM^{(k)}(\GG)\subseteq\Sym_k(p_o)$ to be the set of symmetric $k$-th tensors that can be obtained as $k$-cumulant tensor for distributions in $\MM(\GG)$, i.e., 
\begin{equation*}
    \begin{aligned}
        \MM^{(k)}(\GG) := &\{\mathbf{C}^{(k)}(\mathbf{V}_o)\mid \mathbf{V}_o\in \MM(\GG)\}  = \\
        &\{\mathbf{D}^{(k)}\bullet_k \mathbf{B}'\mid \mathbf{D}^{(k)}\in \Diag^{k}(p), \mathbf{B}'\in\R^{\GG} \},
    \end{aligned}
\end{equation*}
where the set-equality is due to \cref{lem:multilin:cum}.
Using the second equality, we can define the following polynomial parameterization for $\MM^{(k)}(\GG)$:
    \begin{equation}  
    \label{eq:param}
        \begin{aligned}
            \Phi^{(k)}_{\GG}: \R^{\GG}\times\Diag^{k}(p)&\xrightarrow{}\MM^{(k)}(\GG)\\
             (\mathbf{B}',\mathbf{D}^{(k)})
             &\mapsto\mathbf{D}^{(k)}\bullet_k \mathbf{B}'.
        \end{aligned}
    \end{equation}
    This map expresses the tensor of observed cumulants in terms of the tensor of exogenous cumulants and the mixing matrix.
Finally, we define $\MM^{(\leq k)}(\GG):= \MM^{(2)}(\GG)\times\cdots\times\MM^{(k)}(\GG)$, and similarly $\Diag^{(\leq k)}(p)$ and $ \Phi^{(\leq k)}_{\GG}$.
\subsection{Identifiability}
\label{subsec:ident}
In this work, we are interested in identifying specific entries of the mixing matrix from finitely many cumulants of the observational distribution. We formalize the problem as follows. We say that the causal effect from $j$ to $i$ is \emph{generically} identifiable from the first $k$ cumulants of the distribution if there is a Lebesgue measure zero subset $\mathcal{S}^\GG_k$ of $\R^\GG\times\Diag^{(\leq k)}(p)$ such that  for all $(\mathbf{B}', \mathbf{D}^{(\leq k)})\in(\R^\GG\times \mathbf{D}^{(\leq k)})\setminus\mathcal{S}^\GG_k$, we have 
$\mathbf{b}'_{i,j}=\Tilde{\mathbf{b}}'_{i,j}$
for every 
other mixing matrix $\Tilde{\mathbf{B}}'\in\R^\GG$ that can define the same cumulants up to order $k$, that is, whenever $\Phi^{(\leq k)}_\GG(\tilde{\mathbf{B}'}, \tilde{\mathbf{D}}^{(\leq k)}) = \Phi^{(\leq k)}_\GG(\mathbf{B}', \mathbf{D}^{(\leq k))}$ for some $\tilde{\mathbf{D}}^{(\leq k)}\in\mathrm{Diag}^{(\leq k)}(p)$.

For the remainder of the text, whenever we use the term generic, it is implied that the result
holds outside the Lesbegue measure zero subset of the parameter space $\mathcal{S}^\GG_k$.

\begin{remark}[The scaling matrix]
\label{rem:scaling}
    Equation \eqref{eq:b:can:matrix} implies that as long as we are focused on identifying the causal effect between observed variables alone, the scaling of the latent columns does not make a difference. Hence, without loss of generality, we assume subsequently that all mixing matrices are scaled so that the first non-zero entry in each column is equal to 1. In other words, $\mathbf{a}_{i,l} = 1$ if $i$ is the first child of $l$ in a given causal order, where $i$ and $l$ are observed and latent variables, respectively.
\end{remark}

\section{Main Results}
\label{sec:main}
This section presents our main identifiability results. \cref{subsec:proxy} treats the case of a proxy variable. \cref{subsec:iv} details our findings for underspecified instrumental variables case.

Before presenting our results, we review two key results from \citet{schkoda:2024} pertaining to the causal graph $\mathcal{G}^l$ depicted in \cref{fig:2:latent}, which includes two observed variables, $V_1$ and $V_2$, along with $l$ latent variables $L_1, \dots, L_l$. They will be used to establish our identifiability results.

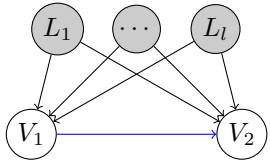
\begin{figure}[ht]
        \centering
        \begin{tikzpicture}[scale = 0.7]
            \node[draw, circle, fill=gray!40, inner sep=2pt, minimum size=0.6cm] (z1) at (0.5,0) {$L_1$};
            \node[draw, circle, fill=gray!40, inner sep=2pt, minimum size=0.6cm] (z2) at (2,0) {$\dots$};
            \node[draw, circle, fill=gray!40, inner sep=2pt, minimum size=0.6cm] (z3) at (3.5,0) {$L_l$};
            
            \node[draw, circle, inner sep=2pt, minimum size=0.6cm] (T) at (0,-2) {$V_1$};
            \node[draw, circle, inner sep=2pt, minimum size=0.6cm] (Y) at (4,-2) {$V_2$};

            \draw[black, ->] (z1) to (T);
            \draw[black, ->] (z1) to (Y);
            
            \draw[black, ->] (z2) to (T);
            \draw[black, ->] (z2) to (Y);
            
            \draw[black, ->] (z3) to (T);
            \draw[black, ->] (z3) to (Y);

            \draw[blue, ->] (T) to (Y);
            
            % \draw[blue, ->] (W) to (Y);
        \end{tikzpicture}
            \caption{The causal graph $\GG^{l}$ with $l$ latent confounders.}
        \label{fig:2:latent}
    \end{figure}

\begin{theorem}[\citealp{schkoda:2024}, Thm.~4]
\label{thm:schkoda}
Consider the causal graph $\GG^l$ with two observed variables and $l$ latent variables depicted in \cref{fig:2:latent}. There is a polynomial of degree $l+1$ with coefficients expressed in terms of the first $k(l): = (l+2) + \lceil(-3 + \sqrt{8l + 17})/2\rceil$ cumulants of the distributions where the roots of the polynomial are $\mathbf{b}_{2,1}, \mathbf{b}_{2, L_1}, \dots, \mathbf{b}_{2, L_l}$. We refer to this polynomial as $p_{\mathbf{V}, l}(\mathbf{b})$ (see \cref{app:rem:poly} in the appendix for a definition of the polynomial).
\end{theorem}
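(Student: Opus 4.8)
The plan is to read off the two relevant rows of the mixing matrix, exploit the scaling convention of \cref{rem:scaling} to turn the targeted causal effects into the \emph{nodes} of a generalized power sum, and then recover them by a Prony-type argument pooled across several cumulant orders.

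First, in the canonical model of \cref{fig:2:latent} the reduced form of the two observed variables is, writing $b:=\mathbf{b}_{2,1}$ and $c_i:=\mathbf{b}_{2,L_i}$,
\begin{equation*}
V_1 = N_1 + \sum_{i=1}^{l} N_{L_i}, \qquad V_2 = b\,N_1 + N_2 + \sum_{i=1}^{l} c_i\,N_{L_i},
\end{equation*}
where the coefficient $1$ on each $N_{L_i}$ in the $V_1$-equation is forced by the scaling convention ($V_1$ is the first child of every latent). Writing $\kappa^{(k)}_j:=\mathbf{c}^{(k)}(\mathbf{N})_{j,\dots,j}$ and using \cref{lem:multilin:cum} together with the diagonality of \cref{lem:indep:cum}, the order-$k$ cumulant of $\mathbf{V}_o$ with the index $2$ appearing $s$ times and the index $1$ appearing $k-s\ge 1$ times equals
\begin{equation*}
\gamma^{(k)}_s := \kappa^{(k)}_{N_1}\, b^{s} + \sum_{i=1}^{l}\kappa^{(k)}_{N_{L_i}}\, c_i^{s}.
\end{equation*}
The term from $N_2$ disappears because its coefficient in $V_1$ is $0$ while $V_1$ occurs at least once; crucially, since all remaining noises have coefficient $1$ in $V_1$, the value $\gamma^{(k)}_s$ depends only on $s$ and $k$, not on the number $k-s$ of $V_1$'s (as long as it is positive).

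Now the key observation is that, for each fixed $k$, $\gamma^{(k)}_s=\sum_{\rho}w^{(k)}_\rho\rho^{s}$ is a generalized power sum in the $l+1$ nodes $\rho\in\{b,c_1,\dots,c_l\}$. Hence the monic polynomial $p_{\mathbf{V},l}(\mathbf{b})=\prod_{\rho}(\mathbf{b}-\rho)=\sum_{t=0}^{l+1}q_t\,\mathbf{b}^{t}$ (with $q_{l+1}=1$) whose roots are exactly the targeted effects is, by construction, the annihilator of the associated linear recurrence: for every $k$ and every $s$ with $0\le s\le k-l-2$,
\begin{equation*}
\sum_{t=0}^{l+1} q_t\,\gamma^{(k)}_{s+t} \;=\; \sum_{\rho} w^{(k)}_\rho\,\rho^{s}\,p_{\mathbf{V},l}(\rho) \;=\; 0 .
\end{equation*}
Each such equation is linear in the unknown coefficients $(q_0,\dots,q_{l+1})$ with coefficients among the cumulants of order $k$. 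A single order $k$ supplies only $k-l-1$ equations, which would force $k\ge 2l+2$; the improvement comes from \emph{pooling} the orders $k=l+2,\dots,k_{\max}$, all of which share the same unknowns $q_t$ because the nodes do not depend on $k$. This yields $\sum_{k=l+2}^{k_{\max}}(k-l-1)=\binom{k_{\max}-l}{2}$ homogeneous equations, and the one-dimensional solution space determines $p_{\mathbf{V},l}$ up to scale, so its coefficients become rational functions of the first $k_{\max}$ cumulants. A dimension count shows that the smallest $k_{\max}$ for which the pooled system generically has rank $l+1$ satisfies $\binom{k_{\max}-l}{2}\ge l+2$, whose minimal solution is exactly $k_{\max}=k(l)$.

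The main obstacle is this last step: proving that for generic $(\mathbf{B}',\mathbf{D}^{(\leq k(l))})$ the pooled Hankel-type coefficient matrix has full rank $l+1$, so that $p_{\mathbf{V},l}$ is the unique (up to scale) kernel vector and therefore really has $b,c_1,\dots,c_l$ as its roots. I would establish this by exhibiting a single parameter choice at which the rank is $l+1$---for instance by taking the nodes distinct and the cumulants $\kappa^{(k)}_j$ so that enough independent power-sum patterns appear across the pooled orders---and then invoking that maximal rank is a Zariski-open, hence Lebesgue-almost-everywhere, condition on the parameter space; the same genericity guarantees that $b,c_1,\dots,c_l$ are pairwise distinct, which is what makes the annihilator have degree exactly $l+1$.
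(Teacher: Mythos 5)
First, a point of reference: the paper does not actually prove \cref{thm:schkoda} --- it is imported wholesale from \citet[Thm.~4]{schkoda:2024}, and the paper's only added content is \cref{app:rem:poly}, which records the polynomial as the determinant of a minor, containing the row of powers of $\mathbf{b}$, of a matrix whose other rows are shifted cumulant strings $\bigl(\mathbf{c}^{(k)}(\mathbf{V}_o)_{1,\dots,1,2,\dots,2}\bigr)$. Your proposal is, in substance, that same construction viewed dually. Your identity $\gamma^{(k)}_s=\kappa^{(k)}_{N_1}b^s+\sum_i \kappa^{(k)}_{N_{L_i}}c_i^s$, together with the observation that it depends only on $s$ and not on the number of $1$-indices (and that $N_2$ drops out), is exactly what makes every cumulant row of that matrix a generalized power sum in the nodes $\{b,c_1,\dots,c_l\}$; and saying that the coefficient vector of $\prod_\rho(\mathbf{b}-\rho)$ spans the kernel of the pooled Hankel-type system is equivalent, via the cofactor (Cramer) expansion, to saying that the determinant obtained by adjoining the power row is a nonzero multiple of $\prod_\rho(\mathbf{b}-\rho)$. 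So the route is the same; what remains is whether your two supporting claims are actually established.

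There are two gaps. (a) Your ``dimension count'' is reverse-engineered. To force a one-dimensional kernel in $l+2$ unknowns you need $l+1$ independent equations, i.e.\ $\binom{k_{\max}-l}{2}\ge l+1$, not $\ge l+2$; the stronger inequality is simply the one that reproduces $k(l)$. For $l=2$ (and $l=5,9,\dots$) the two thresholds give different $k_{\max}$, so your own argument, if the rank claim holds, would already determine the polynomial from cumulants of order $<k(l)$, and the count therefore does not ``show'' that the minimal order is $k(l)$. This is harmless for the theorem as stated (it asserts sufficiency of the first $k(l)$ cumulants, not minimality, and adding equations cannot enlarge the kernel), but as written this step is an assertion, not a derivation. (b) The actual heart of the proof --- generic rank $l+1$ of the pooled system --- is only sketched, and the suggested witness is too naive. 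In the Vandermonde basis $\{(1,\rho,\dots,\rho^{l+1})\}_{\rho}$, the row for order $k$ and shift $j$ has coordinates $(\kappa^{(k)}_\rho\rho^{\,j})_\rho$; for $l=3$ one has $k(3)=7$, so only shifts $j\in\{0,1,2\}$ exist, and taking the nodes distinct with order-independent weights yields at most rank $3<l+1$. So a pure ``distinct nodes, distinct shifts'' Vandermonde argument cannot work in general; the witness must exploit that the exogenous cumulants $\kappa^{(k)}$ vary freely with $k$ --- for instance, choosing the vectors $\kappa^{(k)}$ supported on disjoint subsets of the nodes for different orders makes the coordinate matrix block-triangular with nonsingular Vandermonde blocks, hence of rank $l+1$. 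With (b) repaired along these lines, your argument does prove the statement; without it, the decisive genericity claim is unproven.
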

The above theorem implies that in the causal graph $\GG^l$, one can identify the causal effect of interest, $\mathbf{b}_{2,1}$, up to a set of size $l+1$ using the first $k(l)$ cumulants of the distribution. In \cref{subsec:proxy} (and \cref{subsec:iv}), we demonstrate how incorporating a proxy (or instrumental) variable can refine this result, enabling unique identification of the causal effect. This approach involves deriving additional polynomial equations among the cumulants of the observed distribution, for which the true causal effect is a solution.
\begin{example}[Polynomial for the graph in \cref{fig:2:latent}]
For the special case $l = 1$, the polynomial equation described in \cref{thm:schkoda} is defined as follows with the coefficients expressed in terms of first $k(l=1)=4$:
\begin{equation}
\label{eq:sckoda:effect}
    \begin{aligned}
        &\mathbf{b}^2 \cdot(\mathbf{c}(\mathbf{V})_{1,1,1,2} \mathbf{c}(\mathbf{V})_{1,1,2}-\mathbf{c}(\mathbf{V})_{1,1,2,2} \mathbf{c}(\mathbf{V})_{1,1,1})\\
        &+\mathbf{b} \cdot (\mathbf{c}(\mathbf{V})_{1,2,2,2} \mathbf{c}(\mathbf{V})_{1,1,1}-\mathbf{c}(\mathbf{V})_{1,1,1,2} \mathbf{c}(\mathbf{V})_{1,2,2})\\
        &-(\mathbf{c}(\mathbf{V})_{1,2,2,2} \mathbf{c}(\mathbf{V})_{1,1,2}+\mathbf{c}(\mathbf{V})_{1,1,2,2} \mathbf{c}(\mathbf{V})_{1,2,2}) = 0.
    \end{aligned}
\end{equation}
\end{example}

\begin{lemma}[\citealp{schkoda:2024}, Lemma 5]
\label{lem:schkoda}
Consider the causal graph $\GG^l$ from \cref{fig:2:latent}. For every integer $k\geq 2$, the exogenous cumulant vector 
$[\mathbf{c}^k(\mathbf{N})_{1,\dots,1}, \mathbf{c}^k(\mathbf{N})_{L_1,\dots,L_1}, \dots,\mathbf{c}^k(\mathbf{N})_{L_l,\dots,L_l}]$
% $([\mathbf{C}^k(\mathbf{N})]_{1,\dots,1}, [\mathbf{C}^k(\mathbf{N})]_{L_1,\dots,L_1}, \dots,[\mathbf{C}^k(\mathbf{N})]_{L_l,\dots,L_l})$
is a solution of the following linear system
\begin{equation}
\label{eq:schkoda:cum}
    \begin{aligned}
    \begin{bmatrix}
        1 & 1 &\cdots & 1\\
        \mathbf{b}_{2,1} & \mathbf{b}_{2,L_1} &\cdots&\mathbf{b}_{2,L_l}\\
        \vdots & \vdots & \ddots &\vdots\\
        \mathbf{b}_{2,1}^{k-1}& \mathbf{b}^{k-1}_{2,L_1} & \cdots &\mathbf{b}^{k-1}_{2, L_l}
        \end{bmatrix}
        &\begin{bmatrix}
            \mathbf{c}^k(\mathbf{N})_{1,\dots,1}\:\:\:\:\:\\
            \mathbf{c}^k(\mathbf{N})_{L_1,\dots,L_1}\\
            \vdots\\
            \mathbf{c}^k(\mathbf{N})_{L_l,\dots,L_l}
        \end{bmatrix}
        \\=& \begin{bmatrix}
            \mathbf{c}^k(\mathbf{V}_o)_{1,\dots,1}\:\:\\
            \mathbf{c}^k(\mathbf{V}_o)_{1,\dots,1,2}\\
            \vdots\\
            \mathbf{c}^k(\mathbf{V}_o)_{1,2\dots,2}
        \end{bmatrix}.
\end{aligned}
\end{equation}
The solution is, generically, unique if $k\geq l+1$.
\end{lemma}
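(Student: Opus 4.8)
The plan is to obtain both claims — that the exogenous cumulant vector solves \eqref{eq:schkoda:cum}, and that this solution is generically unique — as consequences of the multilinearity of cumulants (\cref{lem:multilin:cum}) together with the diagonality of the noise cumulant (\cref{lem:indep:cum}), followed by a Vandermonde rank argument for the coefficient matrix.

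First I would write the mixing matrix $\mathbf{B}'$ of $\GG^l$ explicitly. Since the model is canonical, \cref{eq:b:can:matrix} and the scaling convention of \cref{rem:scaling} fix the first row of $\mathbf{B}'$ to be $\mathbf{b}'_{1,1}=1$, $\mathbf{b}'_{1,2}=0$, and $\mathbf{b}'_{1,L_i}=1$ for every $i$ (each latent has $V_1$ as its first child in the causal order), while the second row collects the effects $\mathbf{b}_{2,1},\mathbf{b}_{2,L_1},\dots,\mathbf{b}_{2,L_l}$ together with $\mathbf{b}'_{2,2}=1$. These are exactly the quantities appearing in the system.

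Next, for each $m\in\{0,\dots,k-1\}$, I would expand the observed cumulant whose multi-index has $k-m$ entries equal to $1$ and $m$ entries equal to $2$; write $1^{k-m}2^{m}$ for this multi-index. Applying \cref{lem:multilin:cum} and collapsing the sum to the diagonal via \cref{lem:indep:cum} gives
\[\mathbf{c}^k(\mathbf{V}_o)_{1^{k-m}2^{m}}=\sum_{j}\mathbf{c}^k(\mathbf{N})_{j,\dots,j}\,(\mathbf{b}'_{1,j})^{k-m}(\mathbf{b}'_{2,j})^{m},\]
where $j$ ranges over $\{1,2,L_1,\dots,L_l\}$. The key structural observation is that the $N_2$ contribution carries the factor $(\mathbf{b}'_{1,2})^{k-m}=0^{k-m}$, which vanishes for every $m\le k-1$; hence $N_2$ drops out of all $k$ equations, which is precisely why only $\mathbf{c}^k(\mathbf{N})_{1,\dots,1}$ and the latent cumulants survive. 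Using $\mathbf{b}'_{1,j}=1$ for $j\in\{1,L_1,\dots,L_l\}$, each remaining term reduces to $\mathbf{c}^k(\mathbf{N})_{j,\dots,j}(\mathbf{b}'_{2,j})^{m}$, i.e.\ exactly row $m$ of \eqref{eq:schkoda:cum}. This proves that the exogenous cumulant vector solves the system.

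For uniqueness I would identify the $k\times(l+1)$ coefficient matrix as a transposed Vandermonde matrix with nodes $\mathbf{b}_{2,1},\mathbf{b}_{2,L_1},\dots,\mathbf{b}_{2,L_l}$. Whenever these $l+1$ nodes are pairwise distinct and $k\ge l+1$, any $l+1$ of its rows form an invertible square Vandermonde matrix, so the coefficient matrix has full column rank $l+1$; combined with the consistency established above, the solution is then unique. The pairwise distinctness of the nodes is the only place genericity enters. The main obstacle is therefore not the algebra but making this genericity bookkeeping precise: one must check that the coincidence loci $\{\mathbf{b}_{2,1}=\mathbf{b}_{2,L_i}\}$ and $\{\mathbf{b}_{2,L_i}=\mathbf{b}_{2,L_j}\}$ are cut out by polynomials in the entries of $\mathbf{A}$ that are not identically zero, so that their union is a proper subvariety and hence a Lebesgue-measure-zero subset of the parameter space, as required by the definition of generic identifiability in \cref{subsec:ident}.
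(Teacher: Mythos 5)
Your proposal is correct, but note that the paper itself gives no proof of this lemma: it is imported verbatim from \citet[Lemma~5]{schkoda:2024}, so there is no internal argument to compare against. What you wrote is the natural (and, in substance, the standard) proof: the mixing matrix of $\GG^l$ under the scaling of \cref{rem:scaling} has first row $(1,0,1,\dots,1)$ and second row $(\mathbf{b}_{2,1},1,\mathbf{b}_{2,L_1},\dots,\mathbf{b}_{2,L_l})$; applying \cref{lem:multilin:cum} and \cref{lem:indep:cum} to the multi-index $1^{k-m}2^{m}$ makes the $N_2$ term vanish through the factor $0^{k-m}$ (which is exactly why the system stops at $\mathbf{c}^k(\mathbf{V}_o)_{1,2,\dots,2}$ and never uses $\mathbf{c}^k(\mathbf{V}_o)_{2,\dots,2}$), and what survives is row $m$ of \eqref{eq:schkoda:cum}; uniqueness for $k\geq l+1$ then follows from invertibility of a square Vandermonde submatrix with pairwise distinct nodes. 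The one point you flag as "the main obstacle" --- verifying that the coincidence loci are proper subvarieties --- is in fact immediate: with the scaling $a_{1,L_i}=1$ one has $\mathbf{b}_{2,L_i}=a_{2,1}+a_{2,L_i}$, so $\mathbf{b}_{2,1}-\mathbf{b}_{2,L_i}=-a_{2,L_i}$ and $\mathbf{b}_{2,L_i}-\mathbf{b}_{2,L_j}=a_{2,L_i}-a_{2,L_j}$, which are visibly non-zero polynomials in the entries of $\mathbf{A}$; \cref{lem:okamoto} (via \cref{cor:measure:zero}) then bounds their zero sets by a Lebesgue-measure-zero subset, completing the genericity bookkeeping.
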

 Let $\mathbf{b}$ be the vector $[\mathbf{b}_{2,1}, \mathbf{b}_{2,L_1} ,\cdots,\mathbf{b}_{2,L_l}]$. We rewrite the system in \eqref{eq:schkoda:cum} as
\begin{equation}
\label{eq:schkoda:cum:red}
    \mathrm{M}(\mathbf{b}, k)\cdot\mathbf{c}^k = \mathbf{c}^k_{(1, 2)}(\mathbf{V}_o).
\end{equation} 
The above lemma implies that after using \cref{thm:schkoda} to recover $[\mathbf{b}_{21}, \mathbf{b}_{2L_1}, \dots, \mathbf{b}_{2L_l}]$ up to a permutation, it is possible to estimate some cumulants corresponding to the exogenous noises of $V_1$ and the $l$ latent variables up to the same permutation.
\subsection{Proxy Variable}
\label{subsec:proxy}
In this section, we first provide the identifiability result for a causal graph with a single proxy variable and $l$ latent variables where there is no edge from the proxy variable to the treatment. Then, we extend our result to the case where there is an edge from the proxy to the treatment.
\subsubsection{No Edge from Proxy to Treatment}

\begin{figure}[ht]
        \centering
        \begin{tikzpicture}[scale = 0.7]
            \node[draw, circle, fill=gray!40, inner sep=2pt, minimum size=0.6cm] (z1) at (0.5,0) {$L_1$};
            \node[draw, circle, fill=gray!40, inner sep=2pt, minimum size=0.6cm] (z2) at (2,0) {$\dots$};
            \node[draw, circle, fill=gray!40, inner sep=2pt, minimum size=0.6cm] (z3) at (3.5,0) {$L_l$};

            \node[draw, circle, inner sep=2pt, minimum size=0.6cm] (W) at (-1.5,-1.3) {$Z$};
            \node[draw, circle, inner sep=2pt, minimum size=0.6cm] (T) at (0,-2) {$T$};
            \node[draw, circle, inner sep=2pt, minimum size=0.6cm] (Y) at (4,-2) {$Y$};

            \draw[black, ->] (z1) to (T);
            \draw[black, ->] (z1) to (Y);
            \draw[black, ->] (z1) to (W);

            \draw[black, ->] (z2) to (T);
            \draw[black, ->] (z2) to (Y);
            \draw[black, ->] (z2) to (W);
            
            \draw[black, ->] (z3) to (T);
            \draw[black, ->] (z3) to (Y);
            \draw[black, ->] (z3) to (W);
                
            \draw[blue, ->] (T) to (Y);            
            
        \end{tikzpicture}
            \caption{The causal graph with a single proxy variable $Z$ and $l$ latent confounders $L_1,\cdots, L_l$ where there is no edge from the proxy to the treatment.}
        \label{fig:cm}
    \end{figure}
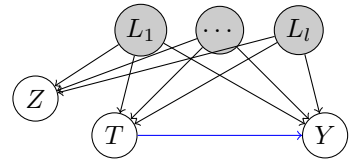

\begin{theorem}
\label{thm:cm}
    In the lvLiNGAM for the causal graph in \cref{fig:cm}, with the proxy variable $Z$ and $l$ latent confounders $L_1,\dots, L_l$, the causal effect from $T$ to $Y$ is generically identifiable from the first $k(l)$ cumulants of the observational distribution.
\end{theorem}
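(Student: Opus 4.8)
The plan is to reduce the problem to the two-observed-variable graph $\GG^l$ of \cref{fig:2:latent} and then let the proxy $Z$ resolve the ambiguity that \cref{thm:schkoda} leaves open. First I would observe that in \cref{fig:cm} the node $Z$ is a sink that is neither an ancestor nor a descendant of $T$ or $Y$, so the marginal model on $(T,Y)$ is precisely the lvLiNGAM of $\GG^l$ with $V_1=T$ and $V_2=Y$. Thus \cref{thm:schkoda} applies verbatim to the marginal cumulants of $(T,Y)$: the polynomial $p_{\mathbf{V},l}(\mathbf{b})$, whose coefficients are functions of the first $k(l)$ cumulants, has the $l+1$ total effects $\mathbf{b}_{Y,T},\mathbf{b}_{Y,L_1},\dots,\mathbf{b}_{Y,L_l}$ as its roots. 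This yields an explicit set of $l+1$ candidate values for the target effect $\mathbf{b}_{Y,T}$, so the entire task reduces to singling out the correct root.

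Next I would write the observed vector through the mixing matrix, normalizing by \cref{rem:scaling} with $T$ taken as the first child of every latent so that $\mathbf{b}_{T,L_i}=1$: one has $T=N_T+\sum_i \mathbf{b}_{T,L_i}N_{L_i}$, $Y=\mathbf{b}_{Y,T}N_T+N_Y+\sum_i \mathbf{b}_{Y,L_i}N_{L_i}$, and, crucially, $Z=N_Z+\sum_i \mathbf{b}_{Z,L_i}N_{L_i}$, a representation in which neither $N_T$ nor $N_Y$ occurs (equivalently $\mathbf{b}_{Z,T}=\mathbf{b}_{Z,Y}=0$, since $Z$ is not a descendant of $T$ or $Y$). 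I would then form, for $j=0,1,\dots,l$, the mixed cumulants $\mathbf{c}^{k}(\mathbf{V}_o)$ having one index equal to $Z$, $j$ indices equal to $Y$, and $k-1-j$ indices equal to $T$. By \cref{lem:multilin:cum} together with the diagonality of \cref{lem:indep:cum}, only noises common to $Z$, $Y$, and $T$ survive; since $N_T,N_Y$ are absent from $Z$ and $N_Z$ is annihilated by the $T$- and $Y$-powers, each such cumulant collapses to $\sum_{i=1}^{l}\mathbf{b}_{Z,L_i}\,\mathbf{c}^{k}(\mathbf{N})_{L_i,\dots,L_i}\,\mathbf{b}_{Y,L_i}^{\,j}$, where the normalization $\mathbf{b}_{T,L_i}=1$ removes all $T$-power factors.

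Appending the identically vanishing contribution of $N_T$, these $l+1$ equations take the form $\sum_{m=0}^{l}u_m\,\beta_m^{\,j}$, a Vandermonde system in the Schkoda roots $\beta_0=\mathbf{b}_{Y,T},\ \beta_i=\mathbf{b}_{Y,L_i}$, with weights $u_0=\mathbf{b}_{Z,T}\,\mathbf{c}^{k}(\mathbf{N})_{1,\dots,1}=0$ and $u_i=\mathbf{b}_{Z,L_i}\,\mathbf{c}^{k}(\mathbf{N})_{L_i,\dots,L_i}$ for $i\geq1$. Because the $\beta_m$ are generically distinct, the Vandermonde matrix is invertible, so the weights are uniquely recovered from the observed cumulants. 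The target is then identified as the unique root whose weight vanishes: $u_0=0$ by construction, whereas each $u_i$ $(i\geq1)$ is a product of $\mathbf{b}_{Z,L_i}$ and an exogenous cumulant, both nonzero off a measure-zero set (non-Gaussianity of the noises, with the latent exogenous cumulants recoverable via \cref{lem:schkoda}). Collecting distinctness of the roots and nonvanishing of the $u_i$, $i\geq1$, into one Lebesgue-null set $\mathcal{S}^\GG_{k(l)}$ gives generic identifiability of $\mathbf{b}_{Y,T}$ from the first $k(l)$ cumulants.

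The step I expect to be most delicate is forcing the proxy cumulants to collapse to a Vandermonde system in exactly the Schkoda roots: this depends on the normalization $\mathbf{b}_{T,L_i}=1$, so that powers of the $T$-coefficients do not perturb the weights, and on verifying that $N_T,N_Y,N_Z$ all drop out for every index pattern $j=0,\dots,l$. The second subtle point is the genericity bookkeeping ensuring $u_i\neq0$ for $i\geq1$: one must choose a cumulant order $k$ (any $k$ with $l+1\le k$ works, and $k(l)>l+1$) at which the latent exogenous cumulants are generically nonzero and the $l+1$ roots remain pairwise distinct, so that $u_0=0$ is the only vanishing weight and the selection of $\mathbf{b}_{Y,T}$ is unambiguous.
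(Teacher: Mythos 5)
Your proof is correct, but it takes a genuinely different route from the paper's. Both arguments begin by applying \cref{thm:schkoda} to the pair $(T,Y)$ to obtain the $l+1$ candidate roots. The paper then applies \cref{thm:schkoda} twice more, to $(Z,T)$ and $(Z,Y)$, uses \cref{lem:schkoda} on each to recover the exogenous cumulant vectors, matches the two root lists by comparing these (generically distinct) cumulants, forms the ratio vector $\big[\mathbf{b}_{Y,L_1}/\mathbf{b}_{T,L_1},\dots,\mathbf{b}_{Y,L_l}/\mathbf{b}_{T,L_l}\big]$, and finally selects the unique root of the $(T,Y)$ polynomial that does not appear among these ratios. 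You instead exploit the proxy only through the order-$(l+1)$ cross-cumulants carrying a single $Z$ index, which (under your normalization $\mathbf{b}_{T,L_i}=1$) collapse to a Vandermonde system in the Schkoda roots; the structural zero $\mathbf{b}_{Z,T}=0$ forces the weight attached to the true effect to vanish, while the weights attached to the latent roots, $\mathbf{b}_{Z,L_i}\,\mathbf{c}^{l+1}(\mathbf{N})_{L_i,\dots,L_i}$, are generically nonzero, so the causal effect is the unique zero-weight root. Your route is more economical: one application of \cref{thm:schkoda} instead of three, no permutation-matching step, and no requirement that the exogenous cumulants be pairwise distinct; the trade-off is a different genericity ledger ($\mathbf{b}_{Z,L_i}\neq 0$ and nonvanishing order-$(l+1)$ latent noise cumulants, plus distinctness of the $l+1$ roots, which the paper needs as well), and your selection rule is essentially the linear system of \cref{lem:schkoda} with one index swapped to $Z$, so it stays within the first $k(l)$ cumulants as required. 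One cosmetic remark: the appeal to \cref{lem:schkoda} to ``recover'' the latent exogenous cumulants in your genericity argument is unnecessary — nonvanishing of those cumulants on a full-measure subset of $\Diag^{(\leq k)}(p)$ is all you need, since they enter your weights only as factors.
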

\begin{proof}
Considering the pairs $[Z, T]$, $[Z, Y]$, and $[T,Y]$ as pair $[V_1,V_2]$ in \cref{thm:schkoda}, we obtain the vectors 
\begin{equation}
\label{eq:cm:roots}
    \begin{aligned}
        \mathbf{b}^{ZT} &= [0, \mathbf{b}_{T,L_1},\dots,\mathbf{b}_{T,L_l}],\\
        \mathbf{b}^{ZY} &= [0, \mathbf{b}_{Y,L_1},\dots,\mathbf{b}_{Y,L_l}],\\
        \mathbf{b}^{TY} &= [\mathbf{b}_{Y,T}, \mathbf{b}_{Y,L_1}/\mathbf{b}_{T,L_1},\dots,\mathbf{b}_{Y,L_l}/\mathbf{b}_{T,L_l}], 
    \end{aligned}
\end{equation}
up to some permutations (notice that the ratios in the last equation are a consequence of the choice of the scaling we discussed in \cref{rem:scaling}). Next, we recover the vector
\begin{equation}
\label{eq:cm:cum}
\scalebox{0.95}{$
    [\mathbf{c}^{l+1}(\mathbf{N})_{1,\dots,1}, \mathbf{c}^{l+1}(\mathbf{N})_{L_1,\dots,L_1}, \dots,\mathbf{c}^{l+1}(\mathbf{N})_{L_l,\dots,L_l}]
     $}
\end{equation}
using \cref{lem:schkoda} twice (up to some permutations) with the vector $\mathbf{b}^{ZT}$, and then with $\mathbf{b}^{ZY}$ by solving the linear system in \eqref{eq:schkoda:cum}. Since the cumulants of different exogenous noises are generically distinct, we can match the entries in $\mathbf{b}^{ZT}$ to their corresponding entries in $\mathbf{b}^{ZY}$ using the two recovered exogenous cumulant vectors. This allows us to construct a new vector 
\begin{equation}
\label{eq:cm:ratios}
    \mathbf{b}^r :=
    \big[\mathbf{b}_{Y,L_1}/\mathbf{b}_{T,L_1},\dots,\mathbf{b}_{Y,L_l}/\mathbf{b}_{T,L_l}\big].
\end{equation} Finally, $\mathbf{b}_{Y, T}$ is the only entry in $\mathbf{b}^{TY}$ that does not equal any entry of $\mathbf{b}^{r}$.
\end{proof}

\subsubsection{With an Edge from Proxy to Treatment}
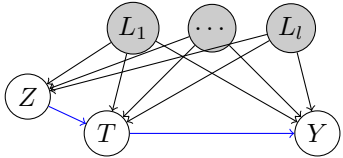
\begin{figure}[ht]
        \centering
        \begin{tikzpicture}[scale = 0.7]
            \node[draw, circle, fill=gray!40, inner sep=2pt, minimum size=0.6cm] (z1) at (0.5,0) {$L_1$};
            \node[draw, circle, fill=gray!40, inner sep=2pt, minimum size=0.6cm] (z2) at (2,0) {$\dots$};
            \node[draw, circle, fill=gray!40, inner sep=2pt, minimum size=0.6cm] (z3) at (3.5,0) {$L_l$};

            \node[draw, circle, inner sep=2pt, minimum size=0.6cm] (W) at (-1.5,-1.3) {$Z$};
            \node[draw, circle, inner sep=2pt, minimum size=0.6cm] (T) at (0,-2) {$T$};
            \node[draw, circle, inner sep=2pt, minimum size=0.6cm] (Y) at (4,-2) {$Y$};

            \draw[black, ->] (z1) to (T);
            \draw[black, ->] (z1) to (Y);
            \draw[black, ->] (z1) to (W);

            \draw[black, ->] (z2) to (T);
            \draw[black, ->] (z2) to (Y);
            \draw[black, ->] (z2) to (W);
            
            \draw[black, ->] (z3) to (T);
            \draw[black, ->] (z3) to (Y);
            \draw[black, ->] (z3) to (W);
                
            \draw[blue, ->] (T) to (Y);
            \draw[blue, ->] (W) to (T);
            
        \end{tikzpicture}
            \caption{The causal graph with a single proxy variable $Z$ and $l$ latent confounders $L_1,\cdots, L_l$ where there is an edge from the proxy to the treatment.}
        \label{fig:proxy}
    \end{figure}

\begin{theorem}
        \label{th:id proxy}
        In the lvLiNGAM for the causal graph in \cref{fig:proxy}, the causal effect from $T$ to $Y$ is generically identifiable from the first $k(l)$ cumulants of the observational distribution.
\end{theorem}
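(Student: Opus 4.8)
The plan is to imitate the three-pair strategy of \cref{thm:cm}, now accounting for the fact that the extra edge $Z\to T$ makes the exogenous noise $N_Z$ leak into both $T$ and $Y$. I first treat the pairs $[Z,T]$ and $[Z,Y]$ as instances of the graph $\GG^{l}$ of \cref{thm:schkoda}. In each of them the only noises shared by the two coordinates are $N_Z$ and $N_{L_1},\dots,N_{L_l}$, with $N_Z$ acting as the private noise of the source $Z$; hence both pairs are genuine $\GG^{l}$ models and \cref{thm:schkoda} recovers, from the first $k(l)$ cumulants and up to permutation,
\begin{align*}
\mathbf{b}^{ZT}&=[\mathbf{b}_{T,Z},\mathbf{b}_{T,L_1},\dots,\mathbf{b}_{T,L_l}],\\
\mathbf{b}^{ZY}&=[\mathbf{b}_{Y,Z},\mathbf{b}_{Y,L_1},\dots,\mathbf{b}_{Y,L_l}],
\end{align*}
where $\mathbf{b}_{Y,Z}=\mathbf{b}_{Y,T}\,\mathbf{b}_{T,Z}$ since every directed path from $Z$ to $Y$ passes through $T$.

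Next I would apply \cref{lem:schkoda} to each pair to recover, up to the same permutations, the order-$k$ cumulants of the shared noises $N_Z,N_{L_1},\dots,N_{L_l}$ (I fix any $k$ with $l+1\le k\le k(l)$). Since these cumulants are generically distinct and are common to the two pairs, matching them aligns $\mathbf{b}^{ZT}$ with $\mathbf{b}^{ZY}$ and yields matched triples $(\rho^{ZT}_m,\rho^{ZY}_m,\kappa_m)$ indexed by the (still unlabeled) shared noise $m$, with $\kappa_m$ the common cumulant. The ratio $\rho^{ZY}_m/\rho^{ZT}_m$ equals $\mathbf{b}_{Y,Z}/\mathbf{b}_{T,Z}=\mathbf{b}_{Y,T}$ when $m=N_Z$ and $\mathbf{b}_{Y,L_i}/\mathbf{b}_{T,L_i}$ when $m=N_{L_i}$. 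So the set $R$ of these $l+1$ ratios contains the target $\mathbf{b}_{Y,T}$; but, in contrast to \cref{thm:cm}, where the proxy's ratio is an undefined $0/0$ that can simply be discarded, here all $l+1$ ratios are finite and $\mathbf{b}_{Y,T}$ is not yet singled out.

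The decisive step exploits the pair $[T,Y]$. Because $N_Z$ enters both $T$ and $Y$, $[T,Y]$ is now a $\GG^{l+1}$ model with confounders $N_Z,N_{L_1},\dots,N_{L_l}$, so a direct application of \cref{thm:schkoda} would cost $k(l+1)>k(l)$ cumulants. The key observation is that the confounder ratio contributed by $N_Z$, namely $\mathbf{b}_{Y,Z}/\mathbf{b}_{T,Z}=\mathbf{b}_{Y,T}$, coincides with the direct edge effect $\mathbf{b}_{Y,T}$ of $T\to Y$; equivalently, $\mathbf{b}_{Y,T}$ is the unique repeated root of the degree-$(l+2)$ polynomial of $[T,Y]$. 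This coincidence collapses the cumulant system of \cref{lem:schkoda} for $[T,Y]$ to a Vandermonde system whose $l+1$ distinct nodes are exactly the already-known set $R$; plugging $R$ in, I can solve it with the available $k(l)$ cumulants. Its solution assigns to each node $R_m$ a coefficient $C_m$ that, for a confounder $N_{L_i}$, equals $(\rho^{ZT}_m)^{k}\kappa_m$, whereas at the doubled node $\mathbf{b}_{Y,T}$ the private noise $N_T$ of $T$ is merged in, giving $C_{N_Z}=(\rho^{ZT}_{N_Z})^{k}\kappa_{N_Z}+\mathbf{c}^{(k)}(\mathbf{N})_{T,\dots,T}$. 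Comparing each $C_m$ with the prediction $(\rho^{ZT}_m)^{k}\kappa_m$ formed from the matched $[Z,T]$ data, the unique node with a nonzero discrepancy---equal to $\mathbf{c}^{(k)}(\mathbf{N})_{T,\dots,T}$---is $N_Z$, and its ratio $R_{N_Z}=\mathbf{b}_{Y,T}$ is the sought causal effect.

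I expect the last step to be the main obstacle: the edge $Z\to T$ promotes $[T,Y]$ to a $\GG^{l+1}$ instance, and the whole point is to identify the effect without paying for the extra confounder. Collapsing the system via the coincidence of the edge effect with the $N_Z$ ratio, and designing the discrepancy test that detects the merged $N_T$ contribution, are what keep the cumulant budget at $k(l)$. All steps are generic: outside a measure-zero set the relevant noise cumulants are nonzero and pairwise distinct, the ratios in $R$ are distinct so the collapsed Vandermonde matrix is invertible, and $\mathbf{b}_{T,Z}\neq0$ so the ratio defining $\mathbf{b}_{Y,T}$ is well defined.
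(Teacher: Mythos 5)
Your proposal is correct, but it finishes by a genuinely different mechanism than the paper. Both arguments share the first half: apply \cref{thm:schkoda} to the pairs $[Z,T]$ and $[Z,Y]$, then use \cref{lem:schkoda} to recover the exogenous cumulants of $N_Z,N_{L_1},\dots,N_{L_l}$ and align the two root vectors. From there the paper takes a reduction route: by \cref{lem:rem:edge}, regressing out $Z$ with \emph{any} matched pair $\mathbf{b}=(\mathbf{b}_1,\mathbf{b}_2)$ yields $[Z,\,T-\mathbf{b}_1Z,\,Y-\mathbf{b}_2Z]$, which follows the no-edge model of \cref{fig:cm} with the same causal effect, so \cref{thm:cm} applies verbatim and the question of \emph{which} pair corresponds to $N_Z$ never needs to be answered. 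You instead stay with the raw pair $[T,Y]$ and resolve exactly that question: since every directed path from $Z$ to $Y$ passes through $T$, the node of $N_Z$ in the order-$k$ cumulant system for $[T,Y]$ coincides with the node of $N_T$ (both equal $\mathbf{b}_{Y,T}$), so the system collapses to a Vandermonde system in the $l+1$ known, generically distinct ratios $R$, solvable within the $k(l)$ budget; the recovered weight at a latent node equals the prediction $(\rho^{ZT}_m)^k\kappa_m$ computable from the $[Z,T]$ step (indeed both are $\mathbf{b}_{T,L_i}^k\kappa_{L_i}$), while at the doubled node it is $\mathbf{b}_{T,Z}^k\kappa_{N_Z}+\mathbf{c}^{(k)}(\mathbf{N})_{T,\dots,T}$, exceeding its prediction by a generically nonzero amount, which singles out $R_{N_Z}=\mathbf{b}_{Y,T}$. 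I checked the weight computations and the genericity side conditions (distinct ratios, distinct noise cumulants, $\mathbf{c}^{(k)}(\mathbf{N})_{T,\dots,T}\neq 0$, $\mathbf{b}_{T,Z}\neq 0$); they all hold in the paper's parameter space, so your argument goes through. Comparing the two: the paper's reduction is shorter and reuses \cref{thm:cm} wholesale, whereas yours avoids the data transformation and the second round of root-finding on transformed variables, and it additionally labels which matched pair belongs to $N_Z$. One caution: your parenthetical remark that $\mathbf{b}_{Y,T}$ is ``the unique repeated root of the degree-$(l+2)$ polynomial of $[T,Y]$'' should not be relied upon, because as a $\GG^{l+1}$ model the marginal $[T,Y]$ sits exactly on the non-generic coincidence locus, where the polynomial of \cref{thm:schkoda} is not guaranteed to behave as stated; your actual construction (the collapsed system with known nodes) wisely bypasses this and is what makes the argument sound.
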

\begin{proof}
    Let $\mathbf{b}$ be either equal to $[\mathbf{b}_{T, Z}, \mathbf{b}_{Y, Z}]$ or to $[\mathbf{b}_{T, L_i}, \mathbf{b}_{Y, L_i}]$ for some $i\in[l]$. Then, the triple    
    \begin{equation}
    \label{eq:reg}
        \mathbf{V}^{\mathbf{b}}:=[Z, T - \mathbf{b}_1Z, Y - \mathbf{b}_2Z]
    \end{equation}
    follows a lvLiNGAM model compatible with the graph in \cref{fig:cm} with the causal effect from $T - \mathbf{b}_1Z$ to $Y - \mathbf{b}_2Z$ being the same as in the original model (see \cref{lem:rem:edge}). Hence, once we have one of these pairs, we can use \cref{thm:cm} to recover the causal effects between $T$ and $Y$. 
    
    To obtain the pairs, we apply \cref{thm:schkoda} to $[Z, T]$ and $[Z, Y]$, finding 
    \begin{equation}
    \label{eq:proxy:edge:effect}
        \begin{aligned}
        \mathbf{b}^T &= [\mathbf{b}_{T, Z}, \mathbf{b}_{T, L_1},\dots,\mathbf{b}_{T, L_l}],\\        
        \mathbf{b}^Y &= [\mathbf{b}_{Y, Z}, \mathbf{b}_{Y, L_1}, \dots,\mathbf{b}_{Y, L_l}]            
        \end{aligned}
    \end{equation}                
    up to some permutations of their entries. Moreover, using \cref{lem:schkoda}, we can align the pairs of solutions as we did in the proof of \cref{thm:cm}. In this manner, we obtain 
    \begin{equation}
        \label{eq:proxy:edge:pairs}
        \mathbf{b}^1 = [\mathbf{b}_{T, Z}, \mathbf{b}_{Y, Z}],\dots,\mathbf{b}^{l+1} = [\mathbf{b}_{T, L_l}, \mathbf{b}_{Y, L_l}].
    \end{equation} Any $\mathbf{b}^i$ allows us to identify the correct causal effect.
\end{proof}
The above result shows that estimating the first $k(l)$ cumulants of the distribution is sufficient to identify the causal effect. However, since estimating higher-order cumulants is statistically more challenging, it is important to understand whether the same result can be obtained with lower-order cumulants. The next result shows that this is not possible for the case $l = 1$.
\begin{theorem}
    \label{th:id proxy:negative}
        Consider the causal graph depicted in \cref{fig:proxy} with $l=1$. Then, the causal effect from $T$ to $Y$ is not identifiable from the first $k(l)-1 = 3$ cumulants of the observational distribution.
\end{theorem}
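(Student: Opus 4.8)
The plan is to establish non-identifiability by constructing, for an open (hence positive-measure) set of parameters, a \emph{second} lvLiNGAM supported on the graph of \cref{fig:proxy} with $l=1$ whose cumulants of orders two and three (i.e.\ the first $k(l)-1=3$ cumulants) agree with those of the given model, yet whose causal effect $\mathbf{b}_{Y,T}$ is different. Because the paper's identifiability is generic, exhibiting such an alternative on a nonempty open set suffices: it certifies that the set of parameters violating the identifiability condition is not contained in any Lebesgue-null set. Fixing coordinates via \cref{rem:scaling} (so that the edge $L\to Z$ carries weight $1$), the model has four free structural weights $a_{TZ},a_T,a_{YT},a_Y$ for the edges $Z\to T$, $L\to T$, $T\to Y$, $L\to Y$, together with the order-two and order-three cumulants $\sigma_s^2,\gamma_s$ of the four exogenous noises $s\in\{N_Z,N_T,N_Y,N_L\}$; this is $12$ real parameters, and the target quantity is $\mathbf{b}_{Y,T}=a_{YT}$. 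By \cref{lem:multilin:cum} the accessible data are the symmetric order-two tensor ($6$ entries) and order-three tensor ($10$ entries) of $(Z,T,Y)$, i.e.\ $16$ coordinates that are explicit polynomials in the $12$ parameters.

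I would first write the two observable tensors through the columns $\mathbf{b}'_Z,\mathbf{b}'_T,\mathbf{b}'_Y,\mathbf{b}'_L$ of the $3\times4$ mixing matrix as
\begin{equation*}
\Sigma=\sum_{s}\sigma_s^2\,\mathbf{b}'_s{\mathbf{b}'_s}^{\top},\qquad
\mathbf{C}^{(3)}=\sum_{s}\gamma_s\,{\mathbf{b}'_s}^{\otimes 3},
\end{equation*}
where, writing $e_1,e_2,e_3$ for the standard basis indexing $(Z,T,Y)$, the graph forces $\mathbf{b}'_Y=e_3$, $\mathbf{b}'_T=e_2+a_{YT}e_3$, $\mathbf{b}'_Z=e_1+a_{TZ}\mathbf{b}'_T$ and $\mathbf{b}'_L=e_1+(a_{TZ}+a_T)\mathbf{b}'_T+a_Y e_3$. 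Thus all four directions are determined by the four weights alone; one of them is pinned at $e_3$ and the remaining three are built from the single vector $\mathbf{b}'_T$ together with $e_1,e_3$. Since $\Sigma$ and $\mathbf{C}^{(3)}$ are \emph{linear} in the source cumulants $\sigma_s^2,\gamma_s$, I would eliminate these eight unknowns first: a candidate alternative with weights $(\tilde a_{TZ},\tilde a_T,\tilde a_{YT},\tilde a_Y)$ matches the data if and only if $\Sigma$ lies in the $4$-dimensional span of $\{\tilde{\mathbf{b}}'_s{\tilde{\mathbf{b}}'_s}^{\top}\}$ and $\mathbf{C}^{(3)}$ lies in the $4$-dimensional span of $\{{\tilde{\mathbf{b}}'_s}^{\otimes3}\}$. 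This is $2+6=8$ polynomial conditions (codimensions $2$ and $6$ in $\Sym_2(3)$ and $\Sym_3(3)$) in the four weights, whose coefficients are the observed cumulants. The true weights are one solution; the objective is a second solution with $\tilde a_{YT}\neq a_{YT}$. Eliminating $\tilde a_{TZ},\tilde a_T,\tilde a_Y$ should leave a univariate polynomial $q$ in $\tilde a_{YT}$, with cumulant coefficients, satisfied by $a_{YT}$, and I expect $\deg q\ge 2$ with a second root that generically differs from the true effect — the residual ambiguity between the causal effect and the confounding contribution that the order-four cumulants precisely resolve in \cref{th:id proxy} (see \eqref{eq:sckoda:effect}), and that is therefore expected to survive once order four is removed.

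To promote a single such example to a positive-measure statement, I would observe that the extra solution is an algebraic branch depending semialgebraically on the observed cumulants, hence on the parameters. Exhibiting one explicit witness at which $q$ has two real, distinct, simple roots lying in the admissible (valid-noise) region, the implicit function theorem then continues the alternative model to an open neighborhood of that witness, so generic identifiability fails on a set of positive Lebesgue measure.

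The step I expect to be the main obstacle is precisely showing that the over-determined system — $16$ equations in $12$ unknowns, reducing to $8$ structured conditions on $4$ weights — is \emph{consistent at a second point} rather than only at the true parameters. Over-determination means a generic target admits no solution at all, so the second solution must come from the special, non-generic geometry of the directions (one fixed at $e_3$, the others sharing the common vector $\mathbf{b}'_T$); I anticipate verifying this either by a resultant/elimination computation producing $q$ explicitly and reading off its second root, or by directly displaying one pair of models and then invoking the continuity argument above. A secondary check is that the second root of $q$ remains real and yields legitimate (e.g.\ sign-consistent) noise cumulants on an open set, which is where the realness and admissibility of that root must be confirmed.
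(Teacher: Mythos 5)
Your setup is sound and is, at heart, the same strategy the paper follows: use multilinearity (\cref{lem:multilin:cum}) to eliminate the noise cumulants, then eliminate the nuisance weights, and show that the second- and third-order cumulants constrain $\mathbf{b}_{Y,T}$ only through a polynomial of degree at least two. Your mixing-matrix columns, your dimension count ($12$ parameters vs.\ $16$ observed coordinates), the reformulation of cumulant-matching as two span-membership conditions of codimensions $2$ and $6$, and the observation that a nonempty open set of counterexamples suffices under the paper's generic notion of identifiability are all correct. (One simplification: under the paper's definition, $\Diag^{(\leq k)}(p)$ carries no positivity constraints, so your ``admissibility'' concern about the alternative noise cumulants can be dropped entirely.)

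The genuine gap is that you never carry out the decisive step, and that step \emph{is} the theorem. The whole content of the result is the consistency of the over-determined system at a second point --- equivalently, that the elimination ideal of the graph of the parametrization contains no polynomial linear in $\mathbf{b}_{Y,T}$, only one of higher degree --- and your proposal explicitly defers this to an unexecuted resultant computation or an unexhibited explicit witness, calling it ``the main obstacle.'' As you yourself note, an over-determined system generically admits no extra solutions, so an expectation that $\deg q\ge 2$ with an attained second root is not an argument; only the computation (or an explicit pair of cumulant-matching models) settles it. Your fallback of one witness plus the implicit function theorem also has a secondary gap: a simple second root of the univariate eliminant $q$ does not automatically lift to a full parameter point $(\tilde a_{TZ},\tilde a_T,\tilde a_{YT},\tilde a_Y,\tilde\sigma^2_s,\tilde\gamma_s)$ in the fiber --- the lifting is again exactly the consistency question you set aside. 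The paper closes this gap computationally: it appends $\mathbf{b}_{Y,T}$ to the observables via the modified parametrization $\tilde{\Phi}^{(\leq 3)}_{\GG}$, computes the reduced Gr\"obner basis of the vanishing ideal for an elimination order (\cref{def:groeb}) in Macaulay2, reads off that $\mathbf{b}_{Y,T}$ is determined only as a root of a degree-two polynomial in the cumulants, and invokes \citet[Prop.~3, 4]{garcia:2010}, together with the fact that $\mathbf{b}_{Y,T}$ is unconstrained in $\R^{\GG}$, to conclude non-identifiability. A proof along your lines would be legitimate and even more self-contained, but it requires actually producing the degree-two eliminant or an explicit pair of models --- precisely the work left undone here.
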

\begin{proof}
    \citet[Prop.~3, 4]{garcia:2010} prove that, once a polynomial parametrization for a statistical model is known, the generic identifiability of any parameter can be verified through a Gr\"obner basis computation. We leveraged this fact as follows: we parameterize the model $\MM^{(\leq 3)}(\GG)$ using \eqref{eq:param}  and compute the vanishing ideal for the modified parametrization
    \begin{equation*}    
        \begin{aligned}
            \tilde{\Phi}^{(\leq k)}_{\GG}: \R^{\GG}\times\Diag^{\leq k}(p)&\xrightarrow{}\R\times\MM^{(\leq k)}(\GG)\\
             (\mathbf{B}',\mathbf{D}^{(2)}, \mathbf{D}^{(3)})
             &\mapsto[\mathbf{b}_{Y,T},\mathbf{D}^{(2)}\bullet_2 \mathbf{B}', \mathbf{D}^{(3)}\bullet_3 \mathbf{B}'].
        \end{aligned}
    \end{equation*}
   Specifically, computing the reduced Gr\"obner basis for an elimination term order (see \cref{def:groeb}), we find that $\mathbf{b}_{Y, T}$ is determined merely as a root of a degree two polynomial.\protect\footnote{The computations were done using the computer algebra software Macaulay 2 \citep{M2}. The code to replicate the computation can be found at \href{https://github.com/danieletramontano/CEId-from-Moments/blob/main/Macaulay2/NonGaussianIdentifiability.m2}{https://github.com/danieletramontano/CEId-from-Moments/blob/main/Macaulay2/NonGaussianIdentifiability.m2}.}  
    Since  $\mathbf{b}_{Y, T}$ is unconstrained in $\R^\GG$, it is not generically identifiable \citep[Prop.~3]{garcia:2010}.  
\end{proof}

\subsection{Underspecified Instrumental Variable}
\label{subsec:iv}
We now prove that in lvLiNGAM models, one valid instrument suffices to estimate the causal effects of multiple treatments. 

\begin{figure}[ht]
        \centering
        \begin{tikzpicture}[scale = 0.7]

        \node[draw, circle, inner sep=2pt, minimum size=0.6cm] (I) at (-1,0) {$I$};
        
        \node[draw, circle, inner sep=2pt, minimum size=0.6cm] (T1) at (1,1) {$T^1$};
        \node[draw, circle, inner sep=2pt, minimum size=0.6cm] (T2) at (1,-1) {$T^2$};

        \node[draw, circle, inner sep=2pt, minimum size=0.6cm] (Y) at (5,0) {$Y$};
        
        \node[draw, circle, fill=gray!40, inner sep=2pt, minimum size=0.6cm] (H1) at (3.5,1) {$L_1$};
        % \node[draw, circle, fill=gray!40, inner sep=2pt, minimum size=0.6cm] (H2) at (4,1.2) {$L_2$};
        \node[draw, circle, fill=gray!40, inner sep=2pt, minimum size=0.6cm] (H3) at (3.5,-1) {$L_2$};

        \draw[blue,  ->] (I) to (T1);
        \draw[blue, ->] (I) to (T2);
        
        \draw[blue, ->] (T1) to (Y);
        \draw[blue, ->] (T2) to (Y);
        
        \draw[black, ->] (H1) to (T1);
        \draw[black, ->] (H1) to (Y);
        
        % \draw[black, ->] (H2) to (T1);
        % \draw[black, ->] (H2) to (Y);

        \draw[black, ->] (H3) to (T2);
        \draw[black, ->] (H3) to (Y);
        
        \end{tikzpicture}
        \caption{An example of a causal graph for the underspecified instrumental variable model.}
        \label{fig:IV}
    \end{figure}

        In a causal graph $\GG$, we say that $I$ is a valid instrument for the treatments $T^1, \dots, T^k$ on $Y$ if 
        % the following conditions hold
    \begin{align*}
        I& \in\pa(T^i)\quad&&\forall\, i\in [n],\\
        \an(I)&\cap\an(T^i)\cap\LL = \emptyset\quad&&\forall\, i\in [n],\\
        I&\dsep_{\GG_{\setminus T}} Y,
    \end{align*}
    where $\dsep$ denotes d-separation \citep[\S1.2]{pearl:2009}, and $\GG_{\setminus T}$ is the graph obtained by removing the edges $T^i \to Y$ from $\GG$ for all $i \in [n]$ \citep[Eq.~1]{ailer:23}. \cref{fig:IV} illustrates an example with two treatments and one instrumental variable.
\begin{theorem}
\label{thm:iv}
    In the lvLiNGAM for the causal graph in \cref{fig:IV},
    with instrumental variable $I$, treatments $T^1,\dots, T^k$, and outcome $Y$, the causal effect from $T^i$ to $Y$ is generically identifiable from the first $k(l)$ cumulants of the observational distribution, where $l:=\max_{i\in[n]}{|\an(T^i)\cap\an(Y)\setminus{I}|}$.
\end{theorem}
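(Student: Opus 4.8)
The plan is to reduce the problem, one treatment at a time, to the two-observed-variable situation of \cref{thm:schkoda} by using the instrument to strip away the confounding it induces, and then to resolve the resulting finite ambiguity with the exclusion restriction. Write $\beta_i := \mathbf{b}'_{T^i,I}$, $\gamma := \mathbf{b}'_{Y,I}$, and $\alpha_i := \mathbf{b}'_{Y,T^i}$ for the target effects. First I would identify $\beta_i$ and $\gamma$. Because $\an(I)\cap\an(T^i)\cap\LL=\emptyset$ and $I$ has no latent ancestors, the pairs $[I,T^i]$ and $[I,Y]$ carry no latent confounding, so $\beta_i$ and $\gamma$ are recovered by linear regression from the covariance (second-order) cumulants alone. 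Crucially, the exclusion restriction $I\dsep_{\GG_{\setminus T}}Y$ forces $\gamma=\sum_i \alpha_i\beta_i$, since every directed $I$--$Y$ path factors through exactly one treatment and no latent confounds $I$ with $Y$.

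Next, for a fixed target $T^i$, I would form the regressed variables $\tilde T^i := T^i-\beta_i I$ and $\tilde Y := Y-\gamma I$. By \cref{lem:multilin:cum} the coefficient of $N_I$ in both $\tilde T^i$ and $\tilde Y$ vanishes, so---exactly as in the edge-removal step \cref{lem:rem:edge}---the pair $[\tilde T^i,\tilde Y]$ is an lvLiNGAM whose only shared exogenous sources are $N_{T^i}$ and the noises $N_L$ of the latents $L\to T^i$; the remaining noises (those of the other treatments, of the non-confounding latents, and of $Y$) enter $\tilde Y$ only and are absorbed into its private noise. Hence $[\tilde T^i,\tilde Y]$ realizes the graph $\GG^{l_i}$ of \cref{fig:2:latent} with $l_i:=|\an(T^i)\cap\an(Y)\setminus\{I\}|\le l$, these being the latents $L\to T^i$, and---since $N_{T^i}$ has coefficient $1$ in $\tilde T^i$ and $\alpha_i$ in $\tilde Y$---the direct effect $\tilde T^i\to\tilde Y$ equals $\alpha_i$. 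Applying \cref{thm:schkoda} to this pair therefore produces, from the first $k(l_i)\le k(l)$ cumulants, a degree-$(l_i{+}1)$ polynomial whose root set is $R_i=\{\alpha_i\}\cup\{\mathbf{b}'_{Y,L}/\mathbf{b}'_{T^i,L}: L\to T^i\}$, where the ratios reflect the scaling of \cref{rem:scaling}. All cumulants of $\tilde T^i,\tilde Y$ needed here are computable from the first $k(l)$ cumulants of the observed vector via \cref{lem:multilin:cum}.

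Finally I would disentangle the target from the spurious confounding roots. Repeating the previous step for every treatment yields candidate sets $R_1,\dots,R_n$ with $\alpha_i\in R_i$, so the true effect vector lies in the finite grid $R_1\times\cdots\times R_n$. The exclusion restriction supplies the single linear selector $\sum_i \beta_i x_i=\gamma$, which the true tuple satisfies; I would then show that, generically, it is the \emph{only} grid point that does so. For any other tuple differing from the truth on a nonempty set $S$ of coordinates, satisfaction would force the algebraic relation $\sum_{i\in S}\beta_i\big(\mathbf{b}'_{Y,L(i)}/\mathbf{b}'_{T^i,L(i)}-\alpha_i\big)=0$ for some choice of latents $L(i)$, each such relation cutting out a measure-zero subset of $\R^\GG\times\Diag^{(\le k)}(p)$; excluding their finite union gives generic uniqueness. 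This last step is the crux: it is exactly the instrument constraint $\gamma=\sum_i\alpha_i\beta_i$ that rules out the otherwise admissible swap of a treatment's private noise $N_{T^i}$ with a latent confounder $L\to T^i$ (which would replace $\alpha_i$ by $\mathbf{b}'_{Y,L}/\mathbf{b}'_{T^i,L}$), and the other point requiring care is verifying that the regression genuinely preserves the effect while reducing the confounder count from $l_i{+}1$ to $l_i$, so that $k(l)$ rather than $k(l{+}1)$ cumulants suffice.
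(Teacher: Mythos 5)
Your strategy coincides with the paper's own proof: recover $\mathbf{b}_{T^i,I}$ and $\mathbf{b}_{Y,I}$ from second-order cumulants, regress out $I$ so that each pair $[T^i-\mathbf{b}_{T^i,I}I,\,Y-\mathbf{b}_{Y,I}I]$ becomes an instance of the two-variable model of \cref{fig:2:latent} with unchanged causal effect (the paper's \cref{lem:rem:edge:iv}), apply \cref{thm:schkoda} to each pair to obtain finite candidate sets, and use the relation $\mathbf{b}_{Y,I}=\sum_i\mathbf{b}_{T^i,I}\mathbf{b}_{Y,T^i}$ of \eqref{eq:lin:iv} as the selector.

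The gap is in your final step. You assert that, for a candidate tuple differing from the truth on a nonempty set $S$, the relation $\sum_{i\in S}\beta_i\bigl(\mathbf{b}'_{Y,L(i)}/\mathbf{b}'_{T^i,L(i)}-\alpha_i\bigr)=0$ ``cuts out a measure-zero subset'' of the parameter space. That is only true if this expression, viewed (after clearing denominators) as a polynomial in the free parameters of the model, is not identically zero --- and this is precisely what must be proved rather than invoked. The quantities $\beta_i,\alpha_i,\mathbf{b}'_{Y,L(i)},\mathbf{b}'_{T^i,L(i)}$ are not independent coordinates: they are all polynomials in the edge coefficients $\mathbf{a}$, so algebraic relations among them can hold identically on $\R^\GG$. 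Indeed, the selector evaluated at the \emph{true} tuple vanishes identically, which shows that identical vanishing of such expressions is a real possibility here, not a pathology one may wave away. The paper's proof devotes its entire second half to exactly this point: it passes to the ring $\R[\GG_{\mathbf{A}}]$ via \cref{lem:isomorphism}, expands each mixing-matrix entry as a sum of path monomials using \cref{app:lem:inv}, and observes that whenever some coordinate is swapped the degree-two monomial $\mathbf{a}_{I,T^i}\mathbf{a}_{T^i,Y}$ occurs exactly once among the summands, so it cannot cancel; hence the relation is a non-zero element of $\R[\GG_{\mathbf{A}}]$, and only then does \cref{lem:okamoto} deliver the measure-zero conclusion. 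You correctly flag this step as ``the crux,'' but what you offer in its place --- that the instrument constraint rules out swapping $N_{T^i}$ with a confounder --- is a restatement of the desired conclusion, not an argument. Supplying the non-vanishing certificate (the path-monomial count, or an equivalent computation) is what is needed to complete the proof; everything before that point is sound and matches the paper.
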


The proof of the above result can be found in \cref{app:proofs}. In the next example, we outline the identification strategy for the graph in \cref{fig:IV}. 

\begin{example}[Identification equations for the graph in \cref{fig:IV}]
First, compute $\mathbf{b}_{T^i,I} = \mathbf{c}^{2}_{T^i, I}/\mathbf{c}^{2}_{I, I}$ and $\mathbf{b}_{Y,I} = \mathbf{c}^{2}_{Y, I}/\mathbf{c}^{2}_{I, I}$. Then, consider the vector $$\mathbf{V}^I := [T^1 - \mathbf{b}_{T^1, I}I, T_2 - \mathbf{b}_{T_2, I}I, Y - \mathbf{b}_{Y, I}I].$$

The vector of causal effects $[\mathbf{b}_{Y, T^1}, \mathbf{b}_{Y, T_2}]$ is the unique solution to the following polynomial system:
\begin{align*}
&\mathbf{b}_{Y, T^1} ^2 \left(\mathbf{c}(\mathbf{V}^I)_{1,1,1,3} \mathbf{c}(\mathbf{V}^I)_{1,1,3}-\mathbf{c}(\mathbf{V}^I)_{1,1,3,3} \mathbf{c}(\mathbf{V}^I)_{1,1,1}\right)\\
+&\mathbf{b}_{Y, T^1}  \left(\mathbf{c}(\mathbf{V}^I)_{1,3,3,3} \mathbf{c}(\mathbf{V}^I)_{1,1,1}-\mathbf{c}(\mathbf{V}^I)_{1,1,1,3} \mathbf{c}(\mathbf{V}^I)_{1,3,3}\right)\\
-&\left(\mathbf{c}(\mathbf{V}^I)_{1,3,3,3} \mathbf{c}(\mathbf{V}^I)_{1,1,3}+\mathbf{c}(\mathbf{V}^I)_{1,1,3,3} \mathbf{c}(\mathbf{V}^I)_{1,3,3}\right) = 0,\\\\
&\mathbf{b}_{Y, T_2} ^2 \left(\mathbf{c}(\mathbf{V}^I)_{2,2,2,3} \mathbf{c}(\mathbf{V}^I)_{2,2,3}-\mathbf{c}(\mathbf{V}^I)_{2,2,3,3} \mathbf{c}(\mathbf{V}^I)_{2,2,2}\right)\\
+&\mathbf{b}_{Y, T_2}  \left(\mathbf{c}(\mathbf{V}^I)_{2,3,3,3} \mathbf{c}(\mathbf{V}^I)_{2,2,2}-\mathbf{c}(\mathbf{V}^I)_{2,2,2,3} \mathbf{c}(\mathbf{V}^I)_{2,3,3}\right)\\
-&\left(\mathbf{c}(\mathbf{V}^I)_{2,3,3,3} \mathbf{c}(\mathbf{V}^I)_{2,2,3}+\mathbf{c}(\mathbf{V}^I)_{2,2,3,3} \mathbf{c}(\mathbf{V}^I)_{2,3,3}\right) = 0,\\\\
&\mathbf{b}_{Y, I} - \mathbf{b}_{T^1, I}\mathbf{b}_{Y,T^1} - \mathbf{b}_{T_2, I}\mathbf{b}_{Y,T_2}= 0,
\end{align*}
where the first two equations are instances of \eqref{eq:sckoda:effect}, and the last equation can be derived by directly applying \cref{app:lem:inv} to the graph in \cref{fig:IV}.
\end{example}

\begin{remark}[Multiple instruments]
\label{rem:mul:instruments}
For simplicity of notation, we stated the theorem in the most challenging context of a single instrumental variable. However, the result readily extends to cases with multiple valid instruments $I$, as long as each treatment is associated with at least one valid instrument. See \cref{app:rem:mul:instruments} in the appendix for details on adapting the identification strategy to multiple instruments.
\end{remark}

\section{Estimation}
\label{sec:estimation}
In this section, we explain how to develop estimation techniques based on the identifiability results from the previous section. We assume access to an i.i.d sample $\mathbf{V}_n\in\R^{n\times p_o}$ drawn from the distribution of a random vector $\mathbf{V}_o\in\MM(\GG)$ for a fixed graph $\GG$. All algorithms will process unbiased estimates of the corresponding population cumulants, i.e., k-statistics \citep[\S4.2]{mccullagh:1987}.

\begin{algorithm}[ht]
    \caption{Proxy Variable (\cref{fig:cm})}
    \textbf{INPUT:} Data $\mathbf{V}_n=[Z_n, T_n, Y_n]$, bound on the number of latent variables $l$.
    \label{alg:proxy}
    \begin{algorithmic}[1]
    \STATE $\mathbf{b}^{ZT}_n\gets$ roots of $p_{[Z_n, T_n], l-1}(\mathbf{b}) = 0$ \color{gray}\COMMENT{\eqref{eq:cm:roots}}\color{black}
    \STATE $\mathbf{b}^{ZT}_{n,0}\gets[0, \mathbf{b}^{ZT}_n]$
    \STATE $\mathbf{b}^{ZY}_n\gets$ roots of $p_{[Z_n, Y_n], l-1}(\mathbf{b}) = 0$ \color{gray}\COMMENT{\eqref{eq:cm:roots}}\color{black}
    \STATE $\mathbf{b}^{ZY}_{n,0}\gets[0, \mathbf{b}^{ZY}_n]$
    \STATE $\mathbf{b}^{TY}_n\gets$ roots of $p_{[T_n, Y_n], l}(\mathbf{b}) = 0$ \color{gray}\COMMENT{\eqref{eq:cm:roots}}\color{black}

    \STATE $\mathbf{c}^{l+1}_{T_n}\gets$ solution to the linear system \\$\mathrm{M}(\mathbf{b}^{ZT}_{n,0}, l+1)\cdot\mathbf{c}^{l+1} = \mathbf{c}^{l+1}_{(1, 2)}([Z_n, T_n])$ \color{gray}\COMMENT{\eqref{eq:schkoda:cum:red}}\color{black}
    \STATE $\mathbf{c}^{l+1}_{Y_n}\gets$ solution to the linear system \\$\mathrm{M}(\mathbf{b}^{ZY}_{n,0}, l+1)\cdot\mathbf{c}^{l+1} = \mathbf{c}^{l+1}_{(1, 2)}([Z_n, Y_n])$ \color{gray}\COMMENT{\eqref{eq:schkoda:cum:red}}\color{black}
    
    \STATE $\sigma_n\gets\argmin_{\sigma\in S_{l+1}}(||\mathbf{c}^{l+1}_{T_n} - \sigma(\mathbf{c}^{l+1}_{Y_n})||_2^2)$ 
    
    \STATE $\mathbf{b}^r_n\gets \mathbf{b}^{ZT}_{n,0}/\sigma_n(\mathbf{b}^{ZY}_{n,0})$ \color{gray}\COMMENT{Under the convention 0/0 = 0.}\color{black}
    
    \STATE $\eta_n\gets\argmin_{\eta\in S_{l+1}}(||\mathbf{b}^{r}_{n} - \eta(\mathbf{b}^{TY}_n)||_2^2)$
    \STATE \textbf{RETURN:} $\mathbf{b}^{TY}_n[\eta_n(1)]$
    \end{algorithmic}
\end{algorithm}

\cref{alg:proxy} outlines the estimation procedure for the causal effect for the graph in \cref{fig:cm}. This algorithm replaces the steps in the proof of \cref{thm:cm} with their respective finite-sample versions. 
Specifically, lines 1 to 5 correspond to \eqref{eq:cm:roots}, where the $l-1$ in lines 1 and 3 results from the fact that, without an edge from $Z$ to $T$, one of the roots of $p_{[Z_n, T_n],l}$ is known to be zero \citep[Thm.~3]{schkoda:2024}. Lines 6 and 7 correspond to \eqref{eq:cm:cum}, and lines 7 and 8 correspond to \eqref{eq:cm:ratios}. In particular, in line 8, we determine the permutation $\sigma \in S_{l+1}$ that minimizes the $\ell_2$ distance between $\mathbf{c}^{l+1}_{T_n}$ and $\sigma(\mathbf{c}^{l+1}_{Y_n})$. This step is necessary because, due to estimation error, we cannot perfectly align the entries of $\mathbf{c}^{l+1}_{T_n}$ and $\mathbf{c}^{l+1}_{Y_n}$. Similarly, in line 9, we identify the permutation $\eta(\mathbf{c}^{l+1}_{Y_n})$ that minimizes the $\ell_2$ distance between $\mathbf{b}^{r}_{n}$ and $\eta(\mathbf{b}^{TY}_n)$. Finally, we return the entry of $\mathbf{b}^{TY}_n$ corresponding to the zero in $\mathbf{b}^{r}_n$.

The algorithms for the other graphs can be found in \cref{app:sec:estimation}. Furthermore, in \cref{alg:proxy:edge:one:latent:optim}, we propose an optimization technique that improves the finite-sample performance for the graph in \cref{fig:proxy} with a single latent variable (as shown in the right panel of \cref{fig:Valid:Proxy}).

\section{Related Work}
There is a substantial body of work on causal effect identification in linear SCMs, with several graphical criteria developed for identification in a fixed causal graph. For Gaussian models, \citet{drton:2011,kumor:2020,barber:2022} provided conditions under which causal effects can be identified solely from the covariance matrix. In the non-Gaussian case, analogous results have been established by \citet{tramontano:2024, tramontano:2024:icml}, with criteria that are both sound and complete but which require access to the full observational distribution. 

Results for the identification of the mixing matrix (i.e., without assuming knowledge of the causal graph) are provided in \citet{salehkaleybar:2020, yang:2022,adams:2021} and in \citet{cai:2023,schkoda:2024,Chen:2024,li:2025}. The former results are based on solving an OICA problem (hence, are not equipped with consistent estimation methods), and the latter results, similar to our approach, rely on explicit cumulant/moment equations. Notably, both \citet{cai:2023} and \citet{Chen:2024} assume specific structural conditions—namely, a \emph{One-Latent-Component structure} and a \emph{homologous surrogate}, respectively—which do not apply to the graphs considered in \cref{subsec:proxy,subsec:iv}.

In the context of proximal causal inference, \citet{kuroki2014measurement} explored two scenarios for determining causal effects: (1) discrete finite variables $Z$ and $L$: It was shown that the causal effect can be identified if $\mathbb{P}(Z|L)$ is known (e.g., from external studies) or an additional proxy variable ($W$) is available and certain conditions on the conditional probabilities of $\mathbb{P}(Y|T, L)$ and $\mathbb{P}(Z, W|T)$ are satisfied.
(2) Linear SCMs: They proved that the causal effect of $T$ on $Y$ is identifiable using two proxy variables. 

Following their work, \citet{miao2018identifying} studied a scenario involving two proxy variables, $Z$ and $W$. Unlike the previous results, they allow $Z$ and $W$ to be parent nodes for $T$ and $Y$, respectively. They found that the causal effect can be identified for discrete finite variables if the matrix $\mathbb{P}(W|Z, T=t)$ is invertible. They also provided analogous (nonparametric) conditions for continuous variables. \citet{shi2020multiply} extended these results, employing a less stringent set of assumptions while still necessitating two proxy variables to identify the causal effect. Later \citet{shuai:2023} considered the setting with one proxy variable and proved that the causal effect is identifiable under the assumption that only the treatment is non-Gaussian, with the other variables being jointly Gaussian.
\citet{cui2020semiparametric} proposed an alternative proximal identification procedure to that of \citet{miao2018identifying}, again under the availability of two proxy variables. 
For lvLiNGAMs, \citet{kivva:2023} gave an explicit moment-based formula for the causal effect when there is no edge from the proxy to the treatment.
For a general introduction to proximal causal inference, see also \citet{tchetgen:2024}.

Instrumental variables were first introduced in \citet[App.~B]{wright:1928} and have since become a fundamental identification strategy in both the social sciences \citep[\S7.1]{cunningham:2021} and epidemiology \citep{didelez:2007}. In linear models, the standard TSLS equations \citep[\S3.2]{AngristPischke:2009} have a unique solution only with at least one instrument per treatment. For cases with fewer instruments, \citet{ailer:23} proposed estimating the causal effect using the minimum norm solution to the TSLS equations, which is always unique but may introduce arbitrary bias. In contrast, \citet{pfister:2022} showed that, under additional sparsity assumptions, causal effects can be identified by adding an $\ell_0$ penalty to the TSLS equations. For lvLiNGAMs, \citet{silva:2017, xie:2022} explored the testable implications of instrumental variables.

\section{Experimental Results\protect\footnote{The code to replicate the experiments can be found at \href{https://github.com/danieletramontano/CEId-from-Moments}{https://github.com/danieletramontano/CEId-from-Moments}.}}
\label{sec:experiments}
This section presents experimental results on synthetic and experimental data for the graphs studied in \cref{sec:main}.

As performance metric, we use the relative absolute error 
$$\text{err}(\mathbf{\hat{b}}_{Y, T}, \mathbf{b}_{Y, T}^*):= \left|\left(\mathbf{\hat{b}}_{Y, T} - \mathbf{b}^*_{Y, T}\right) / \mathbf{b}^*_{Y, T}\right|,$$
 where $\mathbf{b}^*_{Y, T}$ is the true value of causal effect and $\mathbf{\hat{b}}_{Y, T}$ is its estimate. We report the median value of the relative estimation error over 100 random simulations; the filled area on our plots shows the interquartile range
 %first and third quartile 
 of the relative error distribution. Details on the experimental setup and experiments are provided in \cref{app:sec:exp}.

\subsection{Proxy Variable}
\label{subsec:exp:proxy}

\begin{figure}[ht]
    \centering
    \begin{tikzpicture}[scale = 0.7]
            \node[draw, circle, fill=gray!40, inner sep=0pt, minimum size=0.5cm] (z1) at (1,0) {$L_1$};        
            
            \node[draw, circle, inner sep=0pt, minimum size=0.5cm] (T) at (0,-1) {$T$};
            \node[draw, circle, inner sep=0pt, minimum size=0.5cm] (Y) at (2,-1) {$Y$};        
            \node[draw, circle, inner sep=0pt, minimum size=0.5cm] (W) at (-0.5, 0) {$Z$};
            \node at (1, -1.5) {$\GG_1$};
            
            \draw[black, ->] (z1) to (T);
            \draw[black, ->] (z1) to (Y);
            \draw[black, ->] (z1) to (W);
        
            \draw[blue, ->] (T) to (Y);

        \begin{scope}[xshift=3.5cm]
            \node[draw, circle, fill=gray!40, inner sep=0pt, minimum size=0.4cm] (z1) at (1,0.5) {$L_1$};
            \node[draw, circle, fill=gray!40, inner sep=0pt, minimum size=0.4cm] (z2) at (2,0) {$L_2$};

            \node[draw, circle, inner sep=0pt, minimum size=0.5cm] (T) at (0,-1) {$T$};
            \node[draw, circle, inner sep=0pt, minimum size=0.5cm] (Y) at (2,-1) {$Y$};
            
            \node[draw, circle, inner sep=0pt, minimum size=0.5cm] (W) at (-0.75,0) {$Z$};
            \node at (1, -1.5) {$\GG_2$};
            
            \draw[black, ->] (z1) to (T);
            \draw[black, ->] (z1) to (Y);
            \draw[black, ->] (z1) to (W);            
            \draw[black, ->] (z2) to (T);
            \draw[black, ->] (z2) to (Y);
            \draw[black, ->] (z2) to (W);            
            \draw[blue, ->] (T) to (Y);
        \end{scope}

        \begin{scope}[xshift=7.5cm]
            \node[draw, circle, fill=gray!40, inner sep=0pt, minimum size=0.5cm] (z1) at (1,0) {$L_1$};
        
            \node[draw, circle, inner sep=0pt, minimum size=0.5cm] (T) at (0,-1) {$T$};
            \node[draw, circle, inner sep=0pt, minimum size=0.5cm] (Y) at (2,-1) {$Y$};
            \node[draw, circle, inner sep=0pt, minimum size=0.5cm] (W) at (-0.5, 0) {$Z$};
            \node at (1, -1.5) {$\GG_3$};
            
            \draw[black, ->] (z1) to (T);
            \draw[black, ->] (z1) to (Y);
            \draw[black, ->] (z1) to (W);
            
            \draw[blue, ->] (T) to (Y);
            \draw[blue, ->] (W) to (T);
        \end{scope}
    \end{tikzpicture}
    \caption{The causal graphs considered in the experiments.}
    \label{fig:PO settings}
\end{figure}
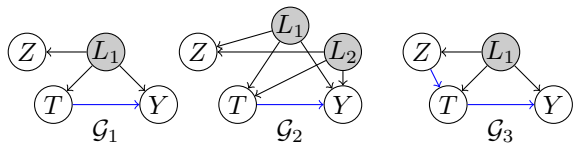

\begin{figure*}[t]
    \centering
    \includegraphics[scale=0.5]{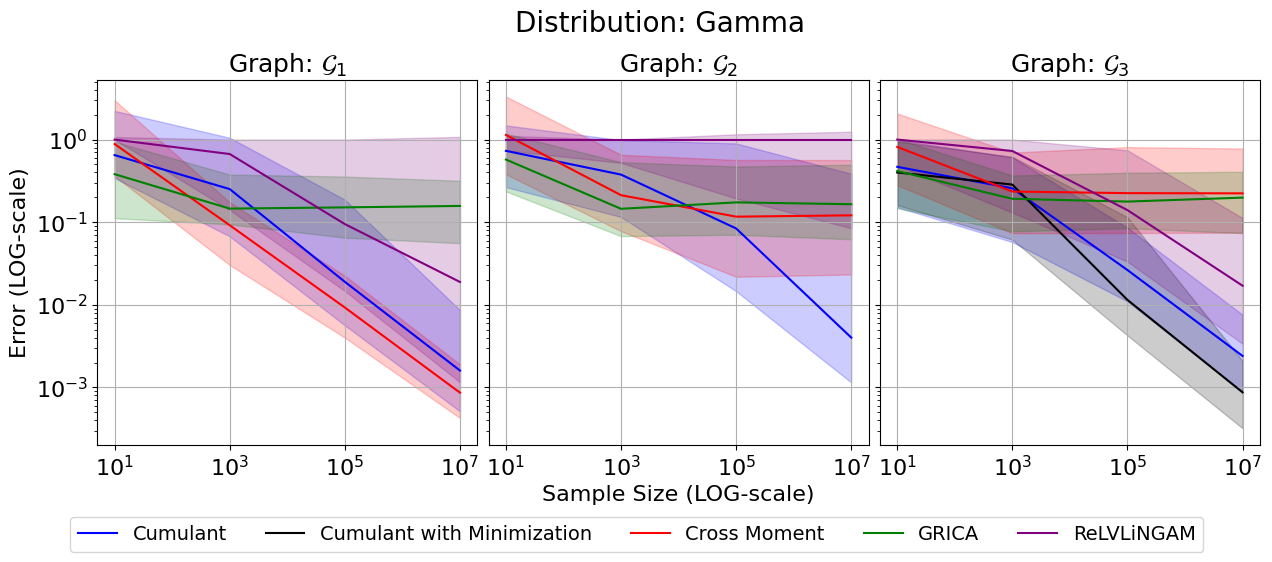}
    \caption{Relative error vs sample size for the graphs in \cref{fig:PO settings}.}
    \label{fig:Valid:Proxy}
\end{figure*}

We begin with experimental results for the proxy variable settings with the causal graphs illustrated in \cref{fig:PO settings}. We compare our method (which we call Cumulant) with the Cross-Moment \citep[Alg.~1]{kivva:2023}, GRICA \citep[\S3.5]{tramontano:2024:icml}, and ReLVLiNGAM \citep{schkoda:2024} algorithms.

As can be seen in \cref{fig:Valid:Proxy} (left), for the graph $\GG_1$, the Cross-Moment algorithm outperforms all other methods. This is expected since it provides a consistent estimate of the causal effect using third-order cumulants if there is no edge from the proxy variable to the treatment. Although the Cumulant method is also consistent, it uses fourth-order cumulants that are more challenging to estimate.

For the graphs $\GG_2$ and $\GG_3$, which include either multiple latent variables or a causal edge from $Z$ to $T$, our proposed method significantly outperforms other approaches (see \cref{fig:Valid:Proxy}, middle and right). Additionally, an experiment involving both multiple latent variables and a causal edge from $Z$ to $T$ is presented in \cref{fig:exp:g4} in the appendix. For the graph $\GG_3$, we also provided the result for the Cumulant method with the minimization technique given in \cref{app:subsec:proxy:estimation}, which improves the performance of the Cumulant method since it reduces the dependency on using the fourth-order cumulants.
Notably, for these graphs, neither the Cross-Moment nor the GRICA algorithm provides a consistent estimator of the true causal effect. This can also be seen from the experiments, as the relative error does not decay as the sample size increases. Furthermore, while the ReLVLiNGAM algorithm produces consistent estimators for the causal effect in graphs $\GG_1$ and $\GG_3$, it performs poorly compared to our method. This results from ReLVLiNGAM performing causal discovery and causal effect estimation simultaneously, increasing its complexity.
\subsection{Underspecified Instrumental Variable}
\label{subsec:exp:instr}
\begin{figure}[t]
    \centering
    \includegraphics[scale=0.5]{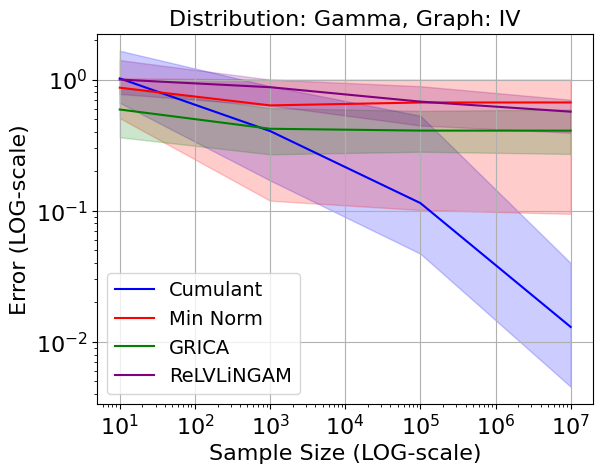}    
\caption{Relative error vs sample size for the graph in \cref{fig:IV}.}
\label{fig:exp:iv}
\end{figure}
 In this part, we provide the experimental results for the underspecified instrumental variable graph depicted in \cref{fig:IV}. We compare our method (Cumulant) with the projection on instrument space proposed in \citet[\S3.1]{ailer:23} (Min Norm), the GRICA, and the ReLViNGAM algorithm.
\cref{fig:exp:iv} shows $$\left(\text{err}(\mathbf{\hat{b}}_{Y, T^1}, \mathbf{b}_{Y, T^1}^*) + \text{err}(\mathbf{\hat{b}}_{Y, T_2}, \mathbf{b}_{Y, T_2}^*)\right)/2$$ against sample size. As can be seen, our method is the only one that consistently estimates the causal effects for the two treatments having access to only one instrument.

\begin{remark}[Small Sample Performance]
From \cref{fig:Valid:Proxy,fig:exp:iv}, one can observe that for small sample sizes, the GRICA method proposed in \citet{tramontano:2024:icml} exhibits superior performance.

One possible explanation is that cumulant-based methods rely on unbiased estimators of high-order cumulants (typically of order 4 or higher), also known as k-statistics. While these estimators are unbiased, they tend to exhibit high variance when the sample size is small.

In contrast, GRICA solves an optimization problem involving the $\ell_1$-norm of the observed data, which generally has lower sample variance. As a result, GRICA may achieve lower mean-squared error in small-sample regimes due to this variance reduction. However, because the GRICA solution is not asymptotically unbiased, it does not yield a consistent estimator, unlike our proposed method, which retains consistency in the asymptotic limit.
\end{remark}

\subsection{Experiments on Real Data}
To assess the practical efficacy of our method, we conduct experiments on the dataset analyzed in \citet{card1993minimum}, which contains information on fast-food restaurants in New Jersey and Pennsylvania in 1992. The dataset includes variables such as minimum wage, product prices, store hours, and other relevant features. The original study aimed to estimate the effect of an increase in New Jersey’s minimum wage—from \$4.25 to \$5.05 per hour—on employment rates. Importantly, the data were collected both before and after the wage increase in New Jersey, while the minimum wage in Pennsylvania remained constant throughout this period.

For our experiments, we adopt the preprocessing procedure from \citet{kivva:2023}. Specifically, we regress the proxy, treatment, and outcome variables on the observed covariates (e.g., product prices, store hours) and then apply our methods on the residuals of these regressions. 
Assuming that the preprocessed data conform to the causal structures encoded by the graphs $\mathcal{G}_1$ and $\mathcal{G}_2$, we estimate the causal effect to be 2.68 and 2.71, respectively. Prior approaches, such as the cross-moment method \citep{kivva:2023} and the Difference-in-Differences method, also yield a point estimate of 2.68.
 In contrast, assuming $\mathcal{G}_3$ as the true graph yields an estimated causal effect of 8.26. Although this still indicates a positive impact of the treatment on the outcome, consistent with prior findings, the magnitude deviates significantly from estimates reported in the literature. A more detailed uncertainty assessment in future work could help clarify the source of this discrepancy.
\section{Conclusion}

We studied causal effect identification and estimation using higher-order cumulants in lvLiNGAM models. We presented novel closed-form solutions for estimating causal effects in the context of proxy variables and underspecified instrumental variable graphs, which cannot be handled with existing methods. Experimental results demonstrate the accuracy and practical utility of our proposed methods.

% Acknowledgements should only appear in the accepted version.
\section*{Acknowledgements}
This project has received funding from the European Research Council (ERC) under the European Union’s Horizon 2020 research and innovation programme (grant agreement No 883818) and supported in part by the SNF project 200021\_204355/1, Causal Reasoning Beyond Markov Equivalencies. DT's PhD scholarship is funded by the IGSSE/TUM-GS via a Technical University of Munich--Imperial College London Joint Academy of Doctoral Studies. MD acknowledges support by the DAAD programme Konrad Zuse Schools of Excellence in Artificial Intelligence,
sponsored by the Federal Ministry of Education and Research.

% \textbf{Do not} include acknowledgements in the initial version of
% the paper submitted for blind review.

% If a paper is accepted, the final camera-ready version can (and
% usually should) include acknowledgements.  Such acknowledgements
% should be placed at the end of the section, in an unnumbered section
% that does not count towards the paper page limit. Typically, this will 
% include thanks to reviewers who gave useful comments, to colleagues 
% who contributed to the ideas, and to funding agencies and corporate 
% sponsors that provided financial support.
\section*{Impact Statement}

% Authors are \textbf{required} to include a statement of the potential 
% broader impact of their work, including its ethical aspects and future 
% societal consequences. This statement should be in an unnumbered 
% section at the end of the paper (co-located with Acknowledgements -- 
% the two may appear in either order, but both must be before References), 
% and does not count toward the paper page limit. In many cases, where 
% the ethical impacts and expected societal implications are those that 
% are well established when advancing the field of Machine Learning, 
% substantial discussion is not required, and a simple statement such 
% as the following will suffice:

This paper presents work whose goal is to advance the field of Machine Learning. There are many potential societal consequences of our work, none of which we feel must be specifically highlighted here.

% The above statement can be used verbatim in such cases, but we 
% encourage authors to think about whether there is content which does 
% warrant further discussion, as this statement will be apparent if the 
% paper is later flagged for ethics review.

% In the unusual situation where you want a paper to appear in the
% references without citing it in the main text, use \nocite

\bibliography{ref}
\bibliographystyle{icml2025}

%%%%%%%%%%%%%%%%%%%%%%%%%%%%%%%%%%%%%%%%%%%%%%%%%%%%%%%%%%%%%%%%%%%%%%%%%%%%%%%
%%%%%%%%%%%%%%%%%%%%%%%%%%%%%%%%%%%%%%%%%%%%%%%%%%%%%%%%%%%%%%%%%%%%%%%%%%%%%%%
% APPENDIX
%%%%%%%%%%%%%%%%%%%%%%%%%%%%%%%%%%%%%%%%%%%%%%%%%%%%%%%%%%%%%%%%%%%%%%%%%%%%%%%
%%%%%%%%%%%%%%%%%%%%%%%%%%%%%%%%%%%%%%%%%%%%%%%%%%%%%%%%%%%%%%%%%%%%%%%%%%%%%%%
\newpage
\appendix
\onecolumn
\section{Notions of Non-Linear Algebra}
\label{app:notation}
In this section, we give the basic definitions of \emph{non-linear} algebra we will need for the proofs; we refer the interested reader to \citet{garcia:2010,cox:2015,michalek:2021} for more details.
\begin{definition}
\label{def:ag:basics}
    For every natural number $n$, we denote the ring of polynomials in $n$ variables $x_1,\dots,x_n$ by $\R[x_1,\dots,x_n]$. Let $S$ be a, possibly infinite, subset of $\R[x_1,\dots,x_n]$. The affine variety associated to it is defined as  $\mathcal{V}(S)=\{x\in\R^n\mid f(x)=0,\,\forall f\in S\}$. The vanishing ideal associated to a variety $\mathcal{V}$ is $\mathcal{I}(\mathcal{V})=\{f\in\R[x_1,\dots,x_n]\mid f(x)=0\,\forall x\in\mathcal{V}\}$. The coordinate ring of $\mathcal{V}$ is defined as $\R[\mathcal{V}]=\R[x_1,\dots,x_n]/\mathcal{I}(\mathcal{V})$.
\end{definition}

\begin{definition}
\label{def:term:order}
A term order $\prec$ on the polynomial ring $\R[\mathbf{x}]$ is a total ordering on the monomials in $\R[\mathbf{x}]$ that is compatible with multiplication and such that $1$ is the smallest monomial; that is, $1 = \mathbf{x}^0 \preceq\mathbf{x}^\mathbf{u}$ for all $\mathbf{u} \in \mathbb{N}^n$ and if $\mathbf{x}^\mathbf{u} \preceq \mathbf{x}^\mathbf{v}$, then $\mathbf{x}^\mathbf{w} \cdot \mathbf{x}^\mathbf{u} \preceq \mathbf{x}^\mathbf{w} \cdot \mathbf{x}^\mathbf{v}$. Since $\prec$ is a total ordering, every polynomial $g \in \R[\mathbf{x}]$ has a well-defined largest monomial. Let $\operatorname{in}_\prec(g)$ be the largest monomial in $g$. For an ideal $I \subseteq \R[\mathbf{x}]$, let 
$\operatorname{in}_\prec(I) = \{\operatorname{in}_\prec(g) : g \in I\};$ this is called the \textit{initial ideal} of $I$. 

Among the most important term orders is the \textit{lexicographic term order}, which can be defined for any permutation of the variables. In the lexicographic term order, we declare $\mathbf{x}^\mathbf{u} \prec \mathbf{x}^\mathbf{v}$ if and only if the left-most nonzero entry of $\mathbf{v} - \mathbf{u}$ is positive.

\textit{Elimination orders} are a generalization of the lexicographic order. These are obtained by splitting the variables into a partition $A \cup B$. In the elimination order, $\mathbf{x}^\mathbf{u} \prec \mathbf{x}^\mathbf{v}$ if $\mathbf{x}^\mathbf{v}$ has a larger degree in the $A$ variables than $\mathbf{x}^\mathbf{u}$. If $\mathbf{x}^\mathbf{v}$ and $\mathbf{x}^\mathbf{u}$ have the same degree in the $A$ variables, then some other term order is used to break ties.
\end{definition}

\begin{definition}
\label{def:groeb}
A finite subset $G \subseteq I$ is called a \textit{Gr\"obner basis} for $I$ with respect to the term order $\prec$ if 
\[
\operatorname{in}_\prec(I) = \{\operatorname{in}_\prec(g) : g \in G\}.
\]
The Gr\"obner basis is called \textit{reduced} if the coefficient of $\operatorname{in}_\prec(g)$ in $g$ is $1$ for all $g$, each $\operatorname{in}_\prec(g)$ is a minimal generator of $\operatorname{in}_\prec(I)$, and no terms besides the initial terms of $G$ belong to $\operatorname{in}_\prec(I)$. 
\end{definition}

\begin{lemma}[\citealp{okamoto:1973}, Lemma]
\label{lem:okamoto}
Let $f(x_1,\dots,x_n)$ be a polynomial in real variables $x_1,\dots,x_n$, which is not identically zero. The set of zeros of the polynomial is a Lebesgue measure zero subset of $\R^n$.
\end{lemma}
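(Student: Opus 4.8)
The plan is to prove the statement by induction on the number of variables $n$, reducing the $n$-variable case to the one-variable case via Tonelli's theorem. Write $\lambda_k$ for $k$-dimensional Lebesgue measure. Before the induction I would record the measurability facts that make the argument go through: the zero set $\mathcal{V}(f)$ is closed, being the preimage of $\{0\}$ under the continuous map $f$, hence Borel; the same holds for each of its axis-parallel slices, so every set and slice appearing below is measurable. For the base case $n=1$, a nonzero univariate polynomial of degree $d$ has at most $d$ real roots, so $\mathcal{V}(f)$ is finite and $\lambda_1(\mathcal{V}(f)) = 0$.

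For the inductive step, suppose the claim holds in $n-1$ variables and let $f \in \R[x_1,\dots,x_n]$ be nonzero. I would expand $f$ as a polynomial in $x_n$ with coefficients in $\R[x_1,\dots,x_{n-1}]$,
\[
f(x_1,\dots,x_n) = \sum_{j=0}^{d} g_j(x_1,\dots,x_{n-1})\, x_n^{\,j},
\]
and fix an index $j$ with $g_j \not\equiv 0$, which exists precisely because $f \not\equiv 0$. By the inductive hypothesis the set $Z := \{(x_1,\dots,x_{n-1}) : g_j(x_1,\dots,x_{n-1}) = 0\}$ has $\lambda_{n-1}$-measure zero. For every point $(x_1,\dots,x_{n-1}) \notin Z$ the univariate polynomial $x_n \mapsto f(x_1,\dots,x_n)$ is not identically zero, since its $x_n^{\,j}$ coefficient is nonzero; so by the base case its zero slice is $\lambda_1$-null. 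Consequently the function $(x_1,\dots,x_{n-1}) \mapsto \lambda_1\bigl(\{x_n : f(x_1,\dots,x_n)=0\}\bigr)$ vanishes off the null set $Z$, and applying Tonelli's theorem to the indicator of $\mathcal{V}(f)$ yields
\[
\lambda_n(\mathcal{V}(f)) = \int_{\R^{n-1}} \lambda_1\bigl(\{x_n : f(x_1,\dots,x_n)=0\}\bigr)\, d\lambda_{n-1} = 0,
\]
which completes the induction.

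The points requiring care here are bookkeeping rather than genuine obstacles: confirming measurability of $\mathcal{V}(f)$ and its slices so that Tonelli applies, and exhibiting a coefficient $g_j \not\equiv 0$ to seed the inductive hypothesis. The first is immediate from continuity of $f$, and the second is immediate from $f \not\equiv 0$, so once the Fubini--Tonelli framework is in place the argument is essentially a two-line induction.
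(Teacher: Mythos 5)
Your proof is correct: the base case, the choice of a nonvanishing coefficient $g_j$ to seed the induction, the measurability of $\mathcal{V}(f)$ and its slices, and the application of Tonelli's theorem are all handled properly. Note that the paper itself gives no proof of this lemma---it is quoted directly from Okamoto (1973)---and your induction-on-$n$ argument via Fubini--Tonelli is precisely the standard proof given in that reference, so there is nothing to reconcile between the two.
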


\begin{lemma}
\label{lem:isomorphism}
    Let $\R^\GG_\mA$ and $\R^\GG$ defines as in \cref{subsec:model}. Then we have $\R^\GG\sim \R^\GG_\mA\sim \R^{|e|}$, where with the symbol $\sim$, we denote an isomorphism of affine varieties, see, e.g., \citet[Def.~6, \S5]{cox:2015} for a definition. Moreover $\R[\GG]$, $\R[\GG_\mA]$, and $\R[a_{i,j}\mid j\to i\in\GG]$ are isomorphic as rings.
\end{lemma}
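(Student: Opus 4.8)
The plan is to exhibit an explicit pair of mutually inverse regular (polynomial) maps between $\R^\GG_\mA$ and $\R^\GG$, and to observe that $\R^\GG_\mA$ is, by definition, the coordinate subspace of $p\times p$ matrices supported on the edge set of $\GG$, hence affinely isomorphic to $\R^{|e|}$ via the projection onto the free entries $\{a_{i,j}: j\to i\in\GG\}$ (here $|e|$ denotes the number of edges of $\GG$). Once both isomorphisms of affine varieties are in place, the three ring statements follow automatically: an isomorphism of affine varieties pulls back regular functions to regular functions, inducing an isomorphism of coordinate rings, and the coordinate ring of $\R^{|e|}$ is precisely the polynomial ring $\R[a_{i,j}\mid j\to i\in\GG]$.

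The forward map is $\Psi\colon \R^\GG_\mA\to\R^{p_o\times p}$, $\mathbf{A}\mapsto\mathbf{B}'=[\mathbf{B}_o,\mathbf{B}_l]$, defined by \eqref{eq:b:can:matrix}; its image is $\R^\GG$ by definition. First I would fix a topological ordering of the observed nodes so that $\mathbf{A}_{o,o}$ is strictly lower triangular, hence nilpotent with $\mathbf{A}_{o,o}^{p_o}=\mathbf{0}$. This turns the matrix inverse in \eqref{eq:b:can:matrix} into a finite Neumann series $(\mathbf{I}-\mathbf{A}_{o,o})^{-1}=\sum_{k=0}^{p_o-1}\mathbf{A}_{o,o}^k$, so that every entry of $\mathbf{B}_o$ and of $\mathbf{B}_l=(\mathbf{I}-\mathbf{A}_{o,o})^{-1}\mathbf{A}_{o,l}$ is a polynomial in the entries of $\mathbf{A}$. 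Thus $\Psi$ is a morphism of affine varieties.

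For the inverse I would invert these formulas: from $\mathbf{B}_o=(\mathbf{I}-\mathbf{A}_{o,o})^{-1}$ one recovers $\mathbf{A}_{o,o}=\mathbf{I}-\mathbf{B}_o^{-1}$ and $\mathbf{A}_{o,l}=\mathbf{B}_o^{-1}\mathbf{B}_l$, with all remaining blocks of $\mathbf{A}$ vanishing since the model is canonical. The same acyclicity argument applies in reverse: on $\R^\GG$ the matrix $\mathbf{B}_o$ is unit lower triangular in the chosen order, so $\mathbf{I}-\mathbf{B}_o$ is nilpotent and $\mathbf{B}_o^{-1}=\sum_{k=0}^{p_o-1}(\mathbf{I}-\mathbf{B}_o)^k$ is polynomial in the entries of $\mathbf{B}_o$. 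Hence the recovery map $\Phi$ is polynomial, and a direct substitution shows $\Phi\circ\Psi=\mathrm{id}$ on $\R^\GG_\mA$ and $\Psi\circ\Phi=\mathrm{id}$ on $\R^\GG$, so $\Psi$ is a bijection onto $\R^\GG$ with regular inverse.

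The main obstacle is foundational rather than computational: to legitimately call this an isomorphism of affine varieties I must certify that $\R^\GG$ is genuinely Zariski closed and that $\Phi$ is regular on it, even though matrix inversion is only rational in general. I would handle this by restricting to the closed affine subspace $U\subseteq\R^{p_o\times p}$ cut out by the linear conditions that $\mathbf{B}_o$ be unit lower triangular; on $U$ the Neumann-series formula for $\mathbf{B}_o^{-1}$ holds identically, so $\Phi$ extends to a bona fide morphism on $U$, and $\R^\GG$ is realized as the closed locus $\{\mathbf{B}'\in U: \Phi(\mathbf{B}')\in\R^\GG_\mA,\ \Psi(\Phi(\mathbf{B}'))=\mathbf{B}'\}$, an intersection of polynomial equations. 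This justifies the two mutually inverse morphisms and completes the chain $\R^\GG\sim\R^\GG_\mA\sim\R^{|e|}$, from which the ring isomorphisms follow by pullback.
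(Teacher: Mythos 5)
Your proposal is correct and follows the same overall strategy as the paper's proof: both reduce $\R^\GG_\mA\sim\R^{|e|}$ to the observation that $\R^\GG_\mA$ is an affine coordinate subspace, and both establish $\R^\GG\sim\R^\GG_\mA$ by showing that the parametrization \eqref{eq:b:can:matrix} is a polynomial map with a polynomial inverse. Two differences are worth noting. First, where you obtain polynomiality of $(\mathbf{I}-\mathbf{A}_{o,o})^{-1}$ and $\mathbf{B}_o^{-1}$ from acyclicity, nilpotency, and a finite Neumann series (the same device as \cref{app:lem:inv}), the paper instead invokes the cofactor formula together with the fact that $\det(\mathbf{I}-\mathbf{A}_{o,o})=1$; the two devices are interchangeable. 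Second, your treatment is more careful on a point the paper glosses over: the paper's proof only verifies that the parametrizing map $\psi$ is a polynomial \emph{bijection} onto $\R^\GG$ and then cites \citet[\S5, Thm.~9]{cox:2015}, but a bijective polynomial map is not in general an isomorphism of affine varieties (consider $t\mapsto(t^2,t^3)$ onto the cuspidal cubic), so exhibiting an explicit polynomial two-sided inverse $\Phi$, as you do, is exactly the missing ingredient that makes the appeal to the coordinate-ring correspondence legitimate. Likewise, your realization of $\R^\GG$ as the closed locus $\{\mathbf{B}'\in U:\Phi(\mathbf{B}')\in\R^\GG_\mA,\ \Psi(\Phi(\mathbf{B}'))=\mathbf{B}'\}$ certifies that the image of $\psi$ is Zariski closed, i.e., that $\R^\GG$ is an affine variety at all, which the paper leaves implicit.
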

\begin{proof}
    The isomorphism $\R^\GG_\mA\sim \R^{|e|}$ comes directly from its definition. Indeed it is easy to see that $\R^\GG_\mA$ is an $|e|$-dimensional linear subspace of $\R^{p\times p}=(a_{i,j})_{i,j\in p\times p}$, defined by the linear equations $a_{i, i}=1$, and $a_{i,j}=0$, $\forall i,j\in\mathcal{V}$ such that $j\to i\notin\GG$.

    To prove the isomorphism $\R^\GG\sim \R^\GG_\mA$, we need to prove that there is a polynomial bijective map between the two spaces. From \eqref{eq:b:can:matrix}, and using 
    $[\mathbf{B}_{o}]_{i,j}=[(\mathbf{A}_{o,o})^{-1}]_{i,j}=(-1)^{i+j}\det([\mathbf{A}_{o,o}]_{\setminus j,\setminus i}),$ where we used that $\det(\mathbf{A}_{o,o})=1$. It is clear that $\R^\GG$ is the image of polynomial map of $\R^\GG_{\mA}$. Let us call this polynomial map $\psi$ and assume $\psi(\mA)=\psi(\tmA)$. Then from the definition of $\psi$ we have $(I-\mA_{o,o})^{-1}=(I-\tmA_{o,o})^{-1}$ that implies $\mA_{o,o}=\tmA_{o,o}$. Moreover, $(I-\mA_{o,o})^{-1}\mA_{o,l}=(I-\tmA_{o,o})^{-1}\tmA_{o,l}$ that implies $\mA_{o,l}=\tmA_{o,l}$ and so $\mA=\tmA$.

    The isomorphisms between the rings come from \citet[\S5, Thm.~9]{cox:2015}.
\end{proof}

\begin{corollary}
    \label{cor:measure:zero}
    Let $f\in\R[\GG]$ be a non-zero polynomial. Then the subset of $\R^\GG$ on which $f$ vanishes is a Lebesgue measure 0 subset of $\R^\GG$.
\end{corollary}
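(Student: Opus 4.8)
The plan is to reduce the statement on the variety $\R^\GG$ to an honest statement on the affine space $\R^{|e|}$, where \cref{lem:okamoto} applies verbatim. The only machinery needed is already in hand: by \cref{lem:isomorphism} there is a polynomial isomorphism of affine varieties $\psi\colon\R^\GG_\mA\to\R^\GG$ with polynomial inverse, and it induces a ring isomorphism $\psi^\ast\colon\R[\GG]\to\R[\GG_\mA]$. I would use $\psi$ to pull the hypersurface $\{f=0\}$ back to a polynomial zero set in coordinates, dispatch that set by Okamoto's lemma, and then push the conclusion forward.

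Concretely, first I would set $g:=\psi^\ast(f)=f\circ\psi\in\R[\GG_\mA]$. Since $\psi^\ast$ is a ring isomorphism, it is injective, so $f\neq 0$ in $\R[\GG]$ forces $g\neq 0$ in $\R[\GG_\mA]$. Next, using the linear identification $\R^\GG_\mA\cong\R^{|e|}$ from \cref{lem:isomorphism} — which merely records the free entries $(a_{i,j})_{j\to i\in\GG}$ — I would regard $g$ as a genuine non-zero polynomial in $|e|$ real variables. Then \cref{lem:okamoto} gives immediately that its zero set $Z(g):=\{a\in\R^\GG_\mA: g(a)=0\}$ is a Lebesgue measure zero subset of $\R^{|e|}$. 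To finish, observe that by construction $f(\psi(a))=g(a)$ for every $a$, so the vanishing locus of $f$ in $\R^\GG$ is exactly $\psi(Z(g))$, equivalently $\psi^{-1}(\{f=0\})=Z(g)$.

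The one point requiring care — and the only genuine obstacle — is the measure-theoretic bookkeeping in this last step: what does ``Lebesgue measure zero in $\R^\GG$'' even mean? Since $\R^\GG$ is a proper subvariety of $\R^{p_o\times p}$ of dimension $|e|$, it has no mass for ambient Lebesgue measure, so the phrase can only be read relative to the parametrization, i.e.\ a subset of $\R^\GG$ is declared measure zero precisely when its $\psi$-preimage is Lebesgue measure zero in $\R^\GG_\mA\cong\R^{|e|}$. With that convention fixed, the transport is purely formal: $\psi$ is a polynomial bijection with polynomial inverse, hence a homeomorphism preserving the measure-zero ideal, and the claim reduces to the already-established fact that $\psi^{-1}(\{f=0\})=Z(g)$ is null. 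I expect the algebraic steps to be routine; essentially all the care lies in pinning down the correct reference measure on $\R^\GG$.
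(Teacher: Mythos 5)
Your proposal is correct and follows exactly the paper's own route: the paper's proof is the one-line observation that, thanks to the isomorphism in \cref{lem:isomorphism}, one may apply \cref{lem:okamoto} to $\R^\GG$, which is precisely your pull-back-via-$\psi$, apply-Okamoto, push-forward argument spelled out in full. Your additional care in fixing the reference measure on $\R^\GG$ (declaring a subset null when its $\psi$-preimage is Lebesgue-null in $\R^{|e|}$) makes explicit a convention the paper leaves implicit, and is a welcome clarification rather than a deviation.
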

\begin{proof}
    Thanks to the isomorphism in \cref{lem:isomorphism}, we can apply \cref{lem:okamoto} to $\R^\GG$.
\end{proof}

\begin{definition}
    Let $\pi\in\PP(j,i)$. The path monomial associated to it is defined as $$a^\pi=a_{i_1,i_2}\cdot\dots\cdot a_{i_{k},i_{k+1}}\in\R[\GG_{\mA}].$$
\end{definition}

    \begin{lemma}
    \label{app:lem:inv}
    Let $\mathbf{A}$ defined as in \eqref{eq:sem:1}. We have
    \begin{align*}
        \mathbf{B}=(I-\mathbf{A})^{-1}&=\displaystyle\sum_{i=0}^\infty\mathbf{A}^i=I+\mathbf{A}+\mathbf{A}^2+\dots+\mathbf{A}^p,\\
        \mathbf{b}_{i,j}&=\sum_{P\in\mathcal{P}(i,j)}a^P.
    \end{align*}        
        In particular $\mathbf{b}_{i,j}=0\in\R[\GG_\mA]$ if and only if $P\in\mathcal{P}(i,j)=\emptyset$.
    \end{lemma}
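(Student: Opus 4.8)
\textbf{Proof plan for \cref{app:lem:inv}.}

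The plan is to establish the two displayed identities separately and then read off the characterization of when $\mathbf{b}_{i,j}$ vanishes. First I would prove the Neumann-series expansion $\mathbf{B}=(\id-\mathbf{A})^{-1}=\sum_{i\ge 0}\mathbf{A}^i$. The key structural fact is that $\mathbf{A}$ is the weighted adjacency matrix of a DAG, so after relabeling the nodes according to a topological order, $\mathbf{A}$ is strictly lower triangular (equivalently, nilpotent). Hence $\mathbf{A}^{p}=\mathbf{0}$, which simultaneously guarantees that $\id-\mathbf{A}$ is invertible and truncates the formal geometric series to a finite sum $\id+\mathbf{A}+\dots+\mathbf{A}^{p-1}$. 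I would verify the identity algebraically by checking that $(\id-\mathbf{A})\sum_{i=0}^{p-1}\mathbf{A}^i=\id-\mathbf{A}^{p}=\id$, which is a telescoping computation; the only care needed is that the stated upper limit $p$ in the display is harmless since $\mathbf{A}^p=\mathbf{0}$ anyway.

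The second step is the path-sum formula $\mathbf{b}_{i,j}=\sum_{P\in\mathcal{P}(i,j)}a^P$. The natural approach is to interpret the $(i,j)$ entry of $\mathbf{A}^r$ combinatorially: by the definition of matrix multiplication, $(\mathbf{A}^r)_{i,j}=\sum a_{i,k_1}a_{k_1,k_2}\cdots a_{k_{r-1},j}$, where the sum ranges over all walks of length $r$ from $j$ to $i$ in the graph (using the convention that $\mathbf{a}_{s,t}=0$ unless $t\to s\in E$, so only genuine edge sequences contribute). Each such nonzero term is precisely a path monomial $a^P$ for a path $P\in\mathcal{P}(i,j)$ of length $r$. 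Summing over all $r$ via the Neumann series, and noting that in a DAG every walk is automatically a path (no repeated vertices, since a repeat would force a cycle), the contributions collect into the single sum over all directed paths from $j$ to $i$. I would emphasize the DAG property here: acyclicity is what lets me identify walks with paths without over- or under-counting.

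Finally, the last assertion is immediate from the path-sum formula, but it is a statement about equality \emph{in the polynomial ring} $\R[\GG_\mathbf{A}]$ rather than a generic numerical statement. Since the $a_{i,j}$ are treated as indeterminates and distinct paths give distinct (and nonzero) monomials, the sum $\sum_{P\in\mathcal{P}(i,j)}a^P$ is the zero polynomial if and only if the index set $\mathcal{P}(i,j)$ is empty, i.e.\ there is no directed path from $j$ to $i$. I expect the main subtlety to be exactly this bookkeeping step: being careful that I am asserting a formal polynomial identity, so that no cancellation between path monomials can occur (as might happen at a special numerical choice of the $a_{i,j}$), and that the direction convention between $\mathbf{a}_{j,i}$ in the SCM and the paths in $\mathcal{P}(i,j)$ is applied consistently. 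The nilpotency argument and the walk-versus-path identification are routine once the topological ordering is fixed; the place to be precise is the translation between the combinatorics of paths and the algebra of the indeterminate entries.
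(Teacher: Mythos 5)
Your proof is correct; note that the paper itself states \cref{app:lem:inv} without proof, treating it as the classical path-analysis fact for linear SEMs on DAGs, and your argument (nilpotency of $\mathbf{A}$ via a topological order, the walk-sum interpretation of $(\mathbf{A}^r)_{i,j}$, identification of walks with paths by acyclicity, and the observation that distinct paths yield distinct squarefree monomials so no cancellation can occur in $\R[\GG_\mA]$) is exactly the standard argument the authors implicitly rely on. In particular, your attention to the two subtleties---that the final claim is a formal identity of polynomials, and that the convention $\mathbf{a}_{j,i}\neq 0$ only if $i\to j\in E$ reverses path direction relative to the matrix indices---is warranted, since the paper's own notation ($\PP(j,i)$ in the definition of the path monomial versus $\PP(i,j)$ in the lemma) is inconsistent on precisely this point.
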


\section{Additional Proofs}
\label{app:proofs}
\begin{remark}
\label{app:rem:poly}
    The polynomial $p_{\mathbf{V}_o, l}(\mathbf{b})$ mentioned in \cref{thm:schkoda}, can be obtained as the determinant of an $l+2\times l+2$ minor the following matrix containing the first row
    \begin{equation*}
        \left[
        \begin{array}{cccc}
        1&\mathbf{b}&\cdots&\mathbf{b}^{l+2}\\\\
        \hdashline\\        
        \mathbf{c}^{l+2}(\mathbf{V}_o)_{1,\dots,1}&\mathbf{c}^{l+2}(\mathbf{V}_o)_{1,\dots,1,2}&\cdots&\mathbf{c}^{l+2}(\mathbf{V}_o)_{1,2,\dots,2}\\\\
        \hdashline\\
        \mathbf{c}^{l+3}(\mathbf{V}_o)_{1,1,\dots,1}&\mathbf{c}^{l+3}(\mathbf{V}_o)_{1,1,\dots,1,2}&\cdots&\mathbf{c}^{l+3}(\mathbf{V}_o)_{1,1,2,\dots,2}\\
        \mathbf{c}^{l+3}(\mathbf{V}_o)_{2,1,\dots,1}&\mathbf{c}^{l+3}(\mathbf{V}_o)_{2,1,\dots,1,2}&\cdots&\mathbf{c}^{l+3}(\mathbf{V}_o)_{2,1,2,\dots,2}\\\\
        \hdashline\\
        \vdots&\vdots&\ddots&\vdots\\\\
        \hdashline\\        
        \mathbf{c}^{k(l)}(\mathbf{V}_o)_{1,\dots,1,1,1,\dots,1,1}&\mathbf{c}^{k(l)}(\mathbf{V}_o)_{1,\dots,1,1,1,\dots,1,2}&\cdots&\mathbf{c}^{k(l)}(\mathbf{V}_o)_{1,\dots,1,1,2,\dots,2,2}\\
        \vdots&\vdots&\ddots&\vdots\\
        \mathbf{c}^{k(l)}(\mathbf{V}_o)_{2,\dots,2,1,1,\dots,1,1}&\mathbf{c}^{k(l)}(\mathbf{V}_o)_{2,\dots,2,1,1,\dots,1,2}&\cdots&\mathbf{c}^{k(l)}(\mathbf{V}_o)_{2,\dots,2,1,2,\dots,2,2}
        \end{array}
    \right].
    \end{equation*}
    The proof of this fact can be found in \citet[Thm.~4]{schkoda:2024}.
\end{remark}

\begin{lemma}
\label{lem:rem:edge}
    Let $\mathbf{V}_o = [Z, T, Y]$ be a vector generated from a lvLiNGAM model compatible with the graph in \cref{fig:proxy}, and let $\mathbf{b}$ be either equal to $[\mathbf{b}_{T, Z}, \mathbf{b}_{Y, Z}]$ or to $[\mathbf{b}_{T, L_i}, \mathbf{b}_{Y, L_i}]$ for some $i\in[l]$. Then, the triple    
    \begin{equation*}    
        \mathbf{V}^{\mathbf{b}}:=[Z, T - \mathbf{b}_1Z, Y - \mathbf{b}_2Z]
    \end{equation*}
    follows a lvLiNGAM model compatible with the graph in \cref{fig:cm} with the causal effect from $T - \mathbf{b}_1Z$ to $Y - \mathbf{b}_2Z$ being the same as in the original model.
\end{lemma}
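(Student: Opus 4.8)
The plan is to write the observed triple $[Z,T,Y]$ explicitly as a linear mixture of the jointly independent non-Gaussian source noises $N_Z,N_T,N_Y,N_{L_1},\dots,N_{L_l}$ dictated by the canonical SCM of \cref{fig:proxy} (so that $L_j=N_{L_j}$), and then to exhibit $\mathbf{V}^{\mathbf{b}}$ as another such mixture whose mixing matrix has the sparsity pattern of \cref{fig:cm}. The key preliminary observation is that $Z$ is the first child of every latent in the causal order $Z\prec T\prec Y$, so the scaling convention of \cref{rem:scaling} forces $\mathbf{a}_{Z,L_j}=\mathbf{b}_{Z,L_j}=1$ (the only $L_j\to Z$ path is the direct edge). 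Hence $Z=N_Z+\sum_j N_{L_j}$, the coefficient of $N_{L_j}$ in $T$ and in $Y$ equals the total effects $\mathbf{b}_{T,L_j}$ and $\mathbf{b}_{Y,L_j}$, the coefficient of $N_Z$ in $T$ equals $\mathbf{a}_{T,Z}=\mathbf{b}_{T,Z}$, and that in $Y$ equals $\mathbf{b}_{Y,Z}=\mathbf{a}_{T,Z}\mathbf{a}_{Y,T}$. This bookkeeping is carried out once.

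In the first case $\mathbf{b}=[\mathbf{b}_{T,Z},\mathbf{b}_{Y,Z}]$, subtracting $\mathbf{b}_{T,Z}Z$ from $T$ and $\mathbf{b}_{Y,Z}Z$ from $Y$ cancels exactly the terms carrying $N_Z$, using $\mathbf{b}_{Y,Z}=\mathbf{a}_{Y,T}\mathbf{b}_{T,Z}$. What remains is $T'=N_T+\sum_j\mathbf{a}_{T,L_j}N_{L_j}$ and $Y'=\mathbf{a}_{Y,T}N_T+N_Y+\sum_j(\mathbf{a}_{Y,L_j}+\mathbf{a}_{Y,T}\mathbf{a}_{T,L_j})N_{L_j}$, together with $Z'=N_Z+\sum_jN_{L_j}$. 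Reading off supports, $N_Z,N_T,N_Y$ are private to $Z',T',Y'$ and the $N_{L_j}$ are common causes of all three, which is precisely the canonical pattern of \cref{fig:cm}; the coefficient $\mathbf{a}_{Y,T}=\mathbf{b}_{Y,T}$ of $N_T$ in $Y'$ is the $T'\to Y'$ effect.

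The second case $\mathbf{b}=[\mathbf{b}_{T,L_i},\mathbf{b}_{Y,L_i}]$ is the crux, and it is where the scaling convention does the real work. Now the subtraction annihilates the coefficients of $N_{L_i}$ rather than of $N_Z$: the coefficient of $N_{L_i}$ in $T'$ is $\mathbf{b}_{T,L_i}-\mathbf{b}_{T,L_i}\mathbf{a}_{Z,L_i}=\mathbf{b}_{T,L_i}(1-\mathbf{a}_{Z,L_i})$, which vanishes precisely because $\mathbf{a}_{Z,L_i}=1$, and similarly in $Y'$. Thus $N_{L_i}$ appears only in $Z'$, while $N_Z$ appears in all three coordinates. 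The conceptual step is then a relabelling of the sources: promote $N_Z$ to a latent, demote $N_{L_i}$ to the private noise of the proxy $Z'$, and keep the remaining $N_{L_j}$, $j\neq i$, as latents. This leaves a set of $l$ latent sources, and in the new labelling the mixing matrix has $N_{L_i}$ private to $Z'$, $N_T$ private to $T'$ and entering $Y'$ with coefficient $\mathbf{a}_{Y,T}$, and $N_Y$ private to $Y'$, while $N_Z$ and the $N_{L_j}$ confound all three coordinates---exactly the pattern of \cref{fig:cm} with $l$ latents and $T'\to Y'$ effect $\mathbf{a}_{Y,T}=\mathbf{b}_{Y,T}$.

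In either case the sources of $\mathbf{V}^{\mathbf{b}}$ are a relabelling of the original jointly independent, non-degenerate, non-Gaussian noises, so $\mathbf{V}^{\mathbf{b}}$ follows a lvLiNGAM model compatible with \cref{fig:cm}, and the read-off shows the $T\to Y$ effect is preserved. I expect the main obstacle to be the second case: one must recognise that the scaling convention is exactly what cancels the $N_{L_i}$ terms, and then perform the source relabelling that swaps the roles of $N_{L_i}$ and $N_Z$ so that the proxy regains a private noise and the latent count remains $l$.
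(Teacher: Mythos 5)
Your proof is correct and takes essentially the same route as the paper: both write $[Z,T,Y]$ as a mixture of the independent noises under the scaling convention $\mathbf{b}_{Z,L_j}=1$, perform the subtraction, and observe that the chosen column of the resulting mixing matrix becomes $[1,0,0]^\top$, so that source turns into the private noise of $Z$ while the remaining sources exhibit exactly the pattern of \cref{fig:cm} with the coefficient $\mathbf{b}_{Y,T}$ of $\mathbf{N}_T$ in $Y$ untouched. The explicit source-relabelling you spell out in the second case (swapping the roles of $\mathbf{N}_Z$ and $\mathbf{N}_{L_i}$) is precisely what the paper's phrase ``removing the edge from $Z$ to $T$'' leaves implicit.
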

\begin{proof}
    From \eqref{eq:sem:3}, we know that 
    $$
        \mathbf{V}_o = 
            \begin{bmatrix}
                1 & 1 &\cdots&1 &0 & 0\\
                \mathbf{b}_{T, Z} & \mathbf{b}_{T, L_1} & \cdots & \mathbf{b}_{T, L_l} & 1 & 0\\
                \mathbf{b}_{Y, Z} & \mathbf{b}_{Y, L_1} & \cdots & \mathbf{b}_{Y, L_l} & \mathbf{b}_{Y, T} & 1        
            \end{bmatrix}
            \begin{bmatrix}
                \mathbf{N}_Z \\
                \mathbf{N}_{L_1} \\
                \vdots \\
                \mathbf{N}_{L_l} \\
                \mathbf{N}_{T} \\
                \mathbf{N}_{Y} \\
            \end{bmatrix}.
    $$
    From simple linear algebra manipulation, it follows that 
    $$
        \mathbf{V}^\mathbf{b} = 
            \begin{bmatrix}
                1 & 0 & 0\\
                -\mathbf{b}_1 & 1 & 0\\
                -\mathbf{b}_2 & 0 & 1\\        
            \end{bmatrix}\mathbf{V}_o = 
            \begin{bmatrix}
                1 & 1 & \cdots & 1 & 0 & 0 \\
                -\mathbf{b}_1 + \mathbf{b}_{T, Z} & -\mathbf{b}_1 + \mathbf{b}_{T, L_1} & \cdots & -\mathbf{b}_1 + \mathbf{b}_{T, L_l} & 1 & 0 \\
                -\mathbf{b}_2 + \mathbf{b}_{Y, Z} & -\mathbf{b}_2 + \mathbf{b}_{Y, L_1} & \cdots & -\mathbf{b}_2 + \mathbf{b}_{Y, L_l} & \mathbf{b}_{Y, T} & 1 \end{bmatrix}
            \begin{bmatrix}
                \mathbf{N}_Z \\
                \mathbf{N}_{L_1} \\
                \vdots \\
                \mathbf{N}_{L_l} \\
                \mathbf{N}_{T} \\
                \mathbf{N}_{Y} \\
            \end{bmatrix}.
     $$
     By setting $\mathbf{b}$ to be either equal to $[\mathbf{b}_{T, Z}, \mathbf{b}_{Y, Z}]$ or to $[\mathbf{b}_{T, L_i}, \mathbf{b}_{Y, L_i}]$, we set one of the first $l+1$ columns of the mixing matrix corresponding to $\mathbf{V}^{\mathbf{b}}$ to $[1, 0, 0]$, hence removing the edge from $Z$ to $T$.
\end{proof}
\begin{lemma}
\label{app:lem:proxy:ratio}
    Let $\mathbf{V}_o = [Z, T, Y]$ be a vector generated from a lvLiNGAM model compatible with the graph in \cref{fig:proxy} with one latent variable, and let 
    $q_{c^{2}(\mathbf{V}_o)}(\mathbf{b})$ be the following univariate polynomial
    \begin{equation}
    \label{app:eq:ratio}
        q_{c^{2}(\mathbf{V}_o)}(\mathbf{b}) := \frac{\mathbf{c}^2(\mathbf{V}_{o})_{T, Y} - \mathbf{b}\cdot\mathbf{c}^2(\mathbf{V}_{o})_{Z, Y}}{\mathbf{c}^2(\mathbf{V}_{o})_{T, T} - \mathbf{b}\cdot\mathbf{c}^2(\mathbf{V}_{o})_{Z, T}}.
        \end{equation}
    Then, we have $q_{c^{2}(\mathbf{V}_o)}(\mathbf{b}_{T, L_1}) = \mathbf{b}_{Y, T}$. 
\end{lemma}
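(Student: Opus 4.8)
The plan is to reduce the claimed identity to two purely structural facts about the graph in \cref{fig:proxy} with $l=1$ and then invoke the independence of the exogenous noises. First I would write the observed vector in mixing-matrix form, exactly as in the proof of \cref{lem:rem:edge}: with the scaling convention of \cref{rem:scaling} (so that the $\mathbf{N}_{L_1}$-coefficient of $Z$ equals $1$),
\begin{align*}
Z &= \mathbf{N}_Z + \mathbf{N}_{L_1},\\
T &= \mathbf{b}_{T,Z}\mathbf{N}_Z + \mathbf{b}_{T,L_1}\mathbf{N}_{L_1} + \mathbf{N}_T,\\
Y &= \mathbf{b}_{Y,Z}\mathbf{N}_Z + \mathbf{b}_{Y,L_1}\mathbf{N}_{L_1} + \mathbf{b}_{Y,T}\mathbf{N}_T + \mathbf{N}_Y,
\end{align*}
where $\mathbf{N}_Z,\mathbf{N}_{L_1},\mathbf{N}_T,\mathbf{N}_Y$ are jointly independent. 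Since $\mathbf{c}^2$ is the covariance, the numerator and denominator of $q_{c^2(\mathbf{V}_o)}(\mathbf{b})$ equal $\mathrm{Cov}(T-\mathbf{b}Z,\,Y)$ and $\mathrm{Cov}(T-\mathbf{b}Z,\,T)$ respectively, so the lemma amounts to evaluating the ratio $\mathrm{Cov}(\tilde T, Y)/\mathrm{Cov}(\tilde T, T)$ at $\tilde T := T - \mathbf{b}_{T,L_1}Z$.

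The key observation is that the two relevant residuals decouple into disjoint noise components. Subtracting $\mathbf{b}_{T,L_1}Z$ from $T$ cancels the $\mathbf{N}_{L_1}$-term (here I use that $Z$ has $\mathbf{N}_{L_1}$-coefficient $1$), leaving $\tilde T = (\mathbf{b}_{T,Z}-\mathbf{b}_{T,L_1})\mathbf{N}_Z + \mathbf{N}_T$, a function of $\mathbf{N}_Z,\mathbf{N}_T$ only. Dually, I would establish the path-cancellation identity $\mathbf{b}_{Y,Z} = \mathbf{b}_{T,Z}\mathbf{b}_{Y,T}$: by \cref{app:lem:inv}, $\mathbf{b}_{Y,Z}$ is the sum of path monomials over the directed paths from $Z$ to $Y$, and in \cref{fig:proxy} the unique such path is $Z\to T\to Y$ (the only edge out of $Z$ is $Z\to T$, and the only edge out of $T$ is $T\to Y$). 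Consequently $\epsilon := Y - \mathbf{b}_{Y,T}T = (\mathbf{b}_{Y,L_1}-\mathbf{b}_{Y,T}\mathbf{b}_{T,L_1})\mathbf{N}_{L_1} + \mathbf{N}_Y$ is a function of $\mathbf{N}_{L_1},\mathbf{N}_Y$ only.

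Because $\tilde T$ and $\epsilon$ depend on disjoint subsets of the jointly independent noises, \cref{lem:indep:cum} gives $\mathrm{Cov}(\tilde T,\epsilon)=0$. Writing $Y = \mathbf{b}_{Y,T}T + \epsilon$ then yields
\begin{equation*}
q_{c^2(\mathbf{V}_o)}(\mathbf{b}_{T,L_1}) = \frac{\mathrm{Cov}(\tilde T, Y)}{\mathrm{Cov}(\tilde T, T)} = \mathbf{b}_{Y,T} + \frac{\mathrm{Cov}(\tilde T,\epsilon)}{\mathrm{Cov}(\tilde T, T)} = \mathbf{b}_{Y,T},
\end{equation*}
which is the claim.

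I would close by noting that $\mathrm{Cov}(\tilde T, T) = \mathbf{b}_{T,Z}(\mathbf{b}_{T,Z}-\mathbf{b}_{T,L_1})\,\mathbf{c}^2(\mathbf{N})_{Z,Z} + \mathbf{c}^2(\mathbf{N})_{T,T}$ is a nonzero polynomial in the model parameters (it retains the free term $\mathbf{c}^2(\mathbf{N})_{T,T}$) and hence is nonzero generically by \cref{cor:measure:zero}, so the ratio is well-defined on the relevant parameter set. The only genuinely substantive step is spotting the two cancellations; once the residuals $\tilde T$ and $\epsilon$ are seen to live on disjoint noise coordinates, the vanishing covariance—and thus the whole lemma—is immediate, so I expect no real obstacle beyond verifying the path structure of \cref{fig:proxy} that forces $\mathbf{b}_{Y,Z} = \mathbf{b}_{T,Z}\mathbf{b}_{Y,T}$.
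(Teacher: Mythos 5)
Your proof is correct and takes essentially the same route as the paper: the paper's proof of \cref{app:lem:proxy:ratio} is the one-line ``Direct computation, applying \cref{lem:indep:cum} and \cref{app:lem:inv}'', and your argument is precisely that computation carried out explicitly --- \cref{app:lem:inv} yields the path-cancellation $\mathbf{b}_{Y,Z}=\mathbf{b}_{T,Z}\mathbf{b}_{Y,T}$, and the independence (diagonal covariance) of the exogenous noises makes $\mathrm{Cov}(\tilde T,\epsilon)$ vanish since the two residuals load on disjoint noise coordinates. Your closing observation that the denominator is a nonzero polynomial, hence generically nonvanishing by \cref{cor:measure:zero}, is a useful detail the paper leaves implicit.
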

\begin{proof}
    Direct computation, applying \cref{lem:indep:cum} and \cref{app:lem:inv}.
\end{proof}
\begin{lemma}
\label{lem:rem:edge:iv}
    Let $\mathbf{V}_o = [I, T^1,\dots, T^k, Y]$ be a vector generated from a lvLiNGAM model compatible with an instrumental variable graph. Consider now the variables 
    \begin{align*}
     T^{I,i} = T^i - \mathbf{b}_{T^i, I} I,\quad Y^I = Y - \mathbf{b}_{Y, I} I,
    \end{align*}        
    obtained by regressing out $I$ from $T^i$ and $Y$, respectively.
    
    Each one of the pairs $[T^{I, i}, Y^I]$ can be represented by a lvLiNGAM model with two observed variables and at most $l$ latent confounders, with the causal effect from $T^{I, i}$ to $Y^I$ being the same as in the original distribution. 
\end{lemma}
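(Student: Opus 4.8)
The plan is to mirror the argument of \cref{lem:rem:edge}: I would write the observed vector through its mixing matrix and show that subtracting $\mathbf{b}_{T^i,I}I$ and $\mathbf{b}_{Y,I}I$ annihilates the column of the exogenous noise $N_I$ in the rows for $T^i$ and $Y$, leaving a two-observed-variable lvLiNGAM whose confounders are exactly the common ancestors of $T^i$ and $Y$ other than $I$.

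First I would translate the instrumental-variable assumptions into statements about the mixing matrix of \eqref{eq:sem:3}. Because $I\in\pa(T^i)$ gives $\an(I)\subseteq\an(T^i)$, the condition $\an(I)\cap\an(T^i)\cap\LL=\emptyset$ forces $I$ to have no latent ancestors; treating $I$ as a source, the scalars $\mathbf{b}_{T^i,I}$ and $\mathbf{b}_{Y,I}$ are then the coefficients of $N_I$ in the expansions of $T^i$ and $Y$, and by \cref{app:lem:inv} they equal the corresponding sums of path monomials. The exclusion restriction $I\dsep_{\GG_{\setminus T}}Y$ guarantees that every directed path from $I$ to $Y$ factors through a treatment, which by \cref{app:lem:inv} yields the relation $\mathbf{b}_{Y,I}=\sum_{j}\mathbf{b}_{Y,T^j}\mathbf{b}_{T^j,I}$.

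Next I would substitute the noise expansions into $T^{I,i}=T^i-\mathbf{b}_{T^i,I}I$ and $Y^I=Y-\mathbf{b}_{Y,I}I$. Since the subtracted coefficients are exactly the $N_I$-coefficients, the term $N_I$ cancels in both residuals, precisely as the regressed-out column is sent to zero in \cref{lem:rem:edge}. Expanding $Y$ as $\mathbf{b}_{Y,T^i}T^i$ plus the contributions of the remaining treatments, of the latents, and of $N_Y$, and writing each $T^j=T^{I,j}+\mathbf{b}_{T^j,I}I$, the coefficient of $I$ collected in $Y^I$ equals $\sum_j\mathbf{b}_{Y,T^j}\mathbf{b}_{T^j,I}-\mathbf{b}_{Y,I}$, which vanishes by the exclusion relation of the previous step. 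This leaves $Y^I=\mathbf{b}_{Y,T^i}T^{I,i}+R$, where $R$ collects the contributions of all noises other than $N_{T^i}$ and $N_I$.

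Finally I would read off the two-variable lvLiNGAM structure from $R$. The noises shared by $T^{I,i}$ and $Y^I$ are $N_{T^i}$, which realises the causal edge $T^{I,i}\to Y^I$ with the original coefficient $\mathbf{b}_{Y,T^i}$, together with the noises $N_v$ for $v\in\an(T^i)\cap\an(Y)\setminus\{I\}$, which act as confounders; every noise entering only $Y^I$ (such as $N_Y$, the $N_{T^j}$ with $j\neq i$, and the latents that reach $Y$ but not $T^i$) is jointly independent of the noises of $T^{I,i}$ and can be collapsed into a single idiosyncratic non-Gaussian noise for $Y^I$. Hence the number of confounders is $|\an(T^i)\cap\an(Y)\setminus\{I\}|\le l$, and the causal coefficient $\mathbf{b}_{Y,T^i}$ is preserved, which is the claim. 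I expect the main obstacle to be exactly the cancellation bookkeeping of the third step: one must check that the same subtraction simultaneously removes $N_I$ from both residuals and that no noise other than $N_{T^i}$ is shared in a way that would inflate the confounder count. This is precisely where the absence of latent confounding between $I$ and the treatments and the exclusion restriction are indispensable, since without them the $N_I$ column would survive and reappear as a spurious confounder of the reduced pair.
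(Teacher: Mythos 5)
Your proposal is correct and takes essentially the same route as the paper's proof: write the observed variables in terms of the exogenous noises, note that the IV conditions force $I$ to have no latent ancestors so that $I=\mathbf{N}_I$ and the total effects $\mathbf{b}_{T^i,I},\mathbf{b}_{Y,I}$ are exactly the $\mathbf{N}_I$-coefficients, conclude that the subtraction annihilates the $\mathbf{N}_I$ column, and read off a two-observed-variable lvLiNGAM whose confounders are $\an(T^i)\cap\an(Y)\setminus\{I\}$ and whose causal coefficient is still $\mathbf{b}_{Y,T^i}$ (you are in fact somewhat more careful than the paper, whose displayed mixing matrix silently drops the noises $\mathbf{N}_{T^j}$, $j\neq i$, which, as you note, must be collapsed into the idiosyncratic noise of $Y^I$). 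One inaccuracy in your closing remark: the exclusion restriction is not actually indispensable for this lemma, since the subtracted coefficient $\mathbf{b}_{Y,I}$ equals the $\mathbf{N}_I$-coefficient of $Y$ whether or not $I$ has a direct effect on $Y$, so the $\mathbf{N}_I$ term cancels regardless; the exclusion restriction (through the relation $r_I(\mathbf{b})=0$) is needed only later, in the proof of \cref{thm:iv}, to select the correct roots among the candidate solutions.
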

\begin{proof}
    From \eqref{eq:sem:3}, we know that 
    $$
        \begin{bmatrix}
            I\\
            T^i\\
            Y
        \end{bmatrix}= 
            \begin{bmatrix}
                1 & 0 &\cdots&0 &0 & 0\\
                \mathbf{b}_{T, I} & \mathbf{b}_{T, L_1} & \cdots & \mathbf{b}_{T, L_l} & 1 & 0\\
                \mathbf{b}_{Y, I} & \mathbf{b}_{Y, L_1} & \cdots & \mathbf{b}_{Y, L_l} & \mathbf{b}_{Y, T} & 1        
            \end{bmatrix}
            \begin{bmatrix}
                \mathbf{N}_I \\
                \mathbf{N}_{L_1} \\
                \vdots \\
                \mathbf{N}_{L_l} \\
                \mathbf{N}_{T} \\
                \mathbf{N}_{Y} \\
            \end{bmatrix}.
    $$
    From simple linear algebra manipulation, it follows that 
    \begin{equation*}
        \begin{aligned}
            \begin{bmatrix}
                T^{I, i}\\
                Y^I
            \end{bmatrix} = 
                \begin{bmatrix}                
                    -\mathbf{b}_{T^i, I} & 1 & 0\\
                    -\mathbf{b}_{Y, I} & 0 & 1\\        
                \end{bmatrix}        
                \begin{bmatrix}
                    I\\
                    T^i\\
                    Y
            \end{bmatrix} 
            &= 
                \begin{bmatrix}                
                    0 & -\mathbf{b}_{T^i, I} + \mathbf{b}_{T, L_1} & \cdots & -\mathbf{b}_{T^i, I} + \mathbf{b}_{T, L_l} & 1 & 0 \\
                    0 & -\mathbf{b}_{Y, I} + \mathbf{b}_{Y, L_1} & \cdots & -\mathbf{b}_{Y, I} + \mathbf{b}_{Y, L_l} & \mathbf{b}_{Y, T} & 1 \end{bmatrix}
                \begin{bmatrix}
                    \mathbf{N}_I \\
                    \mathbf{N}_{L_1} \\
                    \vdots \\
                    \mathbf{N}_{L_l} \\
                    \mathbf{N}_{T} \\
                    \mathbf{N}_{Y} \\
                \end{bmatrix} \\
                &= \begin{bmatrix}                
                    -\mathbf{b}_{T^i, I} + \mathbf{b}_{T, L_1} & \cdots & -\mathbf{b}_{T^i, I} + \mathbf{b}_{T, L_l} & 1 & 0 \\
                    -\mathbf{b}_{Y, I} + \mathbf{b}_{Y, L_1} & \cdots & -\mathbf{b}_{Y, I} + \mathbf{b}_{Y, L_l} & \mathbf{b}_{Y, T} & 1 \end{bmatrix}
                \begin{bmatrix}                    
                    \mathbf{N}_{L_1} \\
                    \vdots \\
                    \mathbf{N}_{L_l} \\
                    \mathbf{N}_{T} \\
                    \mathbf{N}_{Y} \\
                \end{bmatrix}.        
        \end{aligned}
    \end{equation*}
    Which is indeed compatible with the graph in \cref{fig:2:latent}.
\end{proof}

\begin{theorem*}
    Let $\GG_{IV}$ be an instrumental variable graph, with instrument $I$, treatments $T^1,\dots, T^k$, and outcome $Y$, 
    and let 
    % $l:=\max_{i\in[n]}{|\pa(T^i)\cap\pa(Y)\cap\mathcal{L}|}.$
    $l:=\max_{i\in[n]}{|\an(T^i)\cap\an(Y)\setminus{I}|}.$
    Then, the causal effect from $T^i$ to $Y$ is generically identifiable from the first $k(l)$ cumulants of the distribution.    
\end{theorem*}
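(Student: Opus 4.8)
The plan is to reduce the multi-treatment instrumental variable problem to repeated applications of the two-observed-variable result of \cref{thm:schkoda}, and then to resolve the resulting root ambiguity using a single linear constraint extracted from the instrument. First I would exploit the validity of $I$ to eliminate it by a second-order regression. Because $\an(I)\cap\an(T^i)\cap\LL=\emptyset$ for every $i$ and $I\dsep_{\GG_{\setminus T}}Y$, the instrument shares no latent confounder with any treatment nor with $Y$ (and there is no direct edge $I\to Y$), so the population regression coefficients $\mathbf{b}_{T^i,I}=\mathbf{c}^{2}_{T^i,I}/\mathbf{c}^{2}_{I,I}$ and $\mathbf{b}_{Y,I}=\mathbf{c}^{2}_{Y,I}/\mathbf{c}^{2}_{I,I}$ coincide with the true total effects of $I$; these are read off from the second-order cumulants alone. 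Forming $T^{I,i}:=T^i-\mathbf{b}_{T^i,I}I$ and $Y^{I}:=Y-\mathbf{b}_{Y,I}I$, \cref{lem:rem:edge:iv} guarantees that each pair $[T^{I,i},Y^{I}]$ follows a two-observed-variable lvLiNGAM with at most $l_i:=|\an(T^i)\cap\an(Y)\setminus\{I\}|\le l$ latent confounders, and that the causal effect from $T^{I,i}$ to $Y^{I}$ equals the target $\mathbf{b}_{Y,T^i}$.

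Next I would apply \cref{thm:schkoda} to each reduced pair. This produces, for every treatment $i$, a polynomial of degree $l_i+1$ whose coefficients are expressed in the first $k(l_i)\le k(l)$ cumulants and among whose roots is $\mathbf{b}_{Y,T^i}$. Hence from the first $k(l)$ cumulants each effect $\mathbf{b}_{Y,T^i}$ is pinned down to a finite candidate set $S_i$ of size $l_i+1$, simultaneously for all treatments. At this point the effects are identified up to a finite ambiguity, and the remaining task is to select the correct entry of each $S_i$.

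To disambiguate I would derive one extra equation tying the $\mathbf{b}_{Y,T^i}$ together. Using \cref{app:lem:inv}, $\mathbf{b}_{Y,I}=\sum_{P\in\mathcal{P}(Y,I)}a^{P}$; since $I\dsep_{\GG_{\setminus T}}Y$ forbids any directed $I$–$Y$ path that avoids the edges $T^i\to Y$, every path in $\mathcal{P}(Y,I)$ terminates with some edge $T^i\to Y$, and grouping the paths by this terminal edge yields the linear relation $\mathbf{b}_{Y,I}=\sum_{i}\mathbf{b}_{T^i,I}\,\mathbf{b}_{Y,T^i}$ (in \cref{fig:IV}, where the direct edge is the only $T^i$–$Y$ path, the direct and total effects coincide, recovering the equation displayed in the example). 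The true tuple $(\mathbf{b}_{Y,T^1},\dots,\mathbf{b}_{Y,T^k})\in S_1\times\cdots\times S_k$ satisfies this relation, and I would argue that generically it is the unique tuple in the candidate product set that does so.

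The main obstacle is precisely this last genericity step: a single linear equation in several unknowns does not isolate one tuple among $\prod_i (l_i+1)$ candidates a priori. The argument I would give is that any spurious tuple $(\hat{\mathbf{b}}_i)\neq(\mathbf{b}_{Y,T^i})$ also satisfying the relation would force $\sum_i\mathbf{b}_{T^i,I}\,(\hat{\mathbf{b}}_i-\mathbf{b}_{Y,T^i})=0$; since the candidate roots and the coefficients $\mathbf{b}_{T^i,I}$ are algebraic functions of the model parameters, after clearing denominators this becomes a nontrivial polynomial condition on the parameter space $\R^{\GG}\times\Diag^{(\le k)}(p)$, so by \cref{lem:okamoto} it fails only on a Lebesgue measure-zero set, establishing generic identifiability. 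The delicate point is certifying that this polynomial is not identically zero, which I would do by exhibiting a single parameter choice where the spurious locus is avoided; this is exactly the feature that separates the single-instrument regime treated here from the classical over-identified case.
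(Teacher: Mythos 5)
Your proposal follows the same route as the paper's proof: identify $\mathbf{b}_{T^i,I}$ and $\mathbf{b}_{Y,I}$ from second-order information, regress out the instrument so that each pair $[T^{I,i},Y^I]$ becomes a two-observed-variable lvLiNGAM (\cref{lem:rem:edge:iv}), apply \cref{thm:schkoda} to pin each $\mathbf{b}_{Y,T^i}$ down to a finite root set, and then use the linear relation $r_I(\mathbf{b})=\mathbf{b}_{Y,I}-\sum_i\mathbf{b}_{T^i,I}\mathbf{b}_{Y,T^i}=0$ together with \cref{lem:okamoto} to select the true tuple generically. However, there is a genuine gap at exactly the step you yourself flag as delicate: you must show that, for every spurious tuple, the polynomial $\sum_i\mathbf{b}_{T^i,I}\bigl(\hat{\mathbf{b}}_i-\mathbf{b}_{Y,T^i}\bigr)$ is not identically zero on the parameter space, and you defer this to ``exhibiting a single parameter choice,'' which you never construct. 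This is not a routine loose end --- it is the entire content of the theorem (everything else is an assembly of cited lemmas), and since the statement covers arbitrary instrumental variable graphs with any number of treatments and latent confounders, a witness point would have to be produced uniformly over this infinite family of graphs and of spurious selection patterns, which in effect forces one back to a structural argument rather than a numerical check.

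The paper closes this gap combinatorially. Writing the entries of the $i$-th root set as $\mathbf{b}_{Y,T^i}+c_{j_i}(\mathbf{b}_{Y,L_{j_i}}-\mathbf{b}_{Y,T^i})$ with $c_{j_i}\in\{0,1\}$, the spurious relation becomes $-\sum_i c_{j_i}\mathbf{b}_{T^i,I}(\mathbf{b}_{Y,L_{j_i}}-\mathbf{b}_{Y,T^i})$; expanding each total effect as a sum of path monomials in the edge coefficients (\cref{lem:isomorphism}, \cref{app:lem:inv}), every summand is a monomial of degree at least two, and whenever some $c_{j_i}=1$ the monomial $\mathbf{a}_{I,T^i}\mathbf{a}_{T^i,Y}$ arises from exactly one product of paths, so it cannot cancel and the polynomial is nonzero in $\R[\GG_{\mathbf{A}}]$. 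Some argument of this kind (or a genuinely uniform witness construction) is what your proof is missing. A secondary, more minor point: your formulas $\mathbf{b}_{T^i,I}=\mathbf{c}^{2}_{T^i,I}/\mathbf{c}^{2}_{I,I}$ are only valid when $I$ shares no \emph{observed} ancestors with the treatments and outcome; the definition of a valid instrument only excludes latent common ancestors, so in general one needs regression/backdoor adjustment as in the paper's proof.
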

\begin{proof}[Proof of \cref{thm:iv}]
    Since $\an(I)\cap\an(T^i)\cap\LL = \an(I)\cap\an(Y)\cap\LL = \emptyset$, we can identify $\mathbf{b}_{T^i, I}$ and $\mathbf{b}_{Y, I}$ from the covariance matrix through backdoor adjustment (\citet[\S3.3]{Pearl:Primer:2016}, \citet[Prop.~1]{henckel:2022}).
    From \citet[\S~3.1]{ailer:23}, we know that the causal effects of interest satisfy the following equation:
    \begin{equation}
    \label{eq:lin:iv}
        r_I(\mathbf{b}) := \mathbf{b}_{Y, I} - \sum_i \mathbf{b}_{T^i, I}\mathbf{b}_{Y,T^i} = 0\in\mathbb{R}[\GG],        
    \end{equation}    
    where $\mathbf{b} = [\mathbf{b}_{Y,T^1},\dots,\mathbf{b}_{Y,T^k}]$.
    Consider now the variables 
    \begin{align}
    \label{eq:iv:reg}
     T^{I, i} = T^i - \mathbf{b}_{T^i, I} I,\quad Y^I = Y - \mathbf{b}_{Y, I} I,
    \end{align}        
    obtained by regressing out $I$ from $T^i$ and $Y$, respectively.
    
    Each one of the pairs $[T^{I, i}, Y^I]$ can be represented by a lvLiNGAM model with two observed variables and at most $l$ latent confounders, with the causal effect from $T^{I, i}$ to $Y^I$ being the same as in the original distribution (\cref{lem:rem:edge:iv}).
    
     Using \cref{thm:schkoda}, we know that the vector \begin{equation}
     \label{eq:iv:roots}
         \mathbf{b}^i:= [\mathbf{b}_{Y, T^i}, \mathbf{b}_{Y, L_1}, \dots,\mathbf{b}_{Y, L_l}]
     \end{equation} can be obtained as roots of a degree $l+1$ polynomial constructed using cumulants up to order $k(l)$ of the observational distribution (up to some permutations). 
     
    Consider the polynomial $r_I(b_1,\dots,b_n)\in\R[b_1,\dots,b_n]$ defined in \eqref{eq:lin:iv}. For every choice of $\mathbf{b}\in\mathbf{b}^1\times\cdots\times\mathbf{b}^n$, $r_I(\mathbf{b})$ defines a different polynomial in $\R[\GG]$. We have already seen, that for $\mathbf{b} = [\mathbf{b}_{Y,T^1},\dots,\mathbf{b}_{Y,T^k}]$ this defines the zero polynomial. To conclude, it is only left to show that 
    \begin{equation}
        \label{eq:iv:res}
        r_I(\mathbf{b})\neq0\in\R[\GG]\qquad    \forall\,\mathbf{b}\in\mathbf{b}^1\times\cdots\times\mathbf{b}^n\setminus\{[\mathbf{b}_{Y,T^1},\dots,\mathbf{b}_{Y,T^k}]\},
    \end{equation}
    the result will follow by applying \cref{lem:okamoto}.
    
    Let us rewrite the entries of $\mathbf{b}^i$ as $\mathbf{b}_{Y,T^i} + c_i(\mathbf{b}_{Y, L_i} -  \mathbf{b}_{Y, T^i})$ for some $c_i\in\{0, 1\}$. This way, we can write $r_I(\mathbf{b})$ as    
    \begin{equation*}
    \begin{aligned}
        \mathbf{b}_{I, Y} - \sum_i \mathbf{b}_{I, T^i}(\mathbf{b}_{Y,T^i} + c_{j_i}(\mathbf{b}_{Y, L_{j_i}} -  \mathbf{b}_{Y, T^i})) =- \sum_i c_{j_i}\mathbf{b}_{I, T^i}(\mathbf{b}_{Y, L_{j_i}} -  \mathbf{b}_{Y, T^i}), 
        % \mathbf{b}_{I, Y} - \sum_i \mathbf{b}_{I, T^i}(\mathbf{b}_{Y,T^i} + \mathbf{c}(\mathbf{V}^I)_{j_i}(\mathbf{b}_{Y, L_{j_i}} -  \mathbf{b}_{Y, T^i})) =- \sum_i \mathbf{c}(\mathbf{V}^I)_{j_i}\mathbf{b}_{I, T^i}(\mathbf{b}_{Y, L_{j_i}} -  \mathbf{b}_{Y, T^i}), 
    \end{aligned}
    \end{equation*}
    using \cref{lem:isomorphism,app:lem:inv} we can rewrite it as 
    \begin{equation*}
    \begin{aligned}
        \sum_i c_{j_i}\left(\sum_{\pi_i\in\PP(I, T^i)}\mathbf{a}^{\pi_i}\right)\left(\sum_{\pi_{j_i}\in\PP(Y, L_{j_i})}\mathbf{a}^{\pi_{j_i}} -  \sum_{\pi_{Y,i}\in\PP(T^i, Y)}\mathbf{a}^{\pi_{Y,T^i}}\right)\in\R[\GG_\mathbf{A}].
    \end{aligned}
    \end{equation*}
    Notice that every summand in the above equation is a monomial of degree at least two. If $c_{j_i} = 1$ for some $i$, then the degree two term $\mathbf{a}_{I,T^i}\mathbf{a}_{T^i, Y}$ appears only once as a summand. This implies that the last equation defines a non-zero polynomial in $\mathbb{R}^\GG_{\mathbf{A}}$, which concludes the proof.
\end{proof}

\begin{remark}[Multiple Instruments]
\label{app:rem:mul:instruments}
For simplicity, we stated and proved the theorem for the case of a single instrumental variable. However, the result naturally extends to scenarios with multiple valid instruments $I$, provided that each treatment has at least one valid instrument.

To adapt the proof, \eqref{eq:iv:res} should be replaced with 
\begin{equation}
\label{app:eq:ivs:res}
    T^{I, i} = T^i - \sum_{j \in \mathcal{I}_i} \mathbf{b}_{T^i, I^j} I^j,\quad 
    Y^I = Y - \sum_{j \in [s]} \mathbf{b}_{Y, I^j} I^j,
\end{equation}
where $\mathcal{I}_i$ is the set of valid instruments for the treatment $T^i$. 

Additionally, the variety $\mathcal{V}_I := \mathcal{V}(r_{I^1}(\mathbf{b}), \dots, r_{I^s}(\mathbf{b}))$ should be used in place of the single polynomial $r_{I}(\mathbf{b})$ in \eqref{eq:iv:res}.
\end{remark}

\section{Estimation}
\label{app:sec:estimation}

%Further, we propose the estimation algorithms of the causal effect for: 1- Proxy variable 
\subsection{Proxy Variable with an Edge from Proxy to Treatment}
\label{app:subsec:proxy:edge}

\begin{algorithm}[ht]
    \caption{Proxy Variable with an Edge from Proxy to Treatment (\cref{fig:proxy})}
    \textbf{INPUT:} Data $\mathbf{V}_n=[Z_n, T_n, Y_n]$, bound on the number of latent variables $l$.
    \label{alg:proxy:edge}
    \begin{algorithmic}[1]
    \STATE $\mathbf{b}^{T}_n\gets$ roots of $p_{[Z_n, T_n], l}(\mathbf{b}) = 0$ \color{gray}\COMMENT{\eqref{eq:proxy:edge:effect}}\color{black}    
    \STATE $\mathbf{b}^{Y}_n\gets$ roots of $p_{[Z_n, Y_n], l}(\mathbf{b}) = 0$ \color{gray}\COMMENT{\eqref{eq:proxy:edge:effect}}\color{black}        

    \STATE $\mathbf{c}^{l+1}_{T_n}\gets$ solution to the linear system $\mathrm{M}(\mathbf{b}^{ZT}_{n}, l+1)\cdot\mathbf{c}^{l+1} = \mathbf{c}^{l+1}_{(1, 2)}([Z_n, T_n])$ \color{gray}\COMMENT{\eqref{eq:schkoda:cum:red}}\color{black}
    \STATE $\mathbf{c}^{l+1}_{Y_n}\gets$ solution to the linear system $\mathrm{M}(\mathbf{b}^{ZY}_{n}, l+1)\cdot\mathbf{c}^{l+1} = \mathbf{c}^{l+1}_{(1, 2)}([Z_n, Y_n])$ \color{gray}\COMMENT{\eqref{eq:schkoda:cum:red}}\color{black}
    \STATE $\sigma_n\gets\argmin_{\sigma\in S_{l+1}}(||\mathbf{c}^{l+1}_{T_n} - \sigma(\mathbf{c}^{l+1}_{Y_n})||_2^2)$ 
    \STATE $r_{\mathrm{min}}\gets\infty$
    \FORALL{$i$ in $[l+1]$}
        \STATE $\hat{\mathbf{b}}_{T,Z}\gets\mathbf{b}^{ZT}_n[i]$
        \STATE $\hat{\mathbf{b}}_{Y,Z}\gets\mathbf{b}^{ZY}_n[\sigma(i)]$
        \STATE $\hat{\mathbf{V}}_n\gets [Z_n, T_n -\hat{\mathbf{b}}_{T,Z}Z_n , Y_n - \hat{\mathbf{b}}_{Y,Z}Z_n]$ \color{gray}\COMMENT{\eqref{eq:reg}}\color{black}
        \STATE $\hat{\mathbf{b}}_{Y,T}\gets\cref{alg:proxy}(\hat{\mathbf{V}}_n, l)$ \color{gray}\COMMENT{\cref{lem:rem:edge}}\color{black}
        \STATE $r\gets |\hat{\mathbf{b}}_{Y,Z}- \hat{\mathbf{b}}_{Y,T}*\hat{\mathbf{b}}_{T,Z}|$
        \IF{$r < r_{\mathrm{min}}$}
            \STATE $r_{\mathrm{min}}\gets r$
            \STATE $\mathbf{b}^n_{Y,T}\gets\hat{\mathbf{b}}_{Y,T}$    
        \ENDIF
    \ENDFOR
    \STATE \textbf{RETURN:} $\mathbf{b}^n_{Y,T}$
    \end{algorithmic}
\end{algorithm}

\cref{alg:proxy:edge} outlines the estimation procedure for causal effect estimation corresponding to the graph in \cref{fig:proxy}. This algorithm replaces the steps in the proof of \cref{th:id proxy} with their respective finite-sample versions.

Specifically, lines 1 and 3 correspond to \eqref{eq:proxy:edge:effect}. Lines 3 to 5 align with \eqref{eq:proxy:edge:pairs}, where the minimization step in line 5 is equivalent to that in line 8 of \cref{alg:proxy} and is further described in \cref{sec:estimation}. The for loop spanning lines 7 to 17 corresponds to applying \cref{alg:proxy} for all possible choices of $\mathbf{b}$ in \eqref{eq:reg}.  

At the population level, any choice of $\mathbf{b}$ results in the correct causal effect. However, in practice, we observed that using the sample version of $[\mathbf{b}_{T, Z}, \mathbf{b}_{Y, Z}]$ yields better performance. Among the pairs in \eqref{eq:proxy:edge:pairs}, $[\mathbf{b}_{T, Z}, \mathbf{b}_{Y, Z}]$ is the only one satisfying the equation 
$
\mathbf{b}[2]- \mathbf{b}_{Y,T}\cdot\mathbf{b}[1]= 0
$ (\cref{app:lem:inv}).
Therefore, we select the estimate derived from the pair that minimizes the sample version of this equation. This explains the steps outlined in lines 13 to 18.

\subsubsection{Proxy Variable with an Edge from Proxy to Treatment with One Latent Variable}
\label{app:subsec:proxy:estimation}
\begin{algorithm}[ht]
    \caption{Proxy Variable with an Edge from Proxy to Treatment with one Latent (\cref{fig:proxy})}
    \textbf{INPUT:} Data $\mathbf{V}_n=[Z_n, T_n, Y_n]$.
    \label{alg:proxy:edge:one:latent}
    \begin{algorithmic}[1]
    \STATE $\mathbf{b}^{T}_n\gets$ roots of $p_{[Z_n, T_n], 1}(\mathbf{b}) = 0$ \color{gray}\COMMENT{\eqref{eq:proxy:edge:effect}}\color{black}    
    \STATE $\mathbf{b}^{Y}_n\gets$ roots of $p_{[Z_n, Y_n], 1}(\mathbf{b}) = 0$ \color{gray}\COMMENT{\eqref{eq:proxy:edge:effect}}\color{black}        

    \STATE $\mathbf{c}^{2}_{T_n}\gets$ solution to the linear system $\mathrm{M}(\mathbf{b}^{ZT}_{n}, 2)\cdot\mathbf{c}^{2} = \mathbf{c}^{2}_{(1, 2)}([Z_n, T_n])$ \color{gray}\COMMENT{\eqref{eq:schkoda:cum:red}}\color{black}
    \STATE $\mathbf{c}^{2}_{Y_n}\gets$ solution to the linear system $\mathrm{M}(\mathbf{b}^{ZY}_{n}, 2)\cdot\mathbf{c}^{2} = \mathbf{c}^{2}_{(1, 2)}([Z_n, Y_n])$ \color{gray}\COMMENT{\eqref{eq:schkoda:cum:red}}\color{black}
    \STATE $\sigma_n\gets\argmin_{\sigma\in S_{2}}(||\mathbf{c}^{2}_{T_n} - \sigma(\mathbf{c}^{2}_{Y_n})||_2^2)$ 
        
    \STATE $\mathbf{b}^1_{T,Z}\gets\mathbf{b}^{ZT}_n[1]$
    \STATE $\mathbf{b}^1_{Y,Z}\gets\mathbf{b}^{ZY}_n[\sigma(1)]$
    \STATE $\mathbf{b}^2_{T,Z}\gets\mathbf{b}^{ZT}_n[2]$
    \STATE $\mathbf{b}^2_{Y,Z}\gets\mathbf{b}^{ZY}_n[\sigma(2)]$
    
    \STATE $\mathbf{b}^1_{Y,T}\gets q_{c^{2}(\mathbf{V}_n)}(\mathbf{b}^2_{T,Z})$ \color{gray}\COMMENT{\eqref{app:eq:ratio}, \cref{app:lem:proxy:ratio}}\color{black}
    \STATE $\mathbf{b}^2_{Y,T}\gets q_{c^{2}(\mathbf{V}_n)}(\mathbf{b}^1_{T,Z})$ \color{gray}\COMMENT{\eqref{app:eq:ratio}, \cref{app:lem:proxy:ratio}}\color{black}
    
    \STATE $r^1\gets |\mathbf{b}^1_{Y,Z}- \mathbf{b}^1_{Y,T}\cdot\mathbf{b}^1_{T,Z}|$
    \STATE $r^2\gets |\mathbf{b}^2_{Y,Z}- \mathbf{b}^2_{Y,T}\cdot\mathbf{b}^2_{T,Z}|$
    \IF{$r^1 < r^2$}
    \STATE \textbf{RETURN:} $\mathbf{b}^1_{Y,T}$
    \ENDIF
    \STATE \textbf{RETURN:} $\mathbf{b}^2_{Y,T}$
    \end{algorithmic}
\end{algorithm}

\begin{algorithm}[ht]
    \caption{Proxy Variable with an Edge from Proxy to Treatment with one Latent with Optimization (\cref{fig:proxy})}
    \textbf{INPUT:} Data $\mathbf{V}_n=[Z_n, T_n, Y_n]$.
    \label{alg:proxy:edge:one:latent:optim}
    \begin{algorithmic}[1]
    \STATE $\mathbf{\hat{b}}_{Y, T}\gets\cref{alg:proxy:edge:one:latent}(\mathbf{V}_n)$
    \STATE $\mathbf{b}^n_{Y, T}\gets\argmin_{\mathbf{b}\in\R}h_{\mathbf{V}_n, \mathbf{\Hat{b}}_{YT}}(\mathbf{b})$\color{gray}\COMMENT{\eqref{app:eq:optim}}\color{black}
    \STATE \textbf{RETURN:} $\mathbf{b}^n_{Y, T}$
    \end{algorithmic}
\end{algorithm}
In this section we present two specialized estimation procedures for the causal effect in \cref{fig:cm} with one latent variable.

First, \cref{alg:proxy:edge:one:latent} is a simplified version of \cref{alg:proxy:edge}, tailored for the case with a single latent confounder. The key distinction between the two procedures lies in how the candidate value for the causal effect is computed: \cref{alg:proxy:edge:one:latent} utilizes \cref{app:lem:proxy:ratio} (lines 10–11 of \cref{alg:proxy:edge:one:latent}), whereas \cref{alg:proxy:edge} relies on \cref{lem:rem:edge} (line 11 of \cref{alg:proxy:edge}).

Next, we introduce an optimization technique that leverages cumulants up to degree three. While \cref{th:id proxy:negative} establishes that the causal effect is not identifiable using second- and third-order cumulants alone, we observe that this procedure often achieves better finite-sample performance when initialized with a reliable starting point, compared to directly applying \cref{alg:proxy:edge:one:latent}.

Let $\mathbf{V}o = [Z, T, Y]$ be a vector generated from a lvLiNGAM model compatible with the graph in \cref{fig:proxy} with one latent variable. The following objective function is used:
\begin{equation}
\label{app:eq:optim}
    h_{\mathbf{V}_o, \mathbf{\Hat{b}}_{YT}}(\mathbf{b}) := \left(\mathbf{b} - \frac{\mathbf{c}^2(\mathbf{V}_{o})_{T, Y} - g(\mathbf{b})\mathbf{c}^2(\mathbf{V}_{o})_{Z, Y}}{\mathbf{c}^2(\mathbf{V}_{o})_{T, T} - g(\mathbf{b})\mathbf{c}^2(\mathbf{V}_{o})_{Z, T}}\right)^2 + \left(\mathbf{b} - \hat{\mathbf{b}}_{YT}\right)^2,
\end{equation}
where 
\begin{equation*}
    g(\mathbf{b}) = \frac{\mathbf{c}^{(3)}_{1, 3, 3}({\mathbf{V}_o^\mathbf{b})\mathbf{c}^{(3)}_{2, 2, 3}}(\mathbf{V}_o^\mathbf{b})}{\mathbf{c}^{(3)}_{1, 1, 3}({\mathbf{V}_o^\mathbf{b})\mathbf{c}^{(3)}_{2, 3, 3}}(\mathbf{V}_o^\mathbf{b})}, \qquad \mathbf{V}_o^{\mathbf{b}} := [Z, T, Y-\mathbf{b}T]
\end{equation*}

Using \cref{lem:multilin:cum}, it can be shown that, if $\mathbf{c}^{(3)}(L_1)_{1,1,1} \neq 0$, then $g(\mathbf{b}_{Y, T}) = \mathbf{b}_{T, L_1}$. As a result, \cref{app:lem:proxy:ratio} guarantees that $\mathbf{b}_{Y, T}$ minimizes the first term in \eqref{app:eq:optim}. The second term in \eqref{app:eq:optim} serves as a regularization term to ensure the solution remains close to the initial estimate.

In practice, we solve the optimization problem using the Python implementation of the BFGS algorithm \citep[\S6.1]{nocedal:20016} provided in \citet{scipy}. The finite-sample version of this optimization process is detailed in \cref{alg:proxy:edge:one:latent:optim}.

\begin{remark}
    If $\mathbf{c}^{(3)}(L_1)_{1,1,1}$ is zero, higher-order cumulants can be used to construct 
$g(\mathbf{b})$. The existence of such a polynomial is guaranteed as long as $L_1$ is non-Gaussian; see, for example, \citet[Thm.~1]{kivva:2023}.
\end{remark}
\subsection{Underspecified Instrumental Variable}

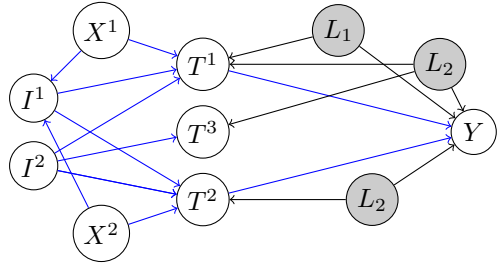
\begin{figure}[t]
        \centering
        \begin{tikzpicture}[scale = 0.9]

        \node[draw, circle, inner sep=2pt, minimum size=0.6cm] (I1) at (-1.5,0.5) {$I^1$};
        \node[draw, circle, inner sep=2pt, minimum size=0.6cm] (I2) at (-1.5,-0.5) {$I^2$};

        \node[draw, circle, inner sep=2pt, minimum size=0.6cm] (X1) at (-0.5,1.5) {$X^1$};

        \node[draw, circle, inner sep=2pt, minimum size=0.6cm] (X2) at (-0.5,-1.5) {$X^2$};
        
        \node[draw, circle, inner sep=2pt, minimum size=0.6cm] (T1) at (1,1) {$T^1$};
        \node[draw, circle, inner sep=2pt, minimum size=0.6cm] (T2) at (1,-1) {$T^2$};
        \node[draw, circle, inner sep=2pt, minimum size=0.6cm] (T3) at (1,0) {$T^3$};

        \node[draw, circle, inner sep=2pt, minimum size=0.6cm] (Y) at (5,0) {$Y$};
        
        \node[draw, circle, fill=gray!40, inner sep=2pt, minimum size=0.6cm] (H1) at (3,1.5) {$L_1$};
        \node[draw, circle, fill=gray!40, inner sep=2pt, minimum size=0.6cm] (H2) at (4.5,1) {$L_2$};
        \node[draw, circle, fill=gray!40, inner sep=2pt, minimum size=0.6cm] (H3) at (3.5,-1) {$L_2$};

        \draw[blue,  ->] (I1) to (T1);
        \draw[blue, ->] (I1) to (T2);
        
        \draw[blue, ->] (I2) to (T2);        
        \draw[blue, ->] (I2) to (T3);

        \draw[blue,  ->] (I2) to (T1);
        \draw[blue, ->] (I2) to (T2);

        \draw[blue,  ->] (X1) to (I1);
        \draw[blue,  ->] (X1) to (T1);
        \draw[blue, ->] (X2) to (I1);
        \draw[blue, ->] (X2) to (T2);

        \draw[blue, ->] (T1) to (Y);
        \draw[blue, ->] (T2) to (Y);
        
        \draw[black, ->] (H1) to (T1);
        \draw[black, ->] (H1) to (Y);
        
        \draw[black, ->] (H2) to (T1);
        \draw[black, ->] (H2) to (T3);
        \draw[black, ->] (H2) to (Y);

        \draw[black, ->] (H3) to (T2);
        \draw[black, ->] (H3) to (Y);
        
        \end{tikzpicture}
        \caption{An example of a causal graph for the underspecified instrumental variable model.}
        \label{app:fig:IV}
    \end{figure}

\begin{algorithm}[ht]
    \caption{Underspecified Instrumental Variables (\cref{app:fig:IV})}
    \textbf{INPUT:} Data $\mathbf{V}_n=[I_n, T^1_n\dots,T^k_n, Y_n, X^1, \dots,X^e]$, the causal graph $\GG$, bound on the number of latent variables $l$.
    \label{alg:iv}
    \begin{algorithmic}[1]
    \STATE $\Sigma_n\gets\mathbf{c}^{(2)}(\mathbf{V}_n)$ \color{gray}\COMMENT{Sample covariance matrix}\color{black}
    \STATE $\mathrm{ad}(I, Y)\gets\an(I)\cap\an(Y)\cap\mathcal{O}$ \color{gray}\COMMENT{Valid adjustment set}\color{black}
    \STATE $\mathbf{b}_{Y, I, n}\gets (\Sigma_n)_{Y, I\mid \mathrm{ad}(I, Y)}/(\Sigma_n)_{I, I\mid\mathrm{ad}(I, Y)}$ \color{gray}\COMMENT{Regression adjustment \citep[Prop.~1]{henckel:2022}}\color{black}
    \STATE $Y^I_n\gets Y_n - \mathbf{b}_{Y, I, n} I_n$ \color{gray}\COMMENT{\eqref{eq:iv:reg}}\color{black}
    \FORALL{$i\in[k]$}
        \STATE $\mathrm{ad}(I, T^i)\gets\an(I)\cap\an(T^i)\cap\mathcal{O}$
        \STATE $\mathbf{b}_{T^i, I, n}\gets (\Sigma_n)_{T^i, I\mid \mathrm{ad}(I, T^i)}/(\Sigma_n)_{I, I\mid\mathrm{ad}(I, Y)}$
        \STATE $T^{I, i}_n\gets T^i_n - \mathbf{b}_{T^i, I, n} I_n$
        \STATE $\mathbf{b}^i_n\gets$ roots of $ p_{[T^{I, i}_n, Y^I_n], l}(\mathbf{b}) = 0$ \color{gray}\COMMENT{\eqref{eq:iv:roots}}\color{black}
    \ENDFOR
    \STATE $r_{\mathrm{min}}\gets\infty$
    \FORALL{$\mathbf{b}_n\in\mathbf{b}^1_n\times\cdots\times\mathbf{b}^k_n$}
        \STATE $r_b\gets|r_I(\mathbf{b}_n)|$ \color{gray}\COMMENT{\eqref{eq:iv:res}}\color{black}
        \IF{$r_b<r_{\mathrm{min}}$}
            \STATE $r_{\mathrm{min}}\gets r_b$
            \STATE $\mathbf{b}^n_{Y,T^1,\dots,T^k}\gets\displaystyle\argmin_{\mathbf{b}\::\: r_I(\mathbf{b}) = 0}||\mathbf{b} - \mathbf{b}_n||_2^2$
        \ENDIF
    \ENDFOR
    \STATE \textbf{RETURN:} $\mathbf{b}^n_{Y,T^1,\dots,T^k}$
    \end{algorithmic}
\end{algorithm}

\cref{alg:iv} outlines the estimation procedure for causal effect estimation corresponding to the graph in \cref{app:fig:IV} with one instrumental variable. This algorithm replaces the steps in the proof of \cref{thm:iv} with their respective finite-sample versions.

Specifically, lines 1 to 9 involve computing the covariance matrix and performing the regression adjustments required to derive the finite-sample versions of the vectors described in \eqref{eq:iv:reg}.

The for loop in lines 11 to 17 evaluates the finite-sample approximation of the polynomial $r_I(\mathbf{b})$ defined in \eqref{eq:lin:iv}. As the estimate of the causal effect, the algorithm selects the projection over the line defined by the equation $r_I(\mathbf{b}) = 0$ of the tuple $\mathbf{b}_n$ that minimizes $r_I(\mathbf{b}_n)$.

\cref{alg:ivs} is an extension of \cref{alg:iv} that accommodates the presence of multiple instruments together. It implements adaptations described in \cref{app:rem:mul:instruments}.

\begin{algorithm}[ht]
    \caption{Underspecified Instrumental Variables with Multiple Instruments (\cref{app:fig:IV})}
    \scalebox{0.97}{\textbf{INPUT:} Data $\mathbf{V}_n=[I^1_n,\dots,I^s_n, T^1_n\dots,T^k_n, Y_n, X^1, \dots,X^e]$, the causal graph $\GG$, bound on the number of latent variables $l$.}
    \label{alg:ivs}
    \begin{algorithmic}[1]
    
    \STATE $\Sigma_n\gets\mathbf{c}^{(2)}(\mathbf{V}_n)$ \color{gray}\COMMENT{Sample covariance matrix}\color{black}    
    
    \FORALL{$j\in[s]$}    
        \STATE $\mathrm{ad}(I^j, Y)\gets\an(I^j)\cap\an(Y)\cap\mathcal{O}$ \color{gray}\COMMENT{Valid adjustment set}\color{black}
        \STATE $\mathbf{b}_{Y, I^j, n}\gets (\Sigma_n)_{Y, I^j\mid \mathrm{ad}(I^j, Y)}/(\Sigma_n)_{I, I^j\mid\mathrm{ad}(I^j, Y)}$ \color{gray}\COMMENT{Regression adjustment \citep[Prop.~1]{henckel:2022}}\color{black}    
    \ENDFOR
    
    \STATE $Y^I_n\gets Y_n - \sum_{j\in[s]}\mathbf{b}_{Y, I^j, n} I^j_n$
    \color{gray}\COMMENT{\eqref{app:eq:ivs:res}}\color{black}
    
    \FORALL{$i\in[k]$}
        \STATE $T^{I, i}_n\gets T^i_n$
        \FORALL{$j\in[s]$}
            \IF{$I_j$ is a valid instrument for $T_k$ in $\GG$}            
            \STATE $\mathrm{ad}(I^j, T^i)\gets\an(I^j)\cap\an(T^i)\cap\mathcal{O}$ 
            \STATE $\mathbf{b}_{T^i, I^j, n}\gets (\Sigma_n)_{T^i, I^j\mid \mathrm{ad}(I^j, T^i)}/(\Sigma_n)_{I, I^j\mid\mathrm{ad}(I^j, T^i)}$
            \STATE $T^{I, i}_n\gets T^i_n - \mathbf{b}_{T^i, I^j, n} I^j_n$ \color{gray}\COMMENT{\eqref{app:eq:ivs:res}}\color{black}
            \ENDIF
        \ENDFOR        
        
        \STATE $\mathbf{b}^i_n\gets$ roots of $ p_{[T^{I, i}_n, Y^I_n], l}(\mathbf{b}) = 0$ \color{gray}\COMMENT{\eqref{eq:iv:roots}}\color{black}
    \ENDFOR
    
    \STATE $d_{\mathrm{min}}\gets\infty$
    
    \FORALL{$\mathbf{b}_n\in\mathbf{b}^1_n\times\cdots\times\mathbf{b}^k_n$}
        \STATE $d_b\gets\displaystyle\min_{\mathbf{b}\in\mathcal{V}_I}||\mathbf{b} - \mathbf{b}_n||_2^2$ \color{gray}\COMMENT{\eqref{eq:iv:res}}\color{black}
        \IF{$d_b<d_{\mathrm{min}}$}
            \STATE $d_{\mathrm{min}}\gets d_b$
            \STATE $\mathbf{b}^n_{Y,T^1,\dots,T^k}\gets\displaystyle\argmin_{\mathbf{b}\in\mathcal{V}_I}||\mathbf{b} - \mathbf{b}_n||_2^2$
        \ENDIF
    \ENDFOR    
    \STATE \textbf{RETURN:} $\mathbf{b}^n_{Y,T^1,\dots,T^k}$
    \end{algorithmic}
\end{algorithm}

\section{Details on the Experimental Setting and Additional Experiments}
\label{app:sec:exp}

All the experiments in this subsection are done on the synthetic data generated according to the specific causal structure established for it. To generate synthetic data, we specify all exogenous noises from the same family of distributions (with parameters sampled according to \cref{tab:synthetic_data}) and select all non-zero entries within the matrix $\mathbf{A}$ through uniform sampling from $[-0.9, -0.5]\cup [0.5, 0.9]$.

\begin{table}[ht]    
    \centering
    \caption{Summary of the experimental setups.}
    \label{tab:synthetic_data}
    \vspace{0.3cm}
    \renewcommand{\arraystretch}{1.5} % Adjust row height
    \begin{tabular}{|c|c|c|c|c|c|}
        \hline
        \textbf{Figure} & \textbf{Causal Graph} & \multicolumn{3}{c|}{\textbf{Distribution}} & \textbf{Parameters of Interest} \\        
        \cline{3-5}
        & & \textbf{Family} & shape & scale &\\
        \hline
        \ref{fig:Valid:Proxy} (left) & $\GG_1$ in \cref{fig:PO settings}& Gamma & $\mathrm{U}(0.1,1)$ & $\mathrm{U}(0.1,0.5)$ & $T\to Y$ \\
        \hline
        \ref{fig:Valid:Proxy} (middle) & $\GG_2$ in \cref{fig:PO settings}& Gamma & $\mathrm{U}(0.1,1)$ & $\mathrm{U}(0.1,0.5)$ & $T\to Y$ \\
        \hline
        \ref{fig:Valid:Proxy} (right) & $\GG_3$ in \cref{fig:PO settings}& Gamma & $\mathrm{U}(0.1,1)$ & $\mathrm{U}(0.1,0.5)$ & $T\to Y$ \\
        \hline
        \ref{fig:exp:g4} (left)& \cref{fig:proxy:two:latents}& Gamma & $\mathrm{U}(0.1,1)$ & $\mathrm{U}(0.1,0.5)$ & $T\to Y$ \\        
        \hline      
        \ref{fig:exp:iv} & \cref{fig:IV} & Gamma & $\mathrm{U}(0.1,1)$ & $\mathrm{U}(0.1,0.5)$ & $T_1\to Y, T_2\to Y$ \\
        \hline         
        \cline{3-5}
        & & \textbf{Family} & alpha & beta &\\
        \hline
        \ref{fig:Valid:Proxy:beta} (left) & $\GG_1$ in \cref{fig:PO settings}& Beta & $\mathrm{U}(1.5,2)$ & $\mathrm{U}(2,10)$ & $T\to Y$ \\
        \hline
        \ref{fig:Valid:Proxy:beta} (middle) & $\GG_2$ in \cref{fig:PO settings}& Beta & $\mathrm{U}(1.5,2)$ & $\mathrm{U}(2,10)$ & $T\to Y$ \\
        \hline
        \ref{fig:Valid:Proxy:beta} (right) & $\GG_3$ in \cref{fig:PO settings}& Beta & $\mathrm{U}(1.5,2)$ & $\mathrm{U}(2,10)$ & $T\to Y$ \\
        \hline
        \ref{fig:exp:g4} (right)& \cref{fig:proxy:two:latents}& Beta & $\mathrm{U}(1.5,2)$ & $\mathrm{U}(2,10)$ & $T\to Y$ \\        
        \hline      
        \ref{fig:exp:iv:beta} & \cref{fig:IV} & Beta & $\mathrm{U}(1.5,2)$ & $\mathrm{U}(2,10)$ & $T_1\to Y, T_2\to Y$ \\
        \hline         
    \end{tabular}
\end{table}

In the figures, we plot the median relative error over 100 independent experiments; the filled area on our plots shows the interquartile range.

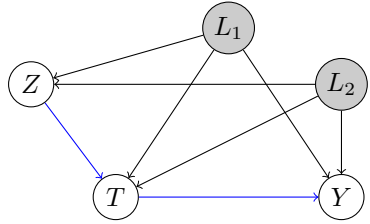
\begin{figure}[t]
    \centering
        \begin{tikzpicture}[scale = 1.5]
            \node[draw, circle, fill=gray!40, inner sep=2pt, minimum size=0.6cm] (z1) at (1,0.5) {$L_1$};
            \node[draw, circle, fill=gray!40, inner sep=2pt, minimum size=0.6cm] (z2) at (2,0) {$L_2$};
            
            \node[draw, circle, inner sep=2pt, minimum size=0.6cm] (T) at (0,-1) {$T$};
            \node[draw, circle, inner sep=2pt, minimum size=0.6cm] (Y) at (2,-1) {$Y$};
            
            \node[draw, circle, inner sep=2pt, minimum size=0.6cm] (W) at (-0.75,0) {$Z$};            
            
            \draw[black, ->] (z1) to (T);
            \draw[black, ->] (z1) to (Y);
            \draw[black, ->] (z1) to (W);            
            \draw[black, ->] (z2) to (T);
            \draw[black, ->] (z2) to (Y);
            \draw[black, ->] (z2) to (W);            
            \draw[blue, ->] (T) to (Y);
            \draw[blue, ->] (W) to (T);            
        \end{tikzpicture}
        \caption{Proxy variable graph with an edge from proxy to treatment and two latent confounders.}
        \label{fig:proxy:two:latents}    
\end{figure}

\begin{figure}[t]
    \centering
    \includegraphics[scale=0.5]{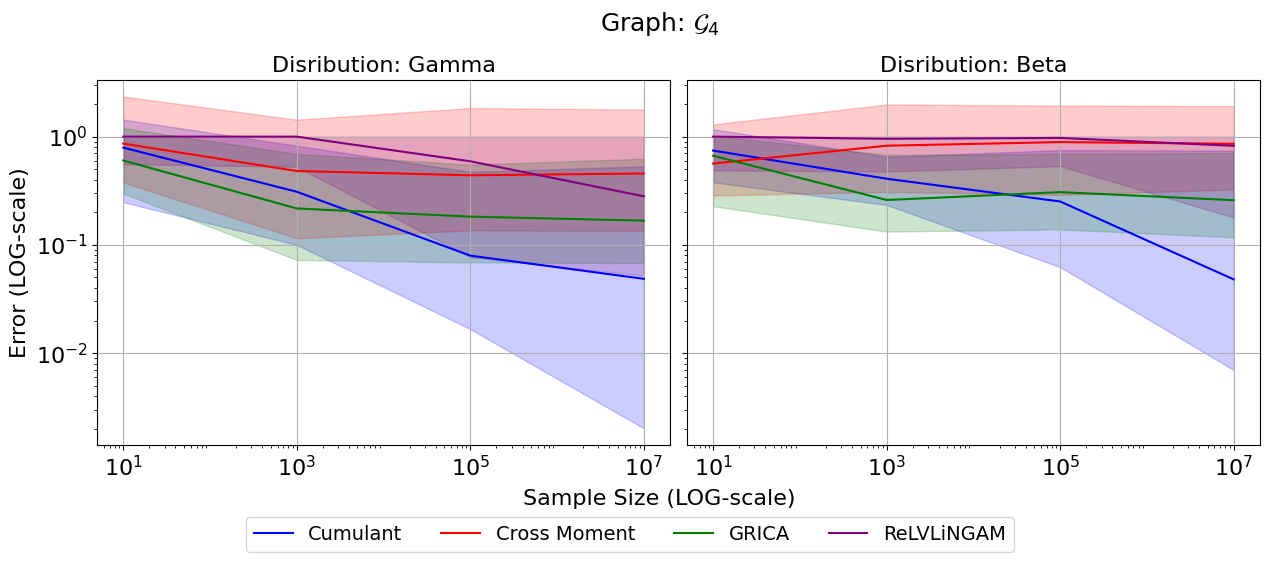}
    \caption{Relative error vs sample size for the graphs in \cref{fig:PO settings}.}
    \label{fig:exp:g4}
\end{figure}

\begin{figure}[t]
    \centering    
    \includegraphics[scale=0.5]{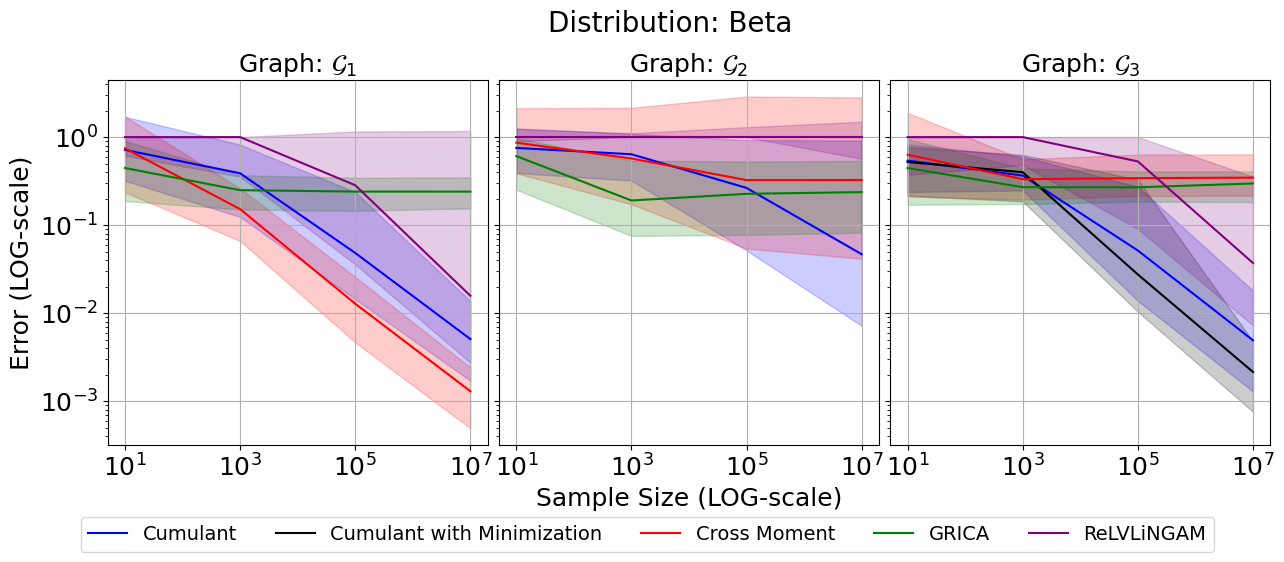}
    \caption{Relative error vs sample size for the graphs in \cref{fig:PO settings}.}
    \label{fig:Valid:Proxy:beta}
    
\end{figure}

\begin{figure}[t]
    \centering
    \includegraphics[scale=0.5]{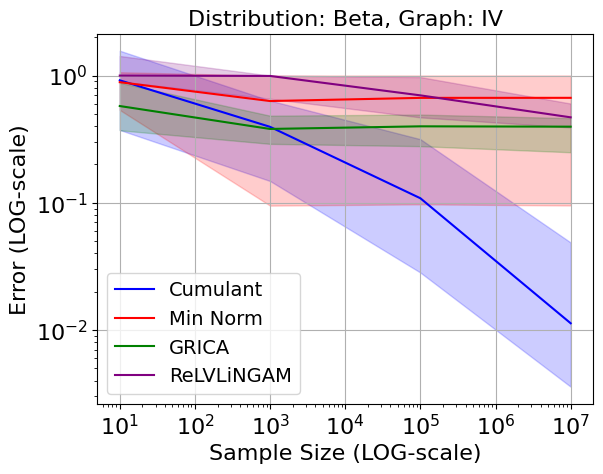}    
\caption{Relative error vs sample size for the graph in \cref{fig:IV}.}
\label{fig:exp:iv:beta}
\end{figure}

\end{document}